\let\svthefootnote\thefootnote
\newcommand\freefootnote[1]{%
  \let\thefootnote\relax%
  \footnotetext{#1}%
  \let\thefootnote\svthefootnote%
}
\tikzset{
    -Latex, auto, node distance = 0.5 cm and 0.5 cm, semithick,
    state/.style = {circle, draw, minimum width = 0.7 cm},
    const/.style = {minimum width = 0.7 cm},
    inter/.style = {rectangle, draw, minimum width = 0.7 cm, minimum height = 0.7 cm},
    point/.style = {circle, draw, inner sep = 0.04cm, fill, node contents = {}},
    bidirected/.style = {Latex-Latex,dashed},
    el/.style = {inner sep=2pt, align=left, sloped}
}
\definecolor{asparagus}{rgb}{0.01, 0.75, 0.24}
\definecolor{darkspringgreen}{rgb}{0.09, 0.45, 0.27}
\definecolor{darkred}{rgb}{0.55, 0.0, 0.0}
\newcommand{\Gcal}{ {  \mathcal{G} } }
\newcommand{\Mcal}{ {  \mathcal{M} } }
\newcommand{\Vcal}{ {\mathcal{V}} }
\newcommand{\Ecal}{ {\mathcal{E}} }
\newcommand{\Rbb}{ {\mathbb{R}} }
\newcommand{\bld}[1]{\mathbf{#1}}
\newcommand{\Z}{ { \mathbf{Z} } }
\newcommand{\F}{ { \mathbf{F} } }
\newcommand{\A}{ { \mathbf{A} } }
\newcommand{\X}{ { \mathbf{X} } }
\newcommand{\U}{ { \mathbf{U} } }
\newcommand{\E}{ { \mathbf{E} } }
\newcommand{\etab}{ { \boldsymbol{\eta} } }
\newcommand{\dec}{ {dec } }
\newcommand{\enc}{ {enc } }
\newcommand{\Ical}{ { {\mathcal{I}}} }
\newcommand{\Ibb}[1]{ \mathbb{I}_{\{#1\}} }
\newcommand{\xF}{  \mathbf{x}^{F}  }
\newcommand{\xI}{ {{ \mathbf{x^{\Ical}} }} }
\newcommand{\xCF}{  \mathbf{x}^{CF}  }
\newcommand{\xCFh}{  \hat{\mathbf{x}}^{CF}}
\newcommand{\x}{ { \mathbf{x} } }
\newcommand{\XI}{ { { { \mathbf{X}_{ \Ical}} } } }
\newcommand{\ZIt}{ { \tilde{\Z}^{\Ical} } }
\newcommand{\ZFt}{ { \tilde{\Z}^{\F} } }
\newcommand{\Zt}{ { \tilde{\Z}} }
\newcommand{\blda}{ { {\boldsymbol \alpha} } }
\newcommand{\FI}{ { { { \mathbf{F}^{\Ical}}}}}
\newcommand{\fx}{ { { { \tilde{f}}}}}
\newcommand{\Fx}{ { { { \tilde{\mathbf{F}}}}}}
\newcommand{\FxI}{ { { { \tilde{\mathbf{F}}^{ \Ical}}}}}
\newcommand{\ani }{ { { { \mathrm{an}(i) } } }}
\newcommand{\ansi }{ { { { \mathrm{an}^{*}}(i) } } }
\newcommand{\pai }{ { { { \mathrm{pa}(i) } } }}
\newcommand{\pasi }{ { { { \mathrm{pa}^{*}}(i) } } }
\newcommand{\Ncal}{ {  \mathcal{N} } }
\newcommand{\Ncali}{ {  {\mathcal{N}_i} } }
\newcommand{\AI}{ { { { \mathbf{A}^{ \Ical}} } } }
\newcommand{\Aij}{ { { { {A}_{ij}} } } }
\newcommand{\qforall}{{ { { { \text{ }{\forall} } } } }}
\newcommand{\nin}{ { { \not \in}  } }
\newcommand{\Ex}{ { {\mathbb{E}} } }
\newcommand{\pt}{ { p_\theta } }
\newcommand{\qp}{ { q_\phi} }
\newcommand{\pZ}{ { p(\Z) } }
\newcommand{\fm}{ f^{m} }
\newcommand{\fa}{  f^{a} }
\newcommand{\fu}{ f^{u} }
\newcommand{\name}{VACA}
\newcommand{\nameb}{{(VACA) }} %
\newcommand{\mcvae}{{MultiCVAE}}
\newcommand{\carefl}{{CAREFL}}
\definecolor{olive}{rgb}{0.00, 0.30, 0.25}
\definecolor{myyellow}{rgb}{1.0, 0.75, 0.03}
\newcommand{\diameter}{ { \delta } }
\newcommand{\longpath}{ { \gamma } }
\newcommand{\hlayers}{ { N_h } }
\newcommand{\mre}{\text{MSE}} 
\newcommand{\sdre}{\text{SSE}} 
\newcommand{\mde}{\text{MeanE}} 
\newcommand{\sdde}{\text{StdE}} 
\newcommand{\lin}{{LIN}}
\newcommand{\nonlin}{{NLIN}}
\newcommand{\nonadd}{{NADD}}
\newcommand{\uf}{{ \textit{uf}} }
\newcommand{\acc}{{ \textit{f1}} }
\newcommand{\clf}{ { {h}} }
\newcommand{\full}{ { {h}_{\text{full}} }}
\newcommand{\unaware}{ { {h}_{\text{unaw}}} }
\newcommand{\fairx}{ { {h}_{\text{fair-x}}} }
\newcommand{\ourclf}{ { {h}_{\text{fair-z}}} }
\newcommand{\yF}{{y^{F}}}
\newcommand{\yhF}{{\hat{y}^{F}}}
\newcommand{\yhCF}{{\hat{y}^{CF}}}
\newcommand{\xh}{{\hat{\bld{x}}}}
\newcommand{\Xh}{{\hat{X}}}
\newcommand{\collider}{\textit{collider}}
\newcommand{\triangl}{{\textit{triangle}}}
\newcommand{\chain}{{\textit{chain}}}
\newcommand{\loan}{{\textit{loan}}}
\newcommand{\adult}{{\textit{adult}}}
\newcommand{\mgraph}{{\textit{M-graph}}}
\newtheorem{lemma}{Lemma}
\newtheorem{proposition}{Proposition}
\newtheorem{property}{Property}
\newtheorem{definition}{Definition}[section]
\definecolor{myblue}{rgb}{100,143,255}
\definecolor{myyellow}{rgb}{255,176,0}
\definecolor{myred}{rgb}{220,38,127}
\newcommand{\printfnsymbol}[1]{%
  \textsuperscript{\@fnsymbol{#1}}%
}
\title{\name: Design of Variational Graph Autoencoders for Interventional and Counterfactual Queries}
\author{
  Pablo Sanchez-Martin\thanks{Equal contribution}\\
  Max Planck Institute for Intelligent Systems\\
  T\"ubingen, Germany \\
  \texttt{pablo.sanchez-martin@tuebingen.mpg.de} \\
  \And
  Miriam Rateike\printfnsymbol{1}\\
  Max Planck Institute for Intelligent Systems\\
  T\"ubingen, Germany \\
  \texttt{mrateike@tuebingen.mpg.de} \\
      \And
Isabel Valera\\
  Saarland University \\
  Saarb\"ucken, Germany\\
  \texttt{ivalera@cs.uni-saarland.de} \\
}
\begin{document}
\maketitle

\begin{abstract}
In this paper, we introduce \name, a novel class of variational graph autoencoders for causal inference in the absence of hidden confounders, when only observational data and the causal graph are available.
Without making any {parametric} assumptions, \name\ mimics the necessary properties of a \textit{Structural Causal Model} (SCM)   to provide a flexible and practical framework  for {approximating} interventions (\emph{do-operator}) and  \textit{abduction-action-prediction} steps. 
 As a result, and as shown by our empirical results,   \name\ {accurately} approximates the interventional and counterfactual distributions on diverse SCMs.
Finally, we apply \name\ to evaluate counterfactual fairness in fair classification problems, as well as  to learn 
{fair classifiers without compromising performance.}
\end{abstract}

\keywords{Graph Neural Network \and Causality \and Counterfactual \and Intervention \and Variational Autoencoder }

\section{Introduction}\label{sec:introduction}

Graph Neural Networks (GNNs) are a powerful tool for graph representation learning and have been proven to excel in practical complex problems like neural machine translation \cite{bastings2017graph}, traffic forecasting \cite{derrow2021eta, yu2017spatio}, or drug discovery \cite{gilmer2017neural}. 

In this work, we investigate to which extent the inductive bias of GNNs--encoding the causal graph information--can be exploited to answer interventional and counterfactual queries. More specifically, to approximate the interventional and counterfactual distributions induced by interventions on a casual model. 
To this end, we assume i) causal sufficiency--i.e., absence of hidden confounders;  and, ii) access to  observational data and the true causal graph. We stress that the causal graph can often be inferred from expert knowledge~\cite{zheng2019using} or via one of the approaches for causal discovery~\cite{glymour2019review, vowels2021d}. 
With this analysis we aim to  complement the concurrent line of research that 
theoretically studies the use of Neural Networks (NN) \cite{xia2021causalneural},  and more recently GNNs  \cite{zevcevic2021relating}, for causal inference.

To this end, we describe the architectural design conditions that a variational graph autoencoder (VGAE)--as a density estimator that leverages a priori graph structure--must fulfill so that it can approximate causal interventions (\emph{do-operator}) and  \textit{abduction-action-prediction} steps~\cite{pearl2009causalitybook}.
The resulting Variational Causal Graph Autoencoder, referred to as \name, enables \emph{approximating} the observational, interventional and counterfactual distributions induced by a causal model with unknown structural equations.
We remark that parametric assumptions  on the structural causal equations are in general not testable, may thus not hold in practice \cite{peters2017elements} and may lead to inaccurate results, if misspecified. 
\name\ addresses this limitation  by including uncertainty, i.e., a probabilistic model, in the estimation of the causal-parent relationships.

We show in extensive synthetic experiments that \name\ outperforms competing methods \cite{karimi2020algorithmic, khemakhem2021causal}  on complex datasets at estimating not only the mean of the interventional/counterfactual distribution (as in previous work), but also the overall distribution (measured in terms of Maximum Mean Discrepancy \cite{gretton2012kernel}). Finally, we show a practical use-case in which \name\ is used to assess counterfactual fairness of different classifiers trained on the real-world German Credit dataset \cite{GermanData}, as well as to learn counterfactually fair classifiers without compromising performance.

\subsection{Related Work}
\label{sec:related_work}

Deep generative models are enjoying increasing attention for causal queries in complex data~\cite{moraffah2020can, parafita2019explaining}. Existing approaches for causal inference focus on  i) estimating the Average Treatment Effect (ATE)--a specific type of group-level causal queries--by assuming a fixed causal graph that includes a treatment variable \cite{kim2020counterfactual, louizos2017causal, rakesh2018linked, schwab2018perfect, vowels2020targeted, zhang2020causal}; ii)  discovering and intervening on the causal latent structure of the (e.g., image) data~\cite{kim2020counterfactual, parafita2019explaining, martinez2019explaining, shen2020disentangled, yang2020causalvae}; or iii) addressing interventional and/or counterfactual queries by fitting a conditional model for each observed variable given its causal parents~\cite{garrido2020estimating, karimi2020algorithmic, kocaoglu2017causalgan, parafita2020causal, pawlowski2020deep}.

Within the scope of causality, GNNs have predominantly
 been used for causal discovery~\cite{yu2019dag,zhang2019d} and only very recently, concurrent with us, exploited to answer interventional queries \cite{zevcevic2021relating}.

\citet{khemakhem2021causal} propose  CAREFL, an autoregressive normalizing flow for both  causal discovery and inference. The authors focus on (multi-dimensional) bi-variate graphs, but their approach can be extended to more general  directed acyclic graphs (DAGs) using e.g., neuronal spline flows \cite{durkan2019neural}. 
However, causal assumptions in a graph are modeled not only by the direction of edges, but also the absence of edges \cite{pearl2009causal}. For the task of causal inference \carefl\ is unable to exploit the absence of edges fully as it reduces a causal graph to its causal ordering (which may not be unique). Further, the authors only evaluate interventions in root nodes (which reduces to conditioning on the intervened-upon variable).

\citet{karimi2020algorithmic} answer interventional queries by fitting a conditional variational autoencoder (CVAE) to each conditional in the Markov factorization implied by the causal graph. 
As each observed variable is independently fitted, the mismatch between the true and generated distribution can cause errors that propagate to the distribution of its descendants. This can be problematic, especially for long causal paths. 
 \citet{pawlowski2020deep} propose an approach similar to \citet{karimi2020algorithmic}, and additionally propose an approach based on normalizing flows to approximate the causal parent-child effect.

In contrast, \name\ leverages i) GNNs to encode the causal graph information (inductive bias), ii) the GNN message passing algorithm to approximate the  effect of interventions (\textit{do-operator}~\cite{pearl2009causalitybook}) in the causal graph, and iii) jointly optimizes the observational distribution for all observed variables to avoid error propagation along the Markov factorization. We thoroughly evaluate the performance of \name\, and compare it with related work, at approximating {both interventional and counterfactual} distributions induced
by interventions on both root and non-root nodes in a wide variety of causal models.

\section{Background} \label{sec:background}

In this section, we first provide a brief overview on SCMs
and then introduce the main building block of \name, i.e., variational graph autoencoders. 

\subsection{Structural causal models}
\label{sec:SCMs}

An SCM $\Mcal=(p(\U), \Fx)$ determines how a set of $d$  endogenous (observed) random variables $\X:=\left\{X_1, \dots X_d\right\}$ is generated from a set of exogenous (unobserved) random variables $\U:=\left\{U_1, \dots U_d\right\}$ with prior distribution $p(\U)$ via the set of  \emph{structural equations} $\Fx=\{ X_{i}:=\fx_{i}\left(\X_{\pai}, U_{i}\right)\}_{i=1}^{d}$.
Here $\X_{\pai}$ refers to the set of variables directly causing $X_i$, i.e., parents of $i$.
Similarly to \cite{karimi2020algorithmic, khemakhem2021causal, pearl2009causal}, we consider SCMs
that are associated with a  directed acyclic \textit{causal graph} (although Section \ref{sec:heterogenous} relaxes this assumption).
We here denote the causal graph by $\Gcal:=(\Vcal, \Ecal)$, where each node $i \in \Vcal$ corresponds to an endogenous variable $X_i$. The set of directed edges $(j, i) \in \Ecal$ represent the causal parent-child relationship between endogenous variables~\cite{pearl2009causal}, i.e. $X_j$ is a parent of $X_i$. $\Ecal$ can be represented by the adjacency matrix $\A\in\{0,1\}^{d\times d}$, such that  $A_{ij}=1$ if $(j, i) \in \Ecal$ and $A_{ij}=0$, otherwise. We also define the set of neighbors, a.k.a. parents, of node $i$ as $\pai = \Ncal_i = \{j\}_{(j, i) \in \Ecal}$ and $\pasi := \pai \cup {i}$. 
Given an SCM, there are two types of causal queries of general interest:  interventional queries, e.g., ``What would happen to the population $\X$, if variable $X_i$ {would be} set  to a fixed value $\alpha$?'',  and counterfactual queries, e.g.,``What would have happened to a specific factual sample $\xF$, had $X_i$ been set to a value $\alpha $?''. 
In more detail,  \emph{interventional queries} aim to evaluate  {the effect at the population level (\textit{rung 2}) of a specific intervention on,}
or equivalently  manipulations of, a subset of the endogenous variables $\mathcal{I} \subseteq[d]:=\{1, \dots, d\}$. 
Interventions on an SCM $\Mcal$ are often represented with 
the {\textit{do-operator}} $do(X_i = \alpha_i)$ \cite{pearl2009causalitybook} and lead to a modified SCM $\Mcal^{\Ical}$ which induces a new distribution over the set of endogenous variables $p(\X\mid do(X_i = \alpha_i))$, which is referred to as the \textit{interventional distribution}. In $\Gcal$ an intervention removes incoming edges to node $i$ and sets $X_i = \alpha$ (see Figure~\ref{fig:SCM_int}).
A \textit{counterfactual query} for a given factual instance $\xF$ aims to estimate what would have happened had $\XI$ instead taken value $\blda$.
This effect is captured by the \textit{counterfactual distribution} $p(\xCF\mid \xF, do(X_\Ical =\blda))$, which can be computed using the \emph{abduction-action-prediction} procedure by \citet{pearl2009causalitybook}. Refer to Section~\ref{sec:SCMproperties} for further details on the computation of the interventional and counterfactual distributions within our framework.

\begin{figure}[h!]
 \centering
      \centering
     \begin{subfigure}[b]{0.3\textwidth}
         \centering
        \scalebox{0.9}{





\begin{tikzpicture}[every node/.style={inner sep=0,outer sep=0}]
        \node[state, fill=gray!60] (x1) at (0,0) {$X_1$};
        \node[state, fill=gray!60] (x3) [below right = 0.80 cm and 0.4 cm of x1 ] {$X_3$};
        \node[state, fill=gray!60] (x2) [left  = 1.2cm of x3] {$X_2$};

        \node[state] (u1) [right = 0.3cm of x1] {$U_1$};
        \node[state] (u2) [above = 0.3cm of x2] {$U_2$};
        \node[state] (u3) [right =  0.3cm of x3] {$U_3$};

        \path (x1) edge [thick, color=black!10!blue] (x3);
        \path (x2) edge [thick, color=red](x3);
        \path (x1) edge [thick, color=red](x2);

        \path (u1) edge (x1);
        \path (u2) edge (x2);
        \path (u3) edge (x3);

\end{tikzpicture}}
         \caption{$\Gcal$ without intervention}
         \label{fig:SCM_obs}
     \end{subfigure}
    \hspace{0.5 cm}
      \centering
     \begin{subfigure}[b]{0.3\textwidth}
         \centering
        \scalebox{0.9}{





\begin{tikzpicture}[every node/.style={inner sep=0,outer sep=0}]
        \node[state, fill=gray!60] (x1) at (0,0) {$X_1$};
        \node[state, fill=gray!60] (x3) [below right = 0.80 cm and 0.4 cm of x1 ]  {$X_3$};
        \node (x2) [left  = 1.2cm of x3] {$\alpha$ \textbullet
        };

        \node[state] (u1) [right = 0.3cm of x1] {$U_1$};
        \node[state] (u3) [right = 0.3cm of x3] {$U_3$};

        \path (x1) edge [thick, color=black!10!blue] (x3);
        \path (x2) edge (x3);

        \path (u1) edge (x1);

        \path (u3) edge (x3);
\end{tikzpicture}}
         \caption{$\Gcal$ with intervention}
         \label{fig:SCM_int}
     \end{subfigure}
    \caption {
    {\triangl\ SCM $\Mcal=\{p(\U), \Fx\}$, $\U\sim p(\U)$ with $d=|\X|=3$ endogenous variables 
    where $ X_1:= \fx_1(U_1)$, $X_2:= \fx_2(X_1, U_2)$, $X_3:= \fx_3(X_1, X_2, U_3)$ 
    with (a) the corresponding causal graph $\Gcal$ and (b) the causal graph corresponding to $\Mcal^{\Ical}$ after intervention $do (X_2 = \alpha)$}. 
    {Blue (red) arrows highlight the direct (indirect) causal path from $X_1$ to $X_3$ (via $X_2$).} }
  \label{fig:SCM_graph}
\end{figure}

\subsection{Variational Graph Autoencoder and Graph Neural Networks}\label{sec:background_VGAE}

\textbf{Variational Autoencoders (VAEs).} VAEs \cite{kingma2013auto} are powerful latent variable models based on neural networks (NNs) for jointly i) learning expressive density estimators $p(\X) \approx \int p_{\theta}(\X \mid \Z) p(\Z) d \Z$, where the likelihood function (a.k.a.  \emph{decoder}) is parameterized using a NN with parameters $\theta$, and ii) performing approximate  posterior inference over the latent variables $\Z$ made possible via
a variational distribution (a.k.a. \emph{encoder}) $q_{\phi}\left(\Z \mid \X \right)$ parameterized using a NN with parameters ${\phi}$. 
The parameters $\theta$ and $\phi$ can be learned by maximizing a lower bound on the log-evidence~\cite{burda2015importance, nowozin2018debiasing, rainforth2018tighter, tucker2018doubly}.

 \textbf{Variational Graph Autoencoders (VGAEs).}~\citet{kipf2016variational} extend VAEs
to account for prior graph structure information on the data~\cite{yu2019dag}. VGAEs define a (potentially multidimensional) latent variable $Z_i$ per observed variable  $X_i$, i.e., $\Z:= \{Z_1, \dots, Z_d\}$.
Additionally, VGAEs rely on an adjacency matrix $\A$, which is used by two GNNs, one for the encoder and one for the decoder, to enforce structure  on  the posterior approximation  $q_{\phi}(\Z \mid \X, \A)$ and the likelihood $p_{\theta}(\X \mid \Z, \A)$. 
Hence, $\A$--given as prior--determines which variables $X_i$ influence $Z_j$ $\forall i, j \in [d]$. 

\textbf{Graph Neural Networks (GNNs).} In its most general form, a GNN is a composition of  message passing layers \cite{gilmer2017neural}, where each layer
updates the state of each node in $\mathcal{G}$. 
In particular, the state of node $i$ at the output of layer $l$,
i.e., $\bld{h}_i^l$, is specified as:
\begin{align} \label{eq:GNN}
    \bld{h}_i^l &  = \fu \left( \bld{h}_i^{l-1}, \fa  \left( \{ \bld{m}_{ij}^l\}_{j \in \Ncali} \right) ; \theta^l_u \right).
\end{align}
{\small %
}
First, node $i$ receives a message {\small $\bld{m}_{ij}^l = \fm(\bld{h}_i^{l-1},\bld{h}_j^{l-1}; \theta_m^l)$} from each of its neighbors  $ j \in \Ncal_i$. Then,  these messages are aggregated via $\fa$.
Finally, $\bld{h}_i^l$ is computed as a function $\fu$ of the node's previous state $\bld{h}_i^{l-1}$ and the aggregated message.
Note, if a GNN has $\hlayers$ hidden layers, then the output for node $i$ depends not only on its direct neighbors $\Ncali$, but also on its neighbors up to order $N_{h}+1$ (hops). 
For example, if $N_{h}=0$ ($N_{h}=1$) then the output for each node only depend on its direct neighbors, i.e., \emph{parents} (2-hop neighbors, i.e., \emph{grand-parents}).
For a detailed description of GNNs, please refer to Appendix~\ref{apx:GNN}.

\section{Observational, interventional and counterfactual distributions} \label{sec:SCMproperties}
\label{sec:distributions}
{In this section, we introduce the observational, interventional and counterfactual distributions (triggered by any intervention  of the form $do(\X_\Ical =\blda)$)} that are induced by an SCM $\mathcal{M}=\{p(\U), \Fx \}$.  Specifically, we summarize the main properties of an SCM that will allow us to propose a novel class of VGAEs,
namely \name, to compute accurate estimates of these distributions using observational data and a known causal graph. To this end, we assume the absence of hidden confounders, i.e., we assume that $p(\U) = \prod_{i=1}^d p(U_i)$. 

\textbf{Observational distribution.}
The SCM $\mathcal{M}$ 
determines the observational distribution $p(\X)$ over the set of endogenous variables $\X=\left\{X_1, \dots X_d\right\}$, which~satisfies causal factorization~\cite{scholkopf2019causality}, i.e.,  $p(\X) = \prod_{i=1}^d p(X_i \mid \X_{\mathrm{pa}(i)}).$
That is, after marginalizing out the exogenous variables $\U$, the distribution of each endogenous variable $X_i$ depends only on its parents, i.e., $\X_{\pai}$. 

The \textit{observational distribution} can alternatively  be written only in terms of the exogenous variables $\U$ as
\begin{align}\label{eq:SCMobs}
    p(\X) = \F_{\#}[p(\U)], 
\end{align}

i.e., $p(\mathbf{X})$ is the pushforward of $P(\mathbf{U})$ through $\mathbf{F}$. Here $\F: \U \rightarrow \X$ corresponds to the set of structural equations, which directly transform the exogenous variables $\U$ into the endogenous variables $\X$. This is equivalent to $\Fx$, which takes as input both the exogenous variable and the parent (endogenous) variables of a target endogenous variables to compute its value.

Let us denote by $\ani$ the set of indexes of the ancestors of $i$, and $\ansi:= \ani \cup \{ i\}$. 
Then, the causal factorization induced by  $\Mcal$  leads to the following property of $\F(\U)$: 
\begin{property}\label{prop:prediciton}
 Each endogenous variable $X_i$ can be expressed as a function of its exogenous variable $U_i$ and the ones of all its causal ancestors, i.e.,  $ \F(\U)${\small$=$}$\{X_i = f_i(\{U_j\}_{j \in \ansi}) \}$. This, together with the causal sufficiency assumption,  implies that  $X_i$ is statistically independent of $U_j, \forall j \notin \ansi$.
\end{property}

\textbf{Interventional distribution.}
As stated in Section~\ref{sec:SCMs}, interventions on a set of variables $\Ical$ can be performed using the \textit{do-operator}, which can be seen as a mapping $do(\XI = \blda): \Mcal \mapsto \Mcal^{\Ical} = (p(\U), \FxI)$ where $\FxI =  \{ \fx_{i}\}_{i \not \in \Ical} \cup \{ \alpha_i \}_{i \in \Ical}$. As above, we can represent the resulting set of \textit{intervened  structural equations} as $\FI= \{ f_i \}_{i \not \in \Ical} \cup \{ \alpha_i \}_{ i \in \Ical}$, and thus write the \textit{interventional distribution} as:
\begin{align}\label{eq:SCMint}
    p(\X \mid do(\XI = \blda)) =   \FI_{\#}[p(\U)].
\end{align}

\begin{property} \label{prop:action}

After an intervention $do(\XI = \blda)$ on $\Mcal$, all the causal paths from  $U_j \qforall j  \in \ansi$  to $X_i$ that include an intervened-upon variable in $\XI$ (i.e., the causal paths where $\XI$ is a mediator) are severed  in $\FI$, while the rest of causal paths remain untouched.    

\end{property}

The above property is illustrated in Figure~\ref{fig:SCM_graph}, where we can observe that after an intervention $do (X_2 = \alpha)$, the indirect causal path (in red) from $X_1$, and thus from $U_1$, to $X_3$ via $X_2$ is severed, while the direct path (in blue) remains. 

\textbf{Counterfactual distribution.}
 Assuming the SCM $\mathcal{M}=\{p(\U), \Fx \}$ to be known, 
 the following three steps defined by \citet{pearl2009causal} allow to compute counterfactuals $\xCF$:
 i) \textit{Abduction:} infer the values of the exogenous variables $\U$ for a factual sample $\bld{x}^{F}$, i.e.,  compute $p(\U\mid\xF)$; ii) \textit{Action:} intervene with $do(\XI = \blda): \Mcal \mapsto \Mcal^{\Ical}= (p(\U), \FxI)$; and iii) \textit{Prediction:} use the posterior distribution $p(\U\mid\xF)$ and the new structural equations $\FxI$ to compute $p(\xCF\mid\xF)$. The prediction step can alternatively be computed using the new set of structural equations $\FI$ defined in terms of the exogenous variables $\U$, so that we can write the  \textit{counterfactual distribution} as:
   \begin{align}\label{eq:SCMcf}
     {p(\xCF \mid \xF, do(\XI = \blda))} =   \FI_{\#}[p(\U\mid\xF)]. 
 \end{align}
Importantly, the posterior distribution {\small$p(\U\mid\xF)$} satisfies:

 \begin{property} \label{prop:abduction}
In the abduction step,  statistical independence implies that  conditioned on the endogenous variables of the factual sample $\xF$, each exogenous variable $U_i$ is independent of the factual value $x^F_j$ if $j\neq i$ and the variable $X_j$ is not a parent of $X_i$, i.e., $j \nin\pasi$.
 \end{property}

\vspace{-5pt}
\section{Variational Causal Autoencoder \nameb}\label{sec:CaVAE}

In this section, we present a novel variational causal graph autoencoder (\name) to approximate the observational \eqref{eq:SCMobs}, interventional \eqref{eq:SCMint} and counterfactual \eqref{eq:SCMcf} distributions. While the underlying SCM $\Mcal$ is unknown, we assume access to the true causal graph $\Gcal$  and observational data $\{\bld{x}_n\}^N_{n=1}$, i.e., i.i.d. samples of the observational distribution induced by $\Mcal$  (in the absence of hidden confounders).

\begin{definition}{\textit{\nameb.}}\label{def:VCAUSE}
Given a causal graph $\Gcal$ over a set of endogenous variables $\X = \{X_1, \ldots, X_d\}$, which  establishes the set of parents $\pai$ for each variable $X_i$ (including the $i$-th node), 
\name\ is defined by:
\begin{itemize}
\item A causal adjacency matrix $\A$, which is a $d\times d$ binary matrix with elements $A_{ij}=1$ if $j \in \pasi$, i.e., when $i=j$ or $j$ is a parent of $i$. Otherwise, $\Aij=0$.
\item A prior distribution $p(\Z) = \prod_i p(Z_i)$ over the set of latent variables $\Z = \{Z_1, \ldots, Z_d\}$. 
\item A decoder $\pt(\X \mid \Z, \A)$, which  is a GNN (parameterized by $\theta$) that takes as input the set of latent variables $\Z$ and the causal  adjacency matrix $\A$, and outputs the parameters of the likelihood $\pt(\X \mid \Z, \A)$. 
\item An encoder $q_{\phi}(\Z\mid\X, \A)$, which is a GNN (parameterized by $\phi)$ that takes as input the endogenous variables $\X$ and the causal adjacency matrix $\A$, and outputs the parameters of the posterior approximation $q_{\phi}(\Z\mid\X, \A)$. 
\end{itemize}
\end{definition}

Next, we discuss how to design \name\ such that it is able to capture the observational, interventional, and counterfactual distribution induced by an unknown SCM. 
Importantly, we derive the necessary conditions on the design of both the encoder and decoder GNNs such that \name\ can mimic the SCM properties introduced in Section \ref{sec:distributions}. 

\subsection{Observational distribution} 

\name\ approximates the \textit{observational distribution} in \eqref{eq:SCMobs} using the generative model as
\begin{align}\label{eq:CaVAEobs}
\begin{split}
    & p(\X) \approx \int \pt(\X \mid \Z, \A) \pZ d \Z,
\end{split}
\end{align}
\noindent where $\pt(\X \mid \Z, \A)=\prod_{i=1}^d \pt(X_i \mid \Z, \A)$. Figure~\ref{fig:obs} depicts this generative process.

\textbf{{Relationship between $\bld{Z}$ and $\bld{U}$}.} 
When comparing \eqref{eq:CaVAEobs}  with the true observational distribution in~\eqref{eq:SCMobs}, we observe that the latent variables $\Z$ play a similar role to the exogenous variables $\U$, and the decoder $\pt(\X \mid \Z, \A)$ plays a similar role to the structural equations $\F$. We remark that 
 $\Z$ do not need to correspond to the true exogenous variables (i.e., $p(\U) \neq p(\Z)$), and thus, the decoder does not aim to approximate the causal structural equations.   
 Yet, we assume  that there is one independent latent variable $Z_{i}$ for every observed variable $X_{i}$ capturing all the information of $X_{i}$ that cannot be explained by its parents. Thus, since $X_{i}$ is in turn a (deterministic) function of its parents $\pai$  and its exogenous variable $U_{i}$, the posterior $p\left(Z_{i} \mid X_{i}, \pai \right)$ aims to capture the information that $U_{i}$ contributes to $X_{i}$ (i.e., the information of  $X_{i}$ not contributed by
 its parents). 
{That is--similar to $p\left(U_{i} \mid X_{i}, \pai \right)$, the (true) posterior distribution--$p\left(Z_{i} \mid X_{i}, \pai \right)$  should depend only on $X_{i}$ and parents
$\pai$.}

 \textbf{{Observational noise}.} { \name\ has observational noise that is not present in the true SCM, where an observed variable is assumed to be a deterministic transformation of its exogenous variables and parents via the structural equations (SEs)}.
 As \name\ does not have access to the true SEs (nor to the true distribution of the exogenous variables), the \textit{noise} of the likelihood $p_{\theta}(\X \mid \Z, \A)$ can be interpreted as an estimate of the uncertainty on the estimated observational distribution (due to the uncertainty on the true SCM).

Here, we seek to ensure that the $p(\X)$ induced by \name\ complies with causal factorization (\textbf{Property~\ref{prop:prediciton}} in Section~\ref{sec:SCMproperties}).  To that end, the design of the decoder GNN must assure that $\pt(X_i \mid \Z, \A) = \pt(X_i \mid \Z_{\ansi})$. That is,  that $X_i$ depends only on $Z_j$ if $j=i$ or $X_j$ is an ancestor of $X_i$ in the causal graph.

\begin{proposition}{(Causal factorization).}\label{prop:CaVAEobs} 
\name\ satisfies causal factorization,  $\pt(\X \mid \Z, \A) = \prod_{i} p_{\theta_i}(X_i \mid \Z_{\ansi})$, if and only if the number of hidden layers in the decoder is greater or equal than $\diameter-1$, with $\diameter$ being the length of the longest shortest path between any two endogenous nodes.
\end{proposition}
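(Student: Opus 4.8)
The plan is to recast ``causal factorization'' as a statement about the \emph{receptive field} of the decoder GNN and then match that receptive field to the ancestor sets $\ansi$. First I would fix the unrolled computation of the decoder: take the input node states to be $\bld{h}_i^0 = Z_i$ and recall that, by Definition~\ref{def:VCAUSE}, the adjacency $\A$ makes the GNN neighborhood $\Ncali$ of node $i$ equal to its causal parents $\pai$, so each message-passing layer \eqref{eq:GNN} propagates information strictly along the directed edges of $\Gcal$ from parents to children. Throughout I would impose a standing non-degeneracy assumption on the maps $\fm,\fa,\fu$ so that information genuinely flows (no accidental cancellation).

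The core step is a receptive-field lemma, proved by induction on the layer index $L$: the state $\bld{h}_i^L$ is a function of exactly $\{Z_j : \text{there is a directed path } j \to i \text{ of length } \le L\}$, where $j=i$ counts as length $0$. The base case $L=0$ is immediate since $\bld{h}_i^0 = Z_i$. For the inductive step, $\bld{h}_i^L$ is a function of $\bld{h}_i^{L-1}$ and of $\{\bld{h}_j^{L-1}\}_{j \in \pai}$; applying the hypothesis to each term shows the dependence set of $\bld{h}_i^L$ is the union over $j \in \pasi$ of the nodes reachable to $j$ within $L-1$ hops, which is precisely the set of nodes reachable to $i$ within $L$ hops. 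The inclusion ``$\subseteq$'' is purely combinatorial and holds for any parameters; the reverse inclusion ``$\supseteq$'' is where the non-degeneracy assumption is used.

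Next I would connect layers to hops. As noted after \eqref{eq:GNN}, a decoder with $N_h$ hidden layers performs $N_h+1$ message-passing rounds, so its receptive field at node $i$ is $\{j : d(j \to i) \le N_h + 1\}$, with $d(\cdot \to \cdot)$ the directed shortest-path length. By Property~\ref{prop:prediciton}, the target factorization requires this receptive field to equal $\ansi$ for every $i$. Since only ancestors can reach $i$ along directed edges, the inclusion ``receptive field $\subseteq \ansi$'' holds automatically; thus the factorization is equivalent to the reverse inclusion, i.e.\ to $d(j \to i) \le N_h + 1$ for every ancestor--descendant pair $(j,i)$ (pairs with no directed path impose no constraint, as such $j \notin \ansi$).

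Finally I would close both directions by maximizing over pairs. Since $\diameter = \max_{(j,i)} d(j \to i)$ over the connected (ancestor--descendant) pairs, the condition ``$d(j\to i) \le N_h+1$ for all pairs'' is exactly ``$\diameter \le N_h + 1$,'' i.e.\ $N_h \ge \diameter - 1$. For ($\Leftarrow$), $N_h \ge \diameter-1$ forces every ancestor into the receptive field, giving $\pt(X_i \mid \Z, \A) = \pt(X_i \mid \Z_{\ansi})$ for all $i$. For ($\Rightarrow$), taking a pair $(j^\star, i^\star)$ realizing the diameter, the factorization forces $Z_{j^\star}$ to influence $X_{i^\star}$, which by the receptive-field lemma needs $N_h + 1 \ge d(j^\star \to i^\star) = \diameter$. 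The step I expect to be the main obstacle is the ``$\supseteq$'' half of the receptive-field lemma: arguing rigorously that dependence is not lost along a path is precisely the point where the non-degeneracy (expressivity) assumption on the GNN maps must be invoked rather than derived, and I would flag it explicitly rather than leave it implicit.
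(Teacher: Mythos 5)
Your proof is correct and takes essentially the same route as the paper's: an induction over message-passing rounds establishing that the receptive field of node $i$ after $L$ rounds is exactly the set of nodes with a directed path to $i$ of length at most $L$ (this is the paper's Lemma~\ref{lemma:gnn_paths}), followed by matching this receptive field to $\ansi$ and maximizing the directed shortest-path length over ancestor--descendant pairs to obtain $\diameter$ and hence the condition $\hlayers \geq \diameter - 1$ in both directions. Your explicit separation of the purely combinatorial inclusion from the one requiring a non-degeneracy (expressivity) assumption on $\fm, \fa, \fu$ is a point the paper leaves implicit in the phrase ``receives information from,'' but it is a refinement of the same argument rather than a different approach.
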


The above proposition (proved in Appendix \ref{apx:VCAUSE_proof}) is based on the fact that, in a GNN with $N_h$ hidden layers (and $N_h+1$ layers in total), the output for the $i$-th node depends on its neighbors of up to $N_h+1$ hops. As an example, consider the following
\chain\ causal graph: $X_1 \rightarrow X_2 \rightarrow X_3$, such that $\delta =2$. 
 \ref{def:VCAUSE} 
In the decoder, the first layer yields a hidden representation for the $3$-rd node $ h_3^1:=f(f(Z_2), Z_3)$  
that only depends on $Z_2$ and $Z_3$.  Thus, we need a second  layer for its output $h_3^2:=f(h_2, Z_3) = f(f(f(Z_1), Z_2), Z_3)$  to depend on $Z_1$ (note that  $X_1$ is an ancestor of $X_3$).

\begin{figure}
     \centering
     \begin{subfigure}[b]{0.2\textwidth}
        \scalebox{0.7}{
         \begin{tikzpicture} [every node/.style={inner sep=0,outer sep=0}]
        \node[state, fill=gray!60] (x1) at (0,0) {$X_1$};
        \node[state, fill=gray!60] (x2) [below = 0.40 cm of x1] {$X_2$};
        \node[state, fill=gray!60] (x3) [below  = 0.40 cm of x2]  {$X_3$};

        \node[state] (z1) [left = 3cm of x1]  {$Z_1$};
        \node[state] (z2) [left = 3cm of x2]  {$Z_2$};
        \node[state] (z3) [left = 3cm of x3] {$Z_3$};
        
        \node (h1) [right = 1cm of z1]   {\textbullet $h_1$};
        \node (h2) [right = 1cm of z2]    {\textbullet  $h_2$};
        \node (h3) [right = 1cm of z3]    {\textbullet  $h_3$};

        \path (z1) edge [thick, color=black!10!blue] (h3);
        \path (z1) edge [thick, color=black!10!blue] (h1);
        \path (h3) edge [thick, color=black!10!blue] (x3);

        \path (h2) edge [thick, color=red](x3);
        \path (z1) edge [thick, color=red](h2);
        \path (h2) edge [thick, color=red](x3);
        
        \path (z2) edge [thick, color=gray](h2);
        \path (z3) edge [thick, color=gray](h3);
        \path (z2) edge [thick, color=gray](h3);
        
        \path (h1) edge [thick, color=gray](x1);
        \path (h1) edge [thick, color=gray](x2);
        \path (h2) edge [thick, color=gray](x2);
        
        \path (h1) edge [thick, color=gray](x3);
        \path (h1) edge [thick, color=black!10!blue] (x3);
\end{tikzpicture}}
         \caption{Original}
         \label{fig:GNN_obs}
     \end{subfigure}
    \hspace{0.5 cm}        
     \begin{subfigure}[b]{0.2\textwidth}
        \scalebox{0.7}{

        


        
        
        

\begin{tikzpicture} [every node/.style={inner sep=0,outer sep=0}]
        \node[state, fill=gray!60] (x1) at (0,0)  {$X_1$};
        \node[state, fill=gray!60] (x2) [below = 0.40 cm of x1]  {$X_2$};
        \node[state, fill=gray!60] (x3) [below  = 0.40 cm of x2]  {$X_3$};

        \node[state] (z1) [left = 3cm of x1]  {$Z_1$};
        \node[state] (z2) [left = 3cm of x2]  {$Z_2$};
        \node[state] (z3) [left = 3cm of x3]  {$Z_3$};
        
        \node (h1) [right = 1cm of z1]  {\textbullet $h_1$};
        \node (h2) [right = 1cm of z2]  {\textbullet $h_2$};
        \node (h3) [right = 1cm of z3]   {\textbullet $h_3$};

        \path (z1) edge [thick, color=black!10!blue] (h3);
        \path (h3) edge [thick, color=black!10!blue] (x3);
        \path (z1) edge [thick, color=black!10!blue] (h1);

        \path (h2) edge [thick, color=gray](x3);
        
        \path (z2) edge [thick, color=gray](h2);
        \path (z3) edge [thick, color=gray](h3);
        \path (z2) edge [thick, color=gray](h3);
        
        \path (h1) edge [thick, color=gray](x1);
        \path (h2) edge [thick, color=gray](x2);
        
        \path (h1) edge [thick, color=gray](x3);
        \path (h1) edge [thick, color=black!10!blue] (x3);
\end{tikzpicture}}
         \caption{Intervened}
         \label{fig:GNN_int}
     \end{subfigure}
     \caption{\name\ decoder (a) with and (b) without intervening on $X_2$. 
     Message passing in the GNN correspond to direct (blue) and indirect (red) causal paths in Figure~\ref{fig:SCM_graph}.
     }\label{fig:GNN}
\end{figure}
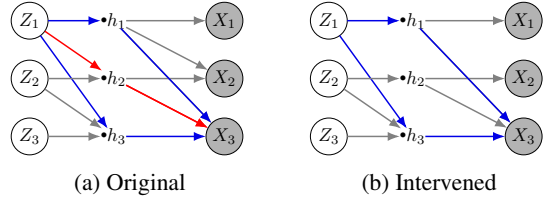

\begin{figure*}
     \centering
     \begin{subfigure}[b]{0.18\textwidth} 
         \centering
         \includegraphics[width=0.95\linewidth]{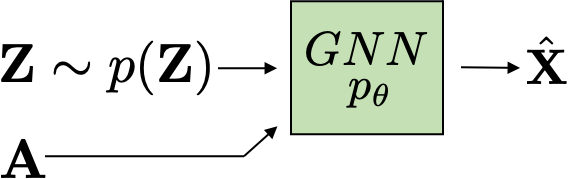}
         \caption{Observational}
         \label{fig:obs}
     \end{subfigure}
     \hspace{1mm}
     \begin{subfigure}[b]{0.38\textwidth} 
         \centering
         \includegraphics[width=1\linewidth]{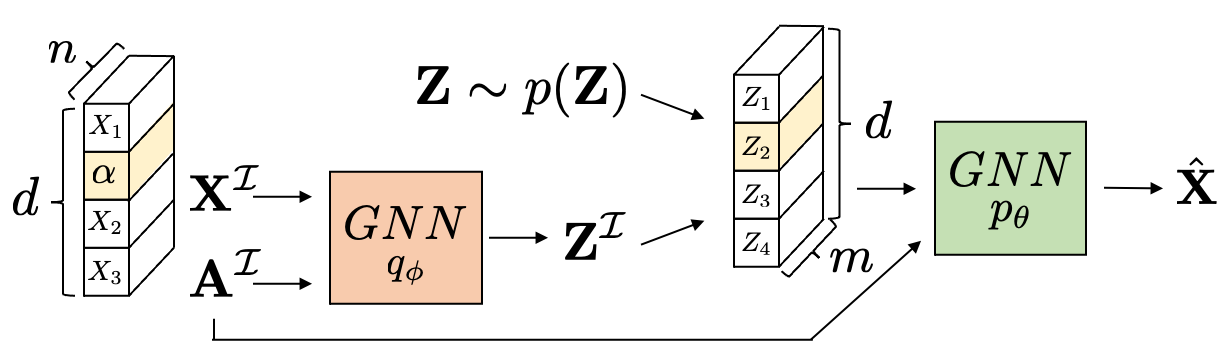}
         \caption{Interventional}
         \label{fig:int}
     \end{subfigure}
     \hfill
     \begin{subfigure}[b]{0.40\textwidth}
         \centering
         \includegraphics[width=1\linewidth]{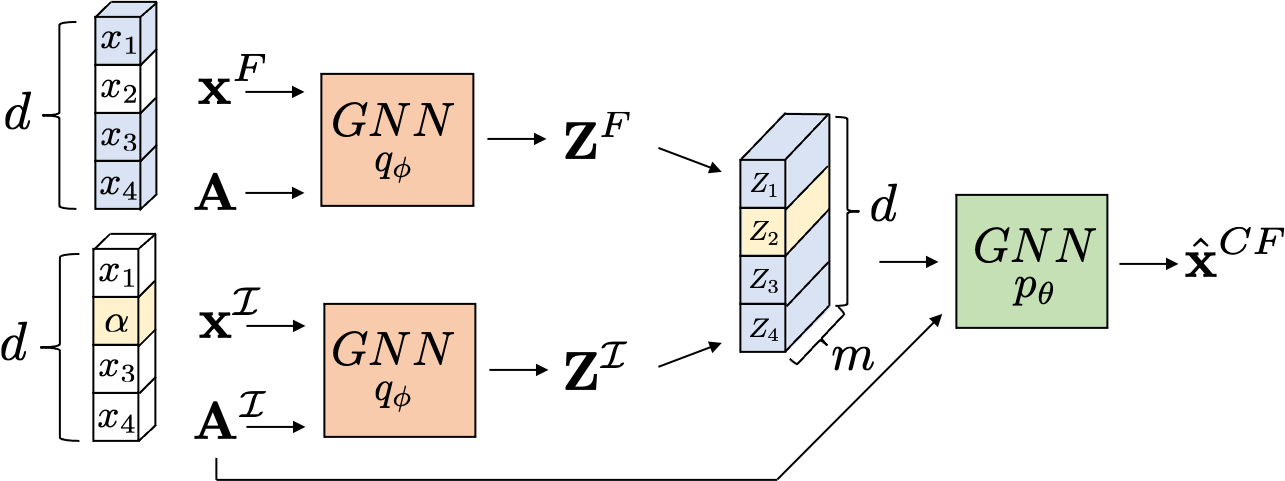}
         \caption{Counterfactual}
         \label{fig:coun}
     \end{subfigure}
         \caption{\name\ generation of (a) observational, (b) interventional, and (c) counterfactual samples. The `hat' in $\hat{\X}$ and $\hat{\x}^{CF}$ indicates that they are sample estimates of the true random variables. }
        \label{fig:three graphs}
\end{figure*}

\newpage
\subsection{Interventional distribution}

\name\  approximates the \textit{interventional distribution} in \eqref{eq:SCMint} as (illustrated in Figure~\ref{fig:GNN}):
\begin{align}\label{eq:CaVAEint} 
    p(\X \mid do(\XI &= \blda)) \approx \int \int\pt(\X \mid \Zt, \ZIt, \AI)  p(\Zt) \qp(\ZIt \mid \AI, \XI) d\Zt d\ZIt,
\end{align}
\noindent where $\ZIt =\{{Z}^{\Ical}_i\}_{i \in \Ical}$ is the subset of latent variables associated with the intervened-upon variables $\XI$, and  $\Zt = \{Z_i\}_{i \nin \Ical} $ {denotes  the subset of latent variables associated with the rest of the observed variables}. %
Importantly, here the \textit{do-operator} is performed on the causal adjacency matrix {as $do(X_\Ical = \blda): \A \mapsto \AI = \{ \Aij \}_{\qforall i \not \in \Ical, j} \cup \{ \Aij =0  \}_{\qforall i \in \Ical, j}$.} This ensures that  $X_i$ for $i \in \Ical$ is independent of $Z_j$ for all $j\neq i$. Note that in order for \eqref{eq:CaVAEint} to be able to approximate the interventional distribution in \eqref{eq:SCMint},  an intervention on 
\name\
should satisfy \textbf{Property~\ref{prop:action}}, i.e.:

\begin{proposition}{(Causal interventions).}\label{prop:CaVAEint}
\name\ captures causal interventions if and only if the number of hidden layers in its decoder is greater than or equal to $\gamma-1$, with $\gamma$ being the length of the longest path between any two endogenous nodes in $\Gcal$. 
\end{proposition}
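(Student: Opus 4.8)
The plan is to mirror the receptive-field argument used for Proposition~\ref{prop:CaVAEobs}, but now tracking \emph{every} directed path rather than only the shortest one. The starting point is the same structural fact about the decoder GNN: after $N_h+1$ message-passing layers run on an adjacency matrix $\AI$, the output at node $i$ can depend on $Z_j$ only if there is a directed path from $j$ to $i$ of length at most $N_h+1$ in $\AI$. I would first make precise what it means for \name\ to ``capture causal interventions'': for every intervention $do(\XI=\blda)$ the decoder evaluated on the intervened matrix $\AI$ must reproduce exactly the severing described in Property~\ref{prop:action}, i.e. the contribution of $Z_j$ (and of the intervened latents) must reach $X_i$ through precisely the causal paths of $\Gcal$ that do \emph{not} pass through an intervened node, and through no others.

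The key conceptual step is a reduction showing that this requirement forces the decoder to \emph{unfold} the causal mechanism along each path, computing an explicit intermediate representation $h_v$ at every node $v$ on a path before that information is passed downstream. This is what lets an intervention on an internal node cleanly cut exactly the paths through it: severing the incoming edges of $v$ in $\AI$ and clamping $h_v$ to the value dictated by the intervened latent removes the contribution of $v$'s ancestors along every path routed through $v$, while leaving parallel paths intact. Crucially, if a long path's contribution were instead absorbed (``short-circuited'') into a shorter edge---as can happen observationally, where the decoder is free to fold the indirect effect of an ancestor into a direct edge (cf. the \triangl\ graph, where $\delta=1$ but $\gamma=2$)---then no intervention could selectively remove it, and Property~\ref{prop:action} would be violated.

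For the sufficiency (``if'') direction I would argue that $N_h\ge\gamma-1$, i.e. at least $\gamma$ layers, guarantees that the output at every node $v$ is a full composition along all directed paths terminating at $v$, since the longest such path has length at most $\gamma$. Running this fully unfolded decoder on any $\AI$ then reproduces Property~\ref{prop:action} exactly: deleting the incoming edges of the intervened nodes removes precisely the messages travelling along paths that use an intervened node as a mediator, and---because extra layers create no new dependencies in a DAG---no spurious path is introduced. Hence one fixed depth $N_h=\gamma-1$ works simultaneously for all interventions $\Ical$.

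For the necessity (``only if'') direction I would exhibit a single intervention on which $N_h<\gamma-1$ fails. Take a longest directed path $P: v_0\to v_1\to\cdots\to v_\gamma$ and intervene so that the value injected at the top of $P$ must travel the full length of $P$ to influence $v_\gamma$; under generic structural equations the true interventional distribution of $X_{v_\gamma}$ then genuinely depends on this value through the length-$\gamma$ path. With only $N_h+1<\gamma$ layers the message cannot traverse $P$, so this dependence is either missing or has been short-circuited into shorter edges that the intervention cannot reproduce, contradicting Property~\ref{prop:action}. I expect the main obstacle to be exactly this step: making rigorous both that some intervention genuinely requires the full length-$\gamma$ path (isolating $P$ from shorter parallel paths, or arguing via the source or an internal node of $P$) and that the receptive-field limitation prevents \name\ from representing that path's contribution, rather than merely routing it differently. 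The sufficiency direction and the no-spurious-dependency claim are comparatively routine consequences of the hop-count argument already established for Proposition~\ref{prop:CaVAEobs}.
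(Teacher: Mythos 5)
Your proposal rests on the same core mechanism as the paper's proof---the receptive-field (hop-count) argument, which the paper isolates as Lemma~\ref{lemma:gnn_paths}, combined with the existence of a pair of nodes realizing the longest path $\longpath$---so in that sense the mathematical engine is identical. Where you genuinely diverge is in what you take ``captures causal interventions'' to mean. The paper \emph{defines} it structurally: \name\ captures interventions iff the decoder models every causal path, i.e.\ $P_{\text{GNN}}(j,i)=P(j,i)$ for all $i,j$; with that definition, Proposition~\ref{prop:CaVAEint} follows in two lines from the bi-conditional Lemma~\ref{lemma:gnn_paths} (there exists a pair with $d^+(j,i)=\longpath$, hence $\hlayers \geq \longpath-1$, and conversely). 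The semantic justification for that definition---that a path not modeled by the GNN cannot be severed by zeroing rows of $\AI$, so the intervention has ``no effect'' on it---is stated only as a remark preceding the proof, not proved. You, by contrast, start from the semantic requirement (reproducing Property~\ref{prop:action} for every intervention) and try to \emph{derive} the structural condition, via the ``no short-circuiting'' reduction and a genericity argument on a longest path. That is a strictly more ambitious claim, and you correctly identify where it becomes hard: showing that under generic structural equations some intervention genuinely requires the full length-$\longpath$ path, and that a shallow decoder cannot route that dependence otherwise. Neither you nor the paper closes that gap; the paper avoids it by fiat. So your proposal is correct relative to what the paper actually establishes, and your sufficiency direction plus the appeal to the hop-count lemma reproduces the paper's argument; just be aware that if you insist on the semantic formalization, the necessity direction needs the genericity step made rigorous (e.g.\ exhibiting an SCM and an intervention on the source or an internal node of a longest path whose interventional distribution a depth-$(\longpath-2)$ decoder provably cannot match), which is real additional work beyond anything in the paper, whereas adopting the paper's structural definition makes your proof complete as it stands.
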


To illustrate this, Figure~\ref{fig:GNN} depicts how messages are exchanged in a one-hidden-layer decoder GNN corresponding to the causal graph $\Gcal$ in Figure~\ref{fig:SCM_graph} (\emph{triangle} with $\longpath=2$), both (a) without and (b) with  an intervention on $X_2$. We highlight in blue the direct messages (sent via direct causal path in $\Gcal$), and in red the indirect  messages (sent via indirect causal path in $\Gcal$) from $Z_1$ to $X_3$. Observe that, similarly to Figure~\ref{fig:SCM_graph}, in (a) there is an indirect path (via $h_2$) from $Z_1$ to $X_3$; while in (b) this path is severed. Hence, the hidden layer $(h_1, h_2, h_3)$ allows to distinguish between direct and indirect paths and thus to capture interventional effects. 
As the condition in Proposition~\ref{prop:CaVAEint} is more restrictive than the one in Proposition~\ref{prop:CaVAEobs}, 
\name\ is able to approximate the observational and interventional distributions (as empirically validated in Appendix \ref{apx:vaca_analysis}) if: 

\textbf{Design condition 1 (necessary condition)} \emph{The decoder GNN of \name\ has at least as many hidden layers as $\longpath-1$, with $\longpath$ being the longest directed path in the causal graph $\Gcal$.}

\subsection{Counterfactual distribution} 

\name\ approximates the \textit{counterfactual distribution} in~\eqref{eq:SCMcf} as (illustrated in Figure~\ref{fig:coun}):
\small{
\begin{align} 
\begin{split}
    p(\xCF & \mid do(\XI  = \blda), \xF) \approx  \underbrace{\int \int  \underbrace{\pt(\X \mid \ZFt,\ZIt, \AI) \qp(\ZIt \mid  \xI, \AI)}_{action}  \underbrace{\qp(\ZFt \mid \xF, \A)}_{abduction} d\ZIt d\ZFt}_{prediction},
\end{split}
\end{align}
}

\noindent where $\xF$ represents a sample from $\X$ for which  we seek to compute the distribution over counterfactual $\xCF$ and  $\ZFt = \{{Z}^{F}_i\}_{i \nin \Ical}$. 
Note that two different passes of the encoder are necessary: 
one for the \emph{abduction} step of the factual instance $\qp(\ZFt \mid  \xF, \A)$;  and another one for the \emph{action} step (intervention) $\qp(\ZIt \mid \xI, \AI)$ with 
$x^\Ical_i = \alpha_i \qforall i \in \Ical$ (we remark that the rest of the values in $\xI$ do not affect the overall counterfactual computation). 

We then evaluate the likelihood making sure that the resulting counterfactual sample $\xCF$ only depends on the 
 $ \ZFt$ and $\ZIt$. Importantly, in order for \name\ to be able to approximate the counterfactual distribution,  we need its abduction (and action) step(s) to comply with \textbf{Property~\ref{prop:abduction}}, i.e.:
\begin{proposition}{(Abduction).}\label{prop:CaVAEcf} %
The abduction step of an observed sample $\x= \{x_1, \ldots, x_d\}$ in \name\ satisfies that for all $i$ the posterior of $Z_i$  is independent on the subset  $\{x_j\}_{j \nin \pasi} \subseteq \x$, if and only if the encoder GNN has no hidden layers. 
\end{proposition}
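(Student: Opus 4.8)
The plan is to reduce the statement to a purely combinatorial fact about the \emph{receptive field} of the encoder GNN, i.e., the set of input node features $\{x_j\}$ on which the output state of node $i$ — from which the posterior parameters of $Z_i$ are read off — actually depends. Recall that in \name\ the encoder acts on the adjacency matrix with $\Aij = 1$ iff $j \in \pasi$, so the one-hop neighborhood of $i$ is exactly $\Ncali = \pasi$ (parents plus the self-loop), and a GNN with $\hlayers$ hidden layers has $\hlayers + 1$ message-passing layers in total. First I would prove that the receptive field of node $i$ after $\hlayers + 1$ layers equals its $(\hlayers+1)$-hop neighborhood along parent edges, and then match this to $\pasi$; this is the same mechanism already invoked for Propositions~\ref{prop:CaVAEobs} and~\ref{prop:CaVAEint}, only now read on the encoder.

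For the \textbf{if} direction I would argue by induction on the layer index $l$, using the update rule~\eqref{eq:GNN}. The base case is $\bld{h}_i^0 = x_i$. For the inductive step, $\bld{h}_i^l$ is a function of $\bld{h}_i^{l-1}$ and of the aggregated messages $\{\bld{m}_{ij}^l\}_{j \in \Ncali}$, each $\bld{m}_{ij}^l$ being built from $\bld{h}_i^{l-1}$ and $\bld{h}_j^{l-1}$ with $j \in \pasi$; by the inductive hypothesis each $\bld{h}_j^{l-1}$ depends only on features within $l-1$ hops of $j$, and those nodes lie within $l$ hops of $i$. Setting $\hlayers = 0$ gives a single layer, so the receptive field of $i$ collapses to $\Ncali = \pasi$; hence the posterior of $Z_i$ is a function of $\{x_j\}_{j \in \pasi}$ alone and is independent of $\{x_j\}_{j \nin \pasi}$, which is exactly \textbf{Property~\ref{prop:abduction}}.

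For the \textbf{only-if} direction I would argue by contraposition: if $\hlayers \geq 1$ the GNN has at least two layers, so by the same induction the receptive field of $i$ grows to include its two-hop (grand-parent) neighborhood. Whenever $\Gcal$ contains a directed path of length two — a node $i$ with an ancestor $k$ at distance two and $k \nin \pasi$, which is precisely the non-degenerate regime and is guaranteed as soon as $\longpath \geq 2$ — the composed message/update map lets $\bld{h}_i^{\hlayers+1}$ depend on $x_k$. I would make this concrete on the \chain\ $X_1 \rightarrow X_2 \rightarrow X_3$, where a second layer routes $x_1$ into the state of node $3$ even though $1 \nin \mathrm{pa}^*(3)$, so the posterior of $Z_3$ depends on $x_1$ and the property fails.

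The hard part is this last step: a large receptive field does not \emph{logically} force dependence, since a degenerate choice of $\fm, \fu$ could simply discard the extra inputs. The honest content of the \textbf{only-if} is therefore architectural — no hidden layers is the design that guarantees \textbf{Property~\ref{prop:abduction}} for \emph{every} causal graph, and with hidden layers the guarantee is generically lost. I would close the gap either by exhibiting one parameter setting of $\fm, \fu$ that produces a genuine dependence on $x_k$ (which already suffices to break the claim) or by imposing a mild non-degeneracy assumption on the GNN maps; I would also flag the implicit assumption that $\Gcal$ contains a length-two directed path, without which the converse is vacuous. This is the step I would spend the most care on.
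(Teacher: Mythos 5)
Your proposal is correct and follows essentially the same route as the paper: your receptive-field induction is exactly the paper's Lemma~1 (information reaches node $i$ only along paths of length at most $\hlayers+1$), and both directions of the proposition are then read off from that biconditional, precisely as the paper does. The gaps you flag in the only-if direction---that architectural reachability does not by itself force functional dependence, and that the converse needs a directed path of length two in $\Gcal$ to be non-vacuous---are genuine, but the paper's own proof glosses over them in the same way, treating ``the GNN models a path'' as an architectural property rather than a statement about a particular choice of $\fm$ and $\fu$.
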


The above result (proved in Appendix \ref{apx:VCAUSE_proof}) can be shown by the message passing algorithm computed by the encoder GNN, and leads to 
the second design condition of \name:

\textbf{Design condition 2  (necessary condition):}
\emph{The encoder GNN of \name\ has no hidden layers.}

In other words, from the definition of the SCM, the posterior distribution of $U_{i}$ only depends on the parents, i.e $U_{i} \mid X_{\pai}$. In order for \name\ to mimic this property, the GNN that parameterized the encoder contains no hidden layers:
in the message passing algorithm, in the $k$-th iteration (layer), a node depends on its $k$-hop ancestors; requiring $k=1$ in the encoder refers to a GNN without hidden layers. 
Note that while the above condition may look restrictive and limiting the capacity of our encoder, we may choose arbitrarily complex NNs 
{for}
the message $\fm$ and update $\fu$ functions, as well as one or more aggregation functions $f^a$, e.g., sum or max, to model the encoder~\cite{corso2020principal}.

\subsection{Practical considerations}  \label{sec:heterogenous}

Next, we briefly discuss practical implementation considerations to  handle complex causal models, which often appear in real world applications~\cite{GermanData, AdultData} 
For further details on \name\ implementation, refer to Appendix~\ref{apx:VCAUSE}. 

\textbf{Heterogeneous causal nodes.} So far, we have modeled each endogenous variable $X_i$ as a node in the causal graph $\Gcal$,  and thus in the \name\ GNNs. Yet, in some application domains the relationships between a subset of $k_i$ variables may be unknown, or they may be affected by hidden confounders.
 In such cases, we assume  {that set of $k_i$ variables} to be correlated and model them as one multidimensional and potentially heterogeneous node $\X_i = \{X_{i1},  \dots, X_{ik_i}\}$ that share the same latent random variable $Z_i$. This allows us to deal with a large variety of graphs in practice.

\textbf{Heterogeneous endogenous variables.} Heterogeneous causal nodes require us to model different functions for each node, i.e. nodes may now contain a mix of continuous/discrete variables.
In general GNNs are parametrized such that the parameters of the message function $f^m$ and update function $f^u$ are shared for all the nodes and edges in the graph. However, similar to the structural equations $\F$, we can define a unique set of parameters  $\theta_{mij}$ for each $\fm_{ij}$  (see \eqref{eq:GNN}), so that we can model  a different function for every edge in the causal graph. %
Further, we can also assume different update functions $f^u_i$ for each node $i$, by introducing different update parameters $\theta_{ui}$. Note, however, that \name\ can be used with any GNN architecture \cite{corso2020principal, hamilton2017inductive}.

\section{Evaluation} \label{sec:Experiments}

In this section, we evaluate the potential of \name\ in approximating the outcomes of causal queries and compare it to two competing methods in synthetic experiments.
The synthetic setting allows us to have access the true SEs, which is necessary to evaluate interventional distributions and especially counterfactuals. We consider interventions of the form $do(x_i = \alpha_i)$ for several values of $\alpha_i$ on both root and non-root nodes. We compute all results over the same 10 random seeds and report mean and standard deviation. Refer to Appendix \ref{apx:training}  for a complete description of the experimental setup. Moreover, our code is publicly available at GitHub\footnote{\url{https://github.com/psanch21/VACA}}.

\begin{table*}[]
\setlength\tabcolsep{3pt
    \begin{tabular}{cc  l r rrr  rr r}
\toprule
 &  &    & \multicolumn{1}{c}{Obs.}   &  \multicolumn{3}{c}{Interventional}  &  \multicolumn{2}{c}{Counterfactuals}  & \\
  \cmidrule(r){4-4}   \cmidrule(r){5-7}  \cmidrule(r){8-9}
      \multicolumn{2}{c}{SCM} & Model   &MMD  &   MMD  &  \mde\  &    \sdde\    & \mre\  &    \sdre\   & Num. params \\
\cmidrule(r){1-10} 
 \multirow{6}{*}{\rotatebox[origin=c]{90}{ {\collider}}}  &  \multirow{3}{*}{\rotatebox[origin=c]{90}{\lin}} & MultiCVAE &   30.37$\pm$8.16 &   44.70$\pm$12.25 &   13.29$\pm$4.78 &   46.56$\pm$2.40 &   87.41$\pm$3.64 &  65.15$\pm$2.83 &    553  \\
       &              & CAREFL &    9.27$\pm$1.49 &     4.86$\pm$0.45 &    0.35$\pm$0.08 &   81.89$\pm$1.78 &    8.11$\pm$0.58 &   7.83$\pm$0.55 &   6420  \\
       &              & VACA &    1.50$\pm$0.67 &     1.57$\pm$0.41 &    0.75$\pm$0.31 &   41.99$\pm$0.30 &    9.86$\pm$0.74 &   7.06$\pm$0.38 &   5600  \\
      \cline{2-10}
         &  \multirow{3}{*}{\rotatebox[origin=c]{90}{\nonlin}} & MultiCVAE &    28.03$\pm$9.12 &   41.60$\pm$12.62 &   10.49$\pm$4.12 &   46.48$\pm$2.43 &   82.32$\pm$2.61 &  62.05$\pm$1.87 &    553  \\
       &              & CAREFL &   10.38$\pm$2.00 &     4.69$\pm$0.38 &    0.19$\pm$0.07 &   80.68$\pm$2.08 &    6.93$\pm$0.40 &   7.15$\pm$0.64 &   4308  \\
       &              & VACA &    0.95$\pm$0.27 &     0.97$\pm$0.23 &    0.26$\pm$0.12 &   42.20$\pm$0.24 &    5.01$\pm$0.73 &   4.08$\pm$0.54 &   1805  \\
\hline\hline
 \multirow{3}{*}{\rotatebox[origin=c]{90}{\loan}} &   & MultiCVAE &  90.38$\pm$11.31 &   213.65$\pm$5.38 &   12.24$\pm$1.33 &   65.78$\pm$1.13 &   40.98$\pm$0.35 &  15.12$\pm$0.16 &  33717 \\
       &         -     & CAREFL &   22.10$\pm$1.64 &    27.38$\pm$4.07 &    6.74$\pm$4.25 &   50.13$\pm$2.47 &   11.15$\pm$2.57 &   6.59$\pm$0.38 &  2880 \\
       &              & VACA &    2.22$\pm$0.25 &     6.87$\pm$0.66 &    4.35$\pm$0.35 &    3.83$\pm$0.08 &   10.30$\pm$0.40 &   6.41$\pm$0.11 &  30402 \\
\hline\hline
 \multirow{3}{*}{\rotatebox[origin=c]{90}{\adult}} &   & MultiCVAE &   140.15$\pm$6.37 &  155.52$\pm$5.93 &  12.18$\pm$2.36 &  63.52$\pm$4.05 &  39.96$\pm$0.36 &  16.37$\pm$0.65 &    6549 \\
      &     -   & CAREFL &   31.31$\pm$1.58 &   34.31$\pm$5.77 &  12.54$\pm$3.17 &  41.26$\pm$3.44 &   1.23$\pm$0.17 &   3.55$\pm$0.90 &  127420 \\
      &        & VACA &    4.51$\pm$0.45 &   12.68$\pm$1.95 &   1.65$\pm$0.23 &   3.37$\pm$0.09 &   5.33$\pm$0.27 &   5.67$\pm$0.20 &   63432 \\
\bottomrule
\end{tabular}
    \caption{Performance of different methods at estimating the observational, interventional and counterfactual distribution of different complex SCMs. Values are multiplied by 100. {All models have been cross-validated with a similar computational budget. The number of parameters of the best configuration is shown in the right column.} }\label{tab:results}
}
\end{table*}

\textbf{Datasets.} We consider 6 different synthetic causal graphs that differ in the number of nodes $d$, diameter $\diameter$, and  longest path $\longpath$. Here, we report the results for i) the  \collider\ ($d=3$, $\diameter=1$, $\longpath=1$) with  linear  (\lin) and non-linear (\nonlin) additive noise SEs, 
ii) the \loan\ from~\citet{karimi2020algorithmic}  ($d=7$, $\diameter=2$, $\longpath=3$), and iii) the \adult\ ($d=11$, $\diameter=2$, $\longpath=3$) graphs. Note that the two latter ones are synthetic  versions of the  German Credit dataset  \cite{GermanData} and the Adult datasets \cite{AdultData}, respectively.
See Appendix \ref{apx:training_datasets} for further details on the graphs and  Appendix \ref{apx:additional_results}  for the results with the remaining graphs. 

\textbf{Metrics.} We evaluate the observational distribution using the Maximum Mean Discrepancy (MMD) \cite{gretton2012kernel} as distance-measure between the true and estimated distributions, i.e., the lower the MMD the better the distributions match. For the interventional distribution, we additionally report the average estimation squared error of the mean (\mde) and of the standard deviation (\sdde) over all descendants of the intervened-upon variables. 
For the counterfactual distribution we report the mean squared error (\mre) as well as the standard deviation of the squared error (\sdre) between the true and the estimated counterfactual value. More details in Appendix \ref{apx:training_metrics}.

\textbf{Baselines.} We compare \name\ with  \mcvae\ \cite{karimi2020algorithmic}  and \carefl\ \cite{khemakhem2021causal} described in Section \ref{sec:related_work}. {For a fair comparison, all} model hyperparameters have been cross-validated using a similar computational budget (see Appendix \ref{apx:training_implementation}). In Table \ref{tab:results}, we report for each model and SCM the best configuration according to observational MMD. We also include a time-complexity analysis in Appendix \ref{apx:time_analysis}.

\textbf{Results.} Table \ref{tab:results} summarizes the results. We observe that, in general, \mcvae\ underperforms the other methods.
This may be explained by the fact that \mcvae\ trains each node independently, and thus the discrepancy between the true and generated distributions in one node may be amplified in  its descendants.
Comparing \name\ to \carefl, we first
observe that \name\  performs consistently better in terms of observational MMD, i.e., \name\ is able to generate observational samples that better resemble the true ones. 
Second, regarding the interventional distribution,
\carefl\ does a good job at fitting the mean (i.e., low \mde). However,\name\ performs consistently better both at approximating  the standard deviation (i.e., low \sdde) and the true samples (i.e., low MMD). This can be explained by \name\ i) leveraging the causal graph (contrary to  \carefl\ that relies on causal ordering), and ii) optimizing the log-evidence in  (\ref{eq:CaVAEobs}) jointly (contrary to the sequential optimization of \mcvae). 
Thus, \name\ approximates the distribution as a whole better, which is a desirable property for studying interventions on a population-level rather than just on average.
{Lastly, in the approximation of counterfactuals we observe that both \carefl\ and \name\ exhibit similar performance in terms of \mre\ and \sdre.
Note however that \carefl\ performs exact inference while \name\ is built on approximate inference and is trained on a lower bound on the log-evidence. Finding tighter bounds  could boost \name\ performance.}

\section{Use case: counterfactual fairness}\label{sec:fairness}

We finally show
two practical use-cases of our method: assessing counterfactual fairness and training counterfactually fair classifiers.
We use the public German Credit dataset ~\cite{GermanData} 
and
rely on the causal model proposed by~\citet{chiappa2019pathscounterffair} with 
the following 
random variables $\X$:
sensitive feature $S=\{ \textit{sex}\}$,  and non-sensitive features $C=~\{ \textit{age} \}$,  $R=\{\text{\textit{credit amount}, \textit{repayment history}\}}$ and $H = \{\textit{checking account}, \textit{savings}, \textit{housing}\}$. Then, we aim to predict the binary feature $Y=\{\textit{credit risk}\}$ from $\X$. 
See Appendix \ref{apx:fairness} for further details.

 \textbf{Counterfactual fairness.}  Let $S \subset \X$ be a sensitive attribute (e.g., gender), then the counterfactual unfairness~\cite{kusner2017counterfactualfairness} of a classifier $\clf:\X \rightarrow Y$ is measured $\forall \xCF, \alpha' \neq \alpha, y$ as: 
\begin{align}\label{eq:fairness_defi}
& \uf = \mid P(\clf(\xCF) = y \mid do(S = \alpha), \xF)  -  P(\clf(\xCF) =y \mid do(S = \alpha'), \xF) \mid 
 \end{align}

A classifier is counterfactually fair ($\uf=0$), if, given a factual $\xF$ with sensitive attribute $S=\alpha$, had its sensitive attribute been different $S=\alpha'$, the classifier prediction would remain the same.
We can use \name\ to generate counterfactual estimates to \emph{audit} the fairness level of a classifier.
 Following \cite{kusner2017counterfactualfairness}, we \textit{audit}:
i) a \textit{full} model  $\full: \X \rightarrow Y$ that takes as input the complete variable set; ii) an \textit{unaware} model $\unaware: \X \backslash S \rightarrow Y$ that takes as input all variables but the sensitive one; iii) and a \textit{fair} model $\fairx: \{X_i | S \nin \ansi\} \rightarrow Y$ that takes as input all non-descendant variables of the sensitive attribute. 
Moreover, we show that we can 
\emph{learn a fair classifier} $\ourclf: \Z \backslash Z_S \rightarrow Y$, which takes as input the latent variables generated by the \name\ encoder without the one of the sensitive attribute $Z_S$.

\begin{table}[t] 
\centering
\setlength\tabcolsep{3pt
{\small
    \begin{tabular}{l|lll|l}
\toprule
        Metric & 
        full & unaware & fair-x & fair-z\\
    \midrule
     \multirow{1}{*}{$\uparrow$ \acc } 
    & 71.67  & 69.49 & 59.50
    & 70.79 $\pm$ 5.15\\
\hline
    \multirow{1}{*}{$\downarrow$ \uf} 
    & 14.01 $\pm$ 2.26 & 13.27 $\pm$ 2.28 & 0.14 $\pm$ 0.02 
    &    0.51 $\pm$ 0.19\\
\bottomrule
\end{tabular}
}
 \caption{Counterfactual unfairness (\textit{uf}) and f1-score (\textit{f1}) of an SVM over 10 \name\ seeds. Values multiplied by 100.}\label{tab:fairness}
 }
\end{table}

\textbf{Fairness Auditing.} 
  Table~\ref{tab:fairness} summarizes the unfairness level and f1-score for 
  a support vector machine (SVM) classifier. See Appendix \ref{apx:fairness} for results of a logistic regression classifier.
As we do not have access to the true data generation process, we evaluate the \textit{auditing} task by the resulting ranking of the different classifiers according to their unfairness level.
Based on the counterfactual generation by \name\ the \emph{full} classifier is consistently less fair than  the \emph{unaware} and the \emph{fair-x} classifier, respectively.
{This ranking is consistent with the one in \cite{kusner2017counterfactualfairness}}.

\textbf{Fairness Classification}
Table \ref{tab:fairness} shows that for the \textit{fair-x} classifier fairness comes at the expense of accuracy compared to the \textit{full}  classifier. On the contrary, even though \name\ has been trained for representation learning without access to classification labels, \textit{fair-z} is a fair classifier (with comparable fairness level to the \textit{fair-x} one)
 while keeping the performance
 comparable to the unfair \textit{full}
 classifier. 
 \name, therefore, does not only allow us to audit counterfactual fairness but also provides a practical approach to train accurate and fair classifiers. 
%


\section{Conclusion, Limitations and Impact}\label{sec:conclusion}
In this work, we have proposed \name, a variational causal autoencoder based on GNNs that: i) is specially designed to capture the properties of SCMs; ii) inherently handles heterogeneous data; and iii) provides good approximations of interventional and counterfactual distributions as a whole for SCMs of different complexities.
As demonstrated by extensive {synthetic} experiments,  \name\ provides accurate results for a wide range of interventions in  diverse SCMs leading to more consistent results than competing methods~\cite{karimi2020algorithmic,khemakhem2021causal}. Finally, we have  applied  \name\, for  counterfactually fair classification. 

\textbf{Practical limitations.} The expressive power of \name\ to model complex  structural equations, e.g., in domains such as biology \cite{sachs2005causal}, is limited by the GNN architectures of the encoder and the decoder. As discussed  in the GNN literature \cite{corso2020principal}, especially aggregation functions may limit expressiveness. We expect \name\ to benefit from advances in the field.
Second,  long causal paths would
require \name\ to increase the number of layers in the decoder (see \textbf{Design condition 1}). However, the GNNs performance is known to deteriorate with depth \cite{gallicchio2020fast, gu2020implicit, li2018deeper}.

\textbf{Social impact.} Trusting counterfactuals is of great importance for decision making, e.g. in the political or medical  domain. We thus encourage anyone who uses \name\ (or any 
other ML method for causal inference) to  i) fully understand the model assumptions and to verify (up to the possible extend) that they are fulfilled;
as well as  ii) to be aware of the identifiability problem in counterfactual queries \cite{zevcevic2021relating, xia2021causalneural}.

\textbf{Future work.} 
First, it would be important to evaluate the sensitivity of \name\ to errors in  the assumed  causal graph, as well as to the presence of hidden confounders.
We plan to extend  \name\ to handle more complex causal models including, e.g., hidden confounders and non-DAG causal graphs. 
Second, it would be interesting to perform ablation studies on the limitations of available GNNs architectures~\cite{wu2020comprehensive} for the \name\ encoder and decoder; as well as on how the performance deteriorates as we increase the length of the causal path and thus the required number of hidden layers~\cite{li2018deeper}. 
Finally, it would be intriguing to apply \name\ to other causal questions recently discussed in the machine learning literature, such as privacy-preserving causal inference~\cite{kusner2016private} or explainable machine learning~\cite{karimi2020algorithmic}. 

\section*{Acknowledgments}
We would like to thank Adrián Javaloy Bornás, Amir Hossein-Karimi, 
Jonas Kleesen and Maryam Meghdadi Esfahani for  helpful  feedback  and discussions.  Moreover, a special thanks to Diego Baptista Theuerkauf for invaluable help with formalizing proofs. Moreover, the authors would like to thank Ilyes Khemakhem for helpful insights in how to generalize their CAREFL approach to arbitrary graphs.

 Pablo S\'anchez Mart\'in is supported by the German Research Foundation through the Cluster of Excellence “Machine Learning – New Perspectives for Science”, EXC 2064/1, project number 390727645. Miriam Rateike is supported by the German Federal Ministry of Education and Research (BMBF): Tübingen AI Center, FKZ: 01IS18039B. The authors thank for the generous funding support.
The authors also thank the International Max Planck Research School for Intelligent Systems (IMPRS-IS) for supporting Pablo Sa\'nchez Mart\'in.

\clearpage
\bibliographystyle{plainnat} %

\bibliography{refs}

\clearpage

\appendix
\section{Background details on message passing Graph Neural Networks} \label{apx:GNN}

A \textit{directed graph} with $|\Vcal| = d$ nodes can be represented as $\Gcal:=(\Vcal, \Ecal)$, where $\Vcal$ denotes the set of nodes and  $(j, i) \in \Ecal$ is the set of directed edges from $j$ to $i$. Additionally, we denote with  $\X \in \Rbb^{d \times F_X}$  the features of the nodes (the row index identifies the node, i.e., the $i$-th row contains the $F_X$-dimensional features of the node $i$)
Also, the adjacency matrix $\A \in \{0, 1\}^{d \times d}$ of $\Gcal$ is defined as $A_{ij} = 1$ if there is an edge from $j$ to $i$ and  $A_{ij} = 0$ otherwise. Then, a directed graph can be alternatively represented as   $\Gcal = (\X, \A)$.
Given a graph $\Gcal$, a \textit{Graph Neural Network (GNN) with parameters $\theta$} is a function $f_\theta:\Rbb^{d\times F_X}\times \{0,1\}^{d\times d}\rightarrow \Rbb^{d \times F_H}$  that takes into account the graph structure contained in the adjacency matrix $\A \in \{0,1\}^{d\times d}$ and transforms the node features $\X$  into  different features $\mathbf{H} \in \Rbb^{d \times F_H}$, i.e., $\mathbf{H} = f(\X, \A; \theta)$ (for readability we consider $f_\theta(\cdot) \equiv f(\cdot; \theta)$). 
Importantly, at the output of the GNN we have a graph $(\mathbf{H},\A)$ that preserves the structure of the input graph $(\X,\A)$.

\textbf{A GNN based on message passing} \cite{gilmer2017neural} is a type of spatial convolution GNNs \cite{wu2020comprehensive} in which information is passed following the message passing process: In each layer $l$ of the GNN  each node $i$ receives information from its neighbors $\Ncal_i$, a.k.a. the parents of $i$. In a message passing GNN, the feature vector of the $i$-th node at the output of a layer $l$---i.e., $h_i^l$---is computed in three steps:
\begin{enumerate}
    \item \texttt{Message.} The \textit{message from node $j$ to node $i$} is defined as $m_{ij}^l = \fm(h_i^{l-1}, h_j^{l-1}; \theta^l_m)$, where $h_i^{l-1}$ are the features of node $i$ at layer $l-1$,  $h_j^{l-1}$  are the features of node $j$ at the previous layer $l-1$, and $\fm$ is a neural network (usually a linear layer) parametrized by $\theta^l_m$.
    \item \texttt{Aggregator.}  The \textit{aggregator} is  a function in charge of combining all the incoming messages at each node $i$ into a single message, a.k.a. the aggregated message $M_i^l = \fa(\{m_{ij}^l \mid j \in \Ncal_i\})$. Notice that $\fa$ does not have any parameters. Choices of $\fa$ may be the sum, mean, standard deviation, max or min over the inputs, i.e., messages \cite{corso2020principal}.
    \item \texttt{Update.} The \textit{update function} $h_i^l = \fu(h_i^{l-1}, M_i^l; \theta_u^l)$ takes the aggregated message and the representation of node $i$ at layer $l-1$ and outputs the new representation for node $i$ at layer $l$. The function $\fu$ is defined as a neural network (usually a linear layer)  with parameters $\theta^l_u$.
\end{enumerate}

\noindent Putting the three steps together, we obtain the general form of a message passing  based GNN layer as $h_i^l  = \fu \left( h_i^{l-1}, \fa  \left( \{\fm(h_i^{l-1}, h_j^{l-1}; \theta^l_m) \mid j \in \Ncali\} \right) ; \theta^l_u \right)$.

Algorithm \ref{alg:gnn} describes the propagation of information (i.e., messages ) in a GNN with $L$ layers.

\begin{algorithm}[t]
	\SetAlgoLined
	\textbf{Input: } A directed graph $\Gcal$ with $d$ nodes, adjacency matrix $\A$ and node features $\X$.\\
	\textbf{Output: } $\mathbf{H} = \{h_i^L\}_{i=1}^d$. \\
	$h^0_i  = x_i$ $\forall i$ \\
	\For{$l=1, \dots, L$\tcp*{For each layer $l$}}  
	{
	    \For{$i=1, \dots, d$\tcp*{For node $i$}}  
	    {
		    $m_{ij}^l = \fm(h_i^{l-1},h_j^{l-1}; \theta^l_m)$ $\forall j \in \Ncal_i$\tcp*{Compute the messages}
		    $M_i^l = \fa(\{m_{ij}^l\}_{ j \in \Ncal_i})$ \tcp*{Compute the aggregated message}
		     $h_i^l = \fu(h_i^{l-1}, M_i^l; \theta_u^l)$ \tcp*{Compute the node features at layer $l$}
		}
	
	}
	\caption{Message passing GNN with $L$ layers decomposed in three operations}
	\label{alg:gnn}
\end{algorithm}

\section{Proofs}
\label{apx:VCAUSE_proof}

For completeness, this section first formalizes the meaning of causal factorization, interventions and the abduction step in \name.

\textit{\name\ causal factorization} refers to the factorization of the joint distribution as  
\begin{align*}
    \pt(\X \mid \Z, \A) = \prod_{i} p_{\theta_i}(X_i ; \etab_i),
\end{align*}
where the likelihood parameters $\etab_i = \etab_i(\Z_{\ansi})$ are a function of all (and only) the features of the ancestors of $i$ and the features of $i$.

\textit{A \name\ intervention} is performed  by removing all the edges towards the intervened node $i$, such that $\Ncal_i = \varnothing$, while the rest of the edges remains untouched.

 In a \textit{\name\  abduction step}, the posterior distribution factorizes as
\begin{align*}
    q_{\phi}(\Z \mid \X) = \prod_i q_{\phi_i}(Z_i ; \etab_i^\enc),
\end{align*}
where the distribution parameters $\etab_i^\enc =  \etab_i^\enc(\X_{\pasi})$ are a function of all (and only) the features of node  $i$ and the features of its the parents.

\paragraph{Notation. } Consider a \textit{causal graph} $\Gcal:= (\X, \A)$, which is a directed acyclic graph (DAG). 
Let us define a path of length $n$ from node $u$ to node $v$ in $\Gcal$ as $p(u, v)=(u, w_1, w_2,\dots ,  w_{n-1}, v)$, which is an ordered sequence of  unique nodes such that i) there exists an edge in $\Gcal$ between concurrent nodes, ii) the first node is $u$, iii) and the last node is $v$.
We refer to the length of the path as $|p(u,v)|$, i.e., the number of edges in the path, or alternatively, the number of nodes minus one. Let us define $P(u, v)$ as the set of unique paths connecting $u$ to $v$. 
Let us define the \textit{shortest path  from $u$ to $v$} as  $p^{-}(u, v)$ (i.e. the path with the minimum number of edges to go from $u$ to $v$) and its length as $d^-(u, v) = |p^{-}(u,v)|$.
Let us define the \textit{longest path} from $u$ to  $v$ as $p^{+}(u, v)$ (i.e. the path with the maximum number of edges to go from $u$ to $v$) and its length as  $d^+(u, v) = |p^{+}(u,v)|$.
Let us define the \textit{set of ancestors of node $i$} (i.e., $\ani$) as the set of nodes with paths to $i$, i.e., $\{j\mid  | P(j, i)| > 0\}$.
As for a GNN, we define the number of hidden layers (total number of layers minus one) as $\hlayers$.

Then, we define the \textit{diameter $\diameter$} of the graph $\Gcal$ to be the length of the longest shortest path and  $\longpath$ to be the length of the longest path of the graph, which we compute as
\begin{align*}
\diameter = \max_{u, v \in \Gcal} d^-(u, v) \quad \text{ and } \quad \longpath = \max_{u, v  \in \Gcal} d^+(u, v).  
\end{align*}

\begin{lemma} \label{lemma:gnn_paths}
A message passing Graph Neural Network (GNN) has at least $\hlayers$ hidden layers  if and only if  the output feature of every node $i$ (i.e., $h_i^{\hlayers + 1}$) receives information from any other node $j$ via paths $p(j, i)$ such that $|p(j, i)| \leq \hlayers + 1$.
\end{lemma}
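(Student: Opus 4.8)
The plan is to prove the underlying receptive-field characterization by induction on the layer index and then read off the biconditional for $\hlayers$. First I would fix the meaning of ``$h_i^l$ receives information from node $j$'' as the statement that $h_i^l$ is (potentially) a function of the input feature $x_j = h_j^0$; equivalently, writing $R_i^l := \{j : h_i^l \text{ depends on } x_j\}$ for the receptive field of node $i$ at layer $l$, the task is to identify $R_i^l$. The core claim I would isolate is: \emph{for every $l \geq 1$ and every node $i$, one has $j \in R_i^l$ if and only if there exists a directed path $p(j,i)$ with $|p(j,i)| \leq l$.} Setting $l = \hlayers + 1$ then gives the lemma's receptive-field statement directly.

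The biconditional with ``at least $\hlayers$ hidden layers'' I would obtain as follows. For the forward direction, having at least $\hlayers$ hidden layers means at least $\hlayers + 1$ total layers, so $h_i^{\hlayers+1}$ is actually computed and the core claim applies. For the converse, the very mention of $h_i^{\hlayers+1}$ presupposes that the network has run $\hlayers + 1$ layers, i.e. carries at least $\hlayers$ hidden layers; moreover the \emph{exact} characterization $R_i^{l} = \{j : |p(j,i)| \leq l\}$ shows that with strictly fewer layers the output of a node at distance $\hlayers+1$ could not be reached, so $\hlayers$ hidden layers is tight.

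For the induction itself, the base case $l = 1$ is read straight off the update rule $h_i^1 = \fu(h_i^0, \fa(\{\fm(h_i^0, h_j^0)\}_{j \in \Ncali}))$: since $h_k^0 = x_k$, the feature $h_i^1$ depends exactly on $x_i$ (path length $0$) and on $x_j$ for the direct parents $j \in \Ncali$ (path length $1$), i.e. on all nodes reachable by a path of length $\leq 1$. For the inductive step I would expand $h_i^l = \fu(h_i^{l-1}, \fa(\{\fm(h_i^{l-1}, h_j^{l-1})\}_{j \in \Ncali}))$ and apply the hypothesis to each incoming feature. The containment $R_i^l \subseteq \{j : |p(j,i)| \leq l\}$ follows by path extension: any $k$ feeding $h_j^{l-1}$ has a path to $j$ of length $\leq l-1$, and prepending the edge $j \to i$ yields length $\leq l$. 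For the reverse inclusion, given $k$ with a path $k = w_0 \to \cdots \to w_m = i$ of length $m \leq l$, the penultimate node $w_{m-1}$ is a parent of $i$ and reaches $k$ by a path of length $m-1 \leq l-1$, so $k \in R_{w_{m-1}}^{l-1}$ by hypothesis and hence $k \in R_i^l$ via the message from $w_{m-1}$; the case $k = i$ is covered by the $h_i^{l-1}$ term.

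The hard part will be the reverse inclusion, which is precisely where ``depends on'' must mean genuine dependence rather than mere functional composition: one must rule out that $\fu$, $\fa$, $\fm$ accidentally annihilate the contribution of some in-range node. The forward containment is unconditional and purely structural, but to secure the full biconditional I would invoke the standard non-degeneracy of the GNN parameters (that the message and update maps are generically non-constant in their relevant arguments), so that every node admitted by a short enough path truly influences the output. I expect the only other delicate point to be the bookkeeping between path lengths and layer indices, in particular the off-by-one between ``$\hlayers$ hidden layers'' and ``$\hlayers + 1$ total layers.''
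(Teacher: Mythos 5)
Your proposal is correct and follows essentially the same route as the paper's proof: an induction over message-passing layers characterizing the receptive field of each node as exactly the nodes reachable by directed paths of length at most the layer index, followed by a tightness observation to obtain the ``at least $\hlayers$ hidden layers'' direction. If anything, your version is more careful than the paper's, since you prove both containments of the receptive-field identity (the paper's induction only argues the forward one explicitly) and you flag the genuine-dependence/non-degeneracy caveat that the paper silently assumes.
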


\begin{proof}
\textbf{Step 1.} The statement is that  the feature of every node $i$ at the output of a GNN with $\hlayers$ hidden layers (i.e., $h_i^{\hlayers + 1}$) receives information from any other node $j$ via paths $p(j, i)$ such that $|p(j, i)| \leq \hlayers + 1$. We give a proof by induction on $\hlayers$ for an arbitrary node $i$, with input feature to the GNN $h_i^0$ and output feature $h_i^{\hlayers + 1}$.

\textbf{Base case:} The statement holds for $\hlayers =0$. By definition, a message passing GNN with one layer only exchanges messages between neighboring nodes. Hence, i) the output feature of node $i$ is $h_i^1  = f(\{h_i^0\} \cup \{h_j^0 | j \in \Ncal_i \} ; \theta)$, which is only a function of the $1$-hop ancestors  (i.e., parents); ii)  information is exchanged via paths that fulfill $|p(j, i)| \leq 1$.

\textbf{Inductive step:} We assume the statement holds for  $\hlayers=k-1$. In this case, i) the output feature of node $i$ is $h_i^{k}  = f(\{h_i^{k-1}\} \cup \{h_j^{k-1} | j \in \Ncal_i \} ; \theta)$, which is a function of the $k$-hop ancestors; ii)  information is exchanged via paths that fulfill $|p(j, i)| \leq  k$. For $\hlayers=k$, the output feature of node $i$ is $h_i^{k+1}  = f(\{h_i^{k}\} \cup \{h_j^{k} | j \in \Ncal_i \}; \theta )$. Since $h_j^{k}$ is a function of  $k$-hop ancestors of node $j$, it follows that $h_i^{k+1}$ is a function of the ancestors of $h_j^{k}$ and on  parents of node $i$, i.e., the output feature of node $i$ is a function of its $(k+1)$-ancestors. Then, it follows that for $\hlayers=k$, information is exchanged via paths that fulfill $|p(j, i)| \leq  k + 1$.

\textbf{Step 2.} We assume that the output feature of every node $i$  receives information from any other node $j$ via paths $p(j, i)$ such that $|p(j, i)| \leq \hlayers + 1$. 
Then, there exist node $i$ and $j$ such that $|p(j, i)| =  \hlayers + 1$. Then, by definition, a message passing GNN needs at least $\hlayers$ hidden layers to capture paths $p(j, i)$ with length $|p(j, i)| \leq  \hlayers + 1$.
\end{proof}

\textbf{Remark.} Lemma \ref{lemma:gnn_paths} implies that, for a given GNN with $\hlayers$ hidden layers, the set of paths through which the output feature of node $i$ is a function of node $j$ is
\begin{align}
\begin{split}
        P_{\text{GNN}}(j , i) = &\{p(j, i) \mid p(j, i) \in P(j, i) \\
    &\text{ and } |p(j, i)| \leq \hlayers + 1\}.
\end{split}
\end{align}

Additionally, if $|P_{\text{GNN}}(j , i)| = \varnothing$ then the output feature of node $i$ is not a function of node $j$.

\paragraph{Proposition 1 (Causal factorization).}
\textit{\name\ satisfies causal factorization,  $\pt(\X \mid \Z, \A) = \prod_{i} p_{\theta_i}(X_i \mid \Z_{\ansi})$, if and only if the number of hidden layers in the decoder is greater or equal than $\diameter-1$, with $\diameter$ being the length of the longest shortest path between any two endogenous nodes.}

\begin{proof}
Consider a causal graph $\Gcal:= (\X, \E)$ with diameter $\diameter$ and a GNN decoder with  $\hlayers$ hidden layers.
We assume \name\ to satisfy causal factorization, i.e., $\etab_i$ is a function $\Z_{\ansi}$ for all $i$. Therefore,  there exist node $i$ and $j$ such that  $d^-(j, i) =\diameter$ (notice that this implies that $j$ is an ancestor of $i$).  Thus, by Lemma \ref{lemma:gnn_paths}, the GNN decoder has $\hlayers \geq \diameter - 1$ hidden layers. The converse is true because  Lemma \ref{lemma:gnn_paths} is a bi-conditional statement.
\end{proof}

\paragraph{Proposition 2 (Causal interventions).}
\textit{\name\ captures causal interventions if and only if the number of hidden layers in its decoder is greater than or equal to $\gamma-1$, with $\gamma$ being the length of the longest path between any two endogenous nodes in $\Gcal$.}\\

A causal intervention involves to severe all the incoming edges to the intervened nodes. Thus, \name\ can only capture causal interventions, if it can model all the causal paths, i.e., $ P_{\text{GNN}}(j , i) = P(j, i) \qforall i, j$. Otherwise, severing some paths will have no effect in the resulting intervention, as we prove next. 
\begin{proof}
Consider a causal graph $\Gcal:= (\X, \E)$  with  length of the longest path between two nodes~$\longpath$ and a GNN decoder with $\hlayers$ hidden layers.
We assume that the GNN decoder models all the causal paths, i.e.,  $P_{\text{GNN}}(j , i) = P(j, i) \qforall i, j $.  By definition of $\longpath$, there exists at least one node $i$ with an ancestor $j$ such that $d^+(j, i) =\longpath$. Thus, by Lemma \ref{lemma:gnn_paths}, the GNN decoder has $\hlayers \geq \longpath - 1$ hidden layers. The converse is true because  Lemma \ref{lemma:gnn_paths} is a bi-conditional statement.
\end{proof}

\paragraph{Proposition 3 (Abduction).}
\textit{The abduction step of an observed sample $\x= \{x_1, \ldots, x_d\}$ in \name\  satisfies that for all $i$ the posterior of $Z_i$  is independent on the subset  $\{x_j\}_{j \nin \pasi} \subseteq \x$, if and only if the encoder GNN has no hidden layers.}

 \begin{proof}
Consider a causal graph $\Gcal:= (\X, \E)$ and a GNN encoder with $\hlayers$ hidden layers.
We assume the posterior of $Z_i$  is independent on the subset  $\{x_j\}_{j \nin \pasi} \subseteq \x$, i.e., the parameters  $\etab^\enc_i$ (the output of the GNN) is a function of  $\{x_j\}_{j \in \pasi}$. Then, the GNN only models paths $p(j, i)$ such that $d^+(j,i) = 1$. It follows, by Lemma \ref{lemma:gnn_paths}, that the number of hidden layers of the encoder GNN is $\hlayers=0$. The converse is true because  Lemma \ref{lemma:gnn_paths} is a bi-conditional statement.
\end{proof}

\begin{table*}[]
\centering
\centering
\setlength\tabcolsep{2.6pt}

    \begin{tabular}{c cc cc cc}
\toprule
    \multirow{2}{*}{$\hlayers$}     &      \multicolumn{2}{c}{\collider\ ($\diameter=1$, $\longpath=1$)} &  \multicolumn{2}{c}{\triangl\ ($\diameter=1$, $\longpath=2$)}  &  \multicolumn{2}{c}{\chain\ ($\diameter=2$, $\longpath=2$)} \\
          \cmidrule(r){2-3} \cmidrule(r){4-5} \cmidrule(r) {6-7}
 &  MMD Obs. (\%)  & MMD Inter.(\%)    &     MMD  Obs.(\%)          & MMD Inter.(\%)     &      MMD  Obs.(\%)   &  MMD Inter.(\%)     \\
\midrule
0 & 1.86 $\pm$ 0.84 & 1.16 $\pm$ 0.86 & 21.76 $\pm$ 5.80 & 48.47 $\pm$ 12.57 & 9.69 $\pm$ 1.92 & 17.24 $\pm$ 2.82 \\
1 & 1.31 $\pm$ 0.36 & 0.75 $\pm$ 0.17 & 9.17 $\pm$ 2.39 & 19.91 $\pm$ 4.75 & 6.44 $\pm$ 1.52 & 10.36 $\pm$ 2.67 \\
2 & 1.32 $\pm$ 0.41 & 1.09 $\pm$ 1.08 & 7.19 $\pm$ 3.14 & 14.10 $\pm$ 6.45 & 4.35 $\pm$ 1.77 & 6.55 $\pm$ 1.64 \\
\bottomrule
\end{tabular}
\caption{ Evaluation of the observational and interventional distributions generated by  \name\ with different numbers of hidden layers $\hlayers$ {in the decoder}. 
  {All metrics are multiplied by 100 (\%).}}
  \label{tab:obs_ours}
\end{table*}

\section{\name\ implementation details}\label{apx:VCAUSE}
In this section, we  extend Section \ref{sec:heterogenous} and provide further details about the implementation of \name\ for complex real-world datasets and causal graphs.

\subsection{Heterogeneous endogenous variables}
As described in Appendix \ref{apx:GNN}, each layer $l$ of a GNN uses the same parameters $\theta =\{ \theta_m^l, \theta_u^l\}$ (corresponding to $\fm$ and $\fu$) to update the features of every node, i.e., $h^l_i = f(\{ h_i^{l-1} \} \cup \{ h_j^{l-1} \mid j \in \Ncali \}; \theta)$ with $f_\theta$ being reused for all $i$. Nonetheless, the structural equations of an SCM define a unique function $f_i$ for each node (see \textbf{Property  \ref{prop:prediciton}}). 
To mimic this behavior, we will rely on \emph{port numbering}. In particular, for a given causal graph $\Gcal$, we uniquely identify each node with an index $i$  and each edge with the pair of indexes of the nodes it connects. \\

Then, we define a 
\emph{disjoint GNN layer} by the following characteristics:

\begin{itemize}
    \item The node indexes define unique update functions  $\fu_i$ for each node, with parameters $\theta_{ui}$.
    \item The edge indexes define unique message functions $\fm_{ij}$ for each edge, with parameters $\theta_{mij}$.
\end{itemize}

Consequently, parameters are not shared among nodes and we can mimic the diversity of the structural equations of an SCM and model heterogeneous endogenous variables. 
In our GitHub repository\footnote{\url{https://github.com/psanch21/VACA}}, we present a PyTorch Geometric implementation of the \emph{disjoint GNN layer}.

\subsection{Heterogeneous causal nodes}
Assume an SCM with $d$ endogenous variables. As described in Section \ref{sec:heterogenous}, is it possible to model an endogenous variable $\X_i$ of the SCM as a heterogeneous node, i.e.,  $\X_i = \{X_{i1},  \dots, X_{ik_i}\}$, where $k_i$ is the number of random variables in node $i$. In this section we describe the implications this has on the design of the encoder and decoder of \name.

\paragraph{Implications for the decoder.} Given the heterogeneous nature of the nodes, the likelihood of \name\ factorizes as follows
\begin{align*}
\begin{split}
     &p_{\theta}(\X \mid \Z) = \prod_{i=1}^d p_{\theta_i}(\X_i; \etab_i) =  \prod_{i=1}^d  \prod_{j=1}^{k_i} p_{\theta_i}(X_{ij} ; \etab_{ij}) \\
     &\text{ where } \etab_i = \etab_i(\Z_{\ansi}) \text{ and } \etab_{ij} = \etab_{ij}(\Z_{\ansi}).
\end{split}
\end{align*}

\begin{figure*}
    \centering
    \includegraphics[width=0.6\textwidth]{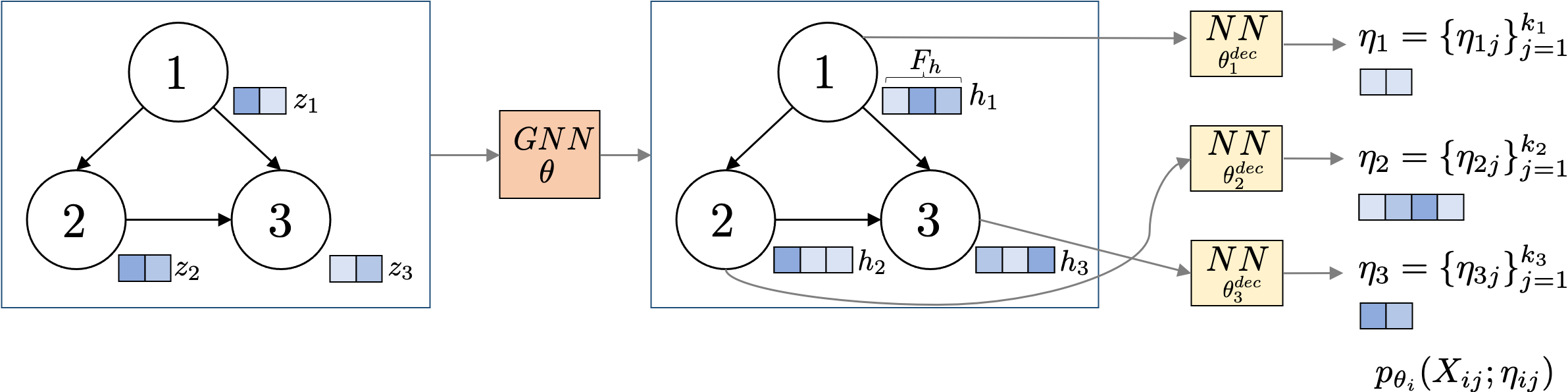}
    \caption{Heterogeneous  \name\ decoder architecture.}
    \label{fig:GNN_het_decoder}
\end{figure*}
\begin{figure*}
    \centering
    \includegraphics[width=0.6\textwidth]{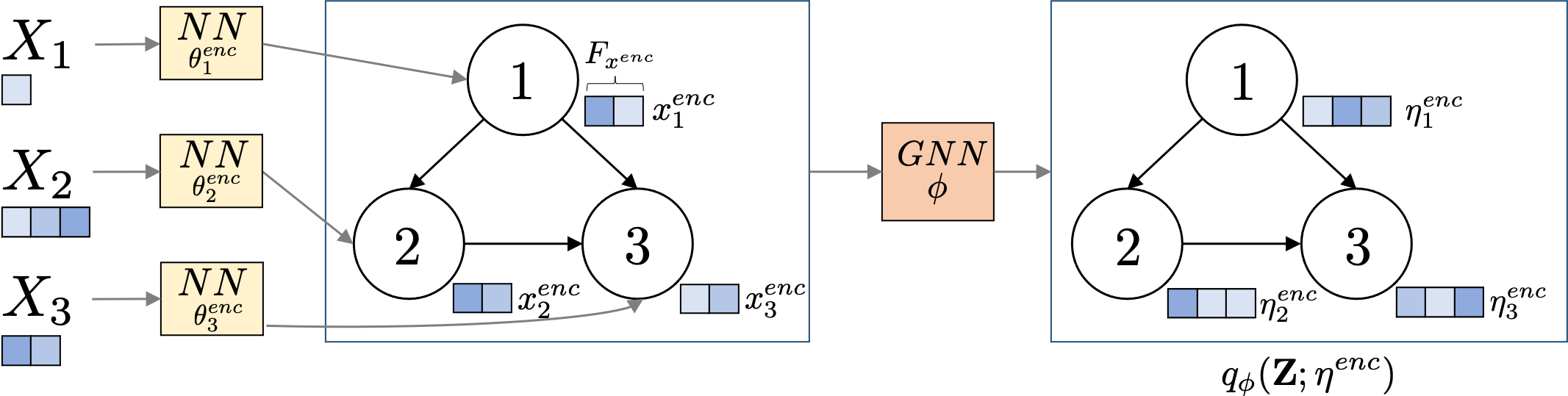}
     \vspace{-3pt}
    \caption{Heterogeneous \name\ encoder architecture. }
    \label{fig:GNN_het_encoder}
     \vspace{-3pt}
\end{figure*}

Note, each $p(X_{ij}; \etab_{ij})$ can be model with a different distribution, e.g., Gaussian or categorical.
This means that the likelihood parameters $\etab_{ij}$ for each random variable $X_{ij}$ may be different depending on its distribution type.
However, the decoder GNN  transforms the latent features into different features $\mathbf{H} \in \Rbb^{d \times F_h}$, where the feature vector of each node $i$ has the same dimensionality $F_h $. As a consequence, $\mathbf{H}$ cannot model the diversity in the likelihood parameters $\etab_i$.  
To overcome such limitation, we add at the output of the GNN decoder a neural network (NN) per node $i$ with parameters $\theta_i^\dec$. Such a NN transforms $h_i$, i.e. the output features of node $i$, into the set of likelihood parameters of each node $\etab_i=\{\etab_{ij}\}^{k_{i}}_{j=1}$, such that the likelihood parameters of each random variable $\etab_{ij}$ satisfy the constraints of the corresponding likelihood $p(X_{ij};\etab_{ij})$ (e.g., non-negativity of variance for a Gaussian distribution). See Figure~\ref{fig:GNN_het_decoder} for an illustration.

\paragraph{Implications for the encoder.} 
Due to the heterogeneous nature of nodes, each endogenous variable $\X_i = \{X_{i1},  \dots, X_{ik_i}\}$, can have a different number of random variables $k_i$ and thus the node $i$ corresponding to it in the GNN will have features of different dimensions. However, as described in Appendix \ref{apx:GNN}, a GNN takes as input in general a matrix feature $\X \in \mathbb{R}^{d\times F_{x^{enc}}}$. This implies, the features of every node share the same dimensionality $F_{x^{enc}}$. 
To overcome this limitation, we include for each node $i$ a neural network (NN) with parameters $\theta^{\enc}_i$ that transforms the corresponding heterogeneous random variable $\X_i$ into a feature vector with the dimension $F_{x^{enc}}$. See Figure~\ref{fig:GNN_het_encoder} for an illustration.

\section{Validating  \& understanding \name }\label{apx:vaca_analysis}
Here, we empirically validate the design conditions of \name\ and provide an analysis of the effect of the latent space dimension $\dim \bld{z}$ on the performance.

\paragraph{Validating \name\ design conditions.} 
In a first step we empirically validate our design choices for the \name\ encoder and decoder. 
We show how the number of hidden layers $N_h$ in the decoder affects the quality of the estimation of the observational and interventional distribution. We do so for three SCMs, with different values of longest shortest directed path $\delta$ and longest directed path $\gamma$. 
 Our observations in Table~\ref{tab:obs_ours} match our expectations. The \collider\ ($\delta= \gamma=1$) does not need any hidden layer to provide accurate estimate of both the observational and interventional distributions. In contrast, the \triangl\ ($\delta =1, \gamma=2$) -- in accordance  with \textbf{Proposition~\ref{prop:CaVAEint}} -- needs at least one hidden layer to get a more accurate estimate of the interventional distribution. Finally, as stated by \textbf{Propositions~\ref{prop:CaVAEobs}}~and~\textbf{\ref{prop:CaVAEint}}, the  \chain\  ($\delta= \gamma=2$) requires at least one hidden layer to accurately approximate both the observational and interventional distributions.

\paragraph{Analysis of the latent space dimension.} 

Here we present an analysis of the performance of \name\ with respect to the dimension of the latent space $\dim \bld{z}$. Without loss of generality, we focus the analysis on the \collider, \triangl\ and \chain\ graphs, and the \nonlin\ and \nonadd\ structural equations. The results are depicted in Figure \ref{fig:dim_z_in_VCAUSE}. All results are averaged over 10 different initializations. 
Regarding the observational distribution (left column),
we observe that \name\ overfits as we increase the $\dim \mathbf{z}$ for all SEMs and graphs under consideration. 
Specially, the performance degrades notoriously with $\dim \mathbf{z} = 32$. As shown in Table \ref{tab:num_params}, the number of parameters of \name\ increases linearly with $\dim \mathbf{z}$.  As for the interventional distribution (middle column), we observe similar behavior: for large values of  $\dim \mathbf{z}$ performance decreases and variance increases. For counterfactuals (right column), we observe a similar behavior but considerably less pronounced. 
In summary, we encourage practitioners to keep $\dim \mathbf{z}$ small to avoid overfitting and obtain better and more consistent performance.

 \begin{figure}[htbp!]
    \centering
    \includegraphics[width=0.4\textwidth]{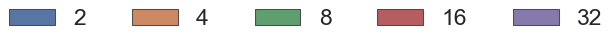}\\
    \includegraphics[width=0.23\textwidth]{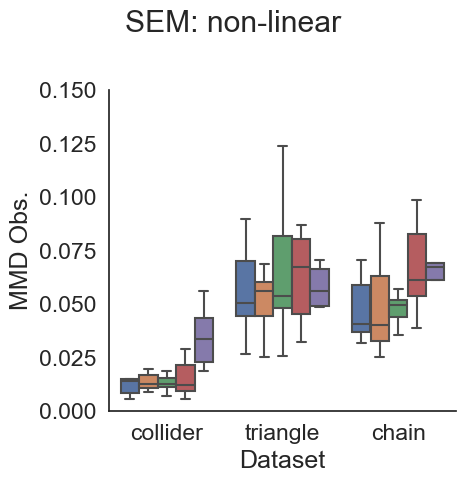}
        \includegraphics[width=0.22\textwidth]{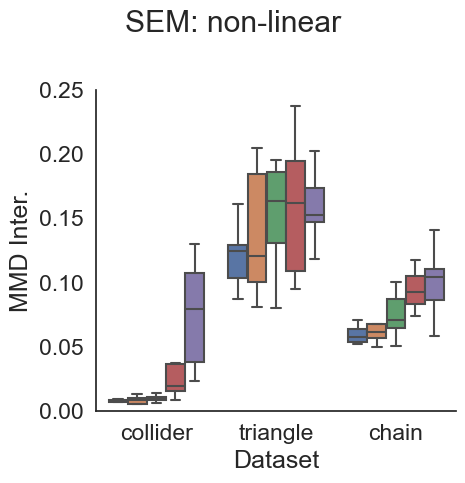}
            \includegraphics[width=.22\textwidth]{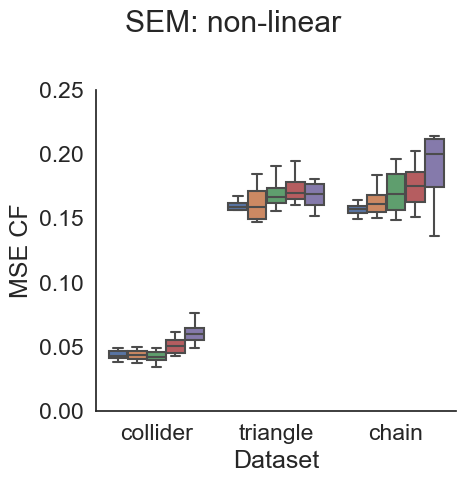}\\
        \vspace{0.2cm}
    \includegraphics[width=.23\textwidth]{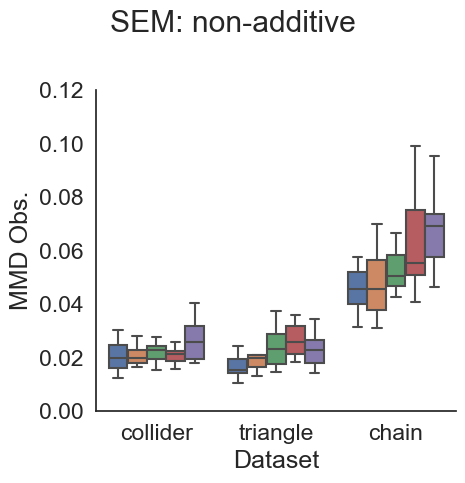}
    \includegraphics[width=.22\textwidth]{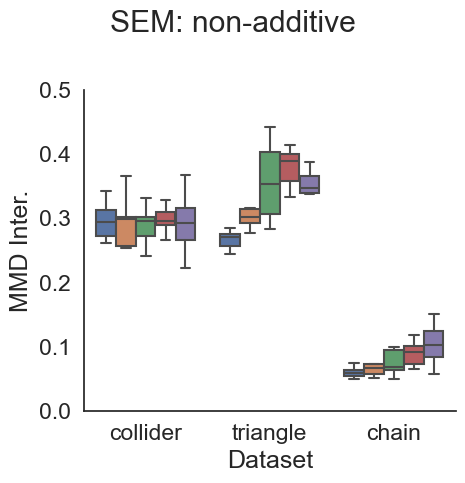}
     \includegraphics[width=.22\textwidth]{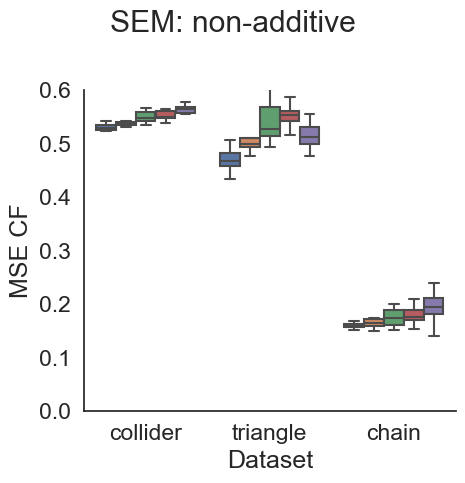}\\
    \caption{Performance of \name\ varying the dimension of the latent space $\dim \boldsymbol{z} \in \{2, 4, 8, 16, 32\}$ for the non-linear (top row) and non-additive (bottom column) SEMs of the \collider, \triangl\ and \chain\ graphs. We evaluate the quality of the observational (MMD Obs.) (left column), interventional (MMD Inter.) (middle column), and counterfactual (MSE CF) (right column) distributions.}
    \label{fig:dim_z_in_VCAUSE}
\end{figure}

\begin{table}[t]
    \centering
\small{
\setlength\tabcolsep{3pt}
    \centering
    \begin{tabular}{c   cc cc cc}
\toprule
 &  \multicolumn{2}{c}{\collider}  &  \multicolumn{2}{c}{\triangl}  & \multicolumn{2}{c}{\chain}  \\
   \cmidrule(r){2-3}   \cmidrule(r){4-5}   \cmidrule(r){5-7} 
  $\dim \mathbf{z}$  &  \nonlin & \nonadd  &  \nonlin & \nonadd  &  \nonlin & \nonadd    \\
  \cmidrule(r){2-7} 
  2 & 3785 & 3785 & 4542 & 4542 & 3785 & 3785 \\
  4 & 4445 & 4445 & 5334 & 5334 & 4445 & 4445 \\
  8 & 5765 & 5765 & 6918 & 6918 & 5765 & 5765 \\
  16 & 8405 & 8405 & 10086 & 10086 & 8405 & 8405 \\
  32 & 13685 & 13685 & 16422 & 16422 & 13685 & 13685 \\

\bottomrule
\end{tabular}\caption{Number of parameters of \name\ for different $\dim \mathbf{z}$ and  datasets}\label{tab:num_params}
}
\vspace{-5pt}
\end{table}

\newpage
\section{Experiments: setting, metrics  and further results}\label{apx:training}

This section provides a complete description of the experimental set-up, including the (semi-)synthetic datasets (Section \ref{apx:training_datasets}), training of \name, \mcvae\ \cite{karimi2020algorithmic} and  \carefl\ \cite{khemakhem2021causal} (Section \ref{apx:training_implementation}), metrics reported in the experiments (Section \ref{apx:training_metrics}), additional results (Section \ref{apx:additional_results}), complexity of the algorithm (Section \ref{apx:time_analysis}), and computing infrastructure (Section \ref{apx:infrastructure}).

\subsection{Datasets}\label{apx:training_datasets}
The following (semi-)synthetic datasets are taken from or inspired by \cite{karimi2020algorithmic}. The distribution of exogenous variables $p(\U)$  for \triangl, \chain\ and \collider\  follows Table \ref{tab:synth_exogen} with with $\operatorname{MoG}$ denoting a mixture of Gaussian distributions.

\begin{table*}[!htbp]
 \centering
    \begin{tabular}{llll}
\toprule
  SCM & $p(U_1)$ & $p(U_2)$ & $p(U_3)$\\
\midrule
\lin &  $ \operatorname{MoG}(0.5 \mathcal{N}(-2,1.5)+0.5 \mathcal{N}(1.5,1))$ & $\mathcal{N}(0,1)$ & $\mathcal{N}(0,1)$ \\ 
\nonlin & $ \operatorname{MoG}(0.5 \mathcal{N}(-2,1.5)+0.5 \mathcal{N}(1.5,1))$  &  $\mathcal{N}(0,0.1)$& $\mathcal{N}(0,1)$ \\
\nonadd & $ \operatorname{MoG}(0.5 \mathcal{N}(-2.5,1)+0.5 \mathcal{N}(2.5,1))$ & $\mathcal{N}(0,0.25)$ & $\mathcal{N}(0,0.0625)$\\ 
\bottomrule
\hfill
\end{tabular}
\caption{Distribution of exogenous variables $p(\U)$ for SCM \triangl, \chain, \collider.}
    \label{tab:synth_exogen}
\end{table*}

\begin{table*}[!htbp]
 \centering
    \begin{tabular}{lllll}
\toprule
& SCM  & $\fx_1:=X_1$ & $\fx_2:=X_2$ & $\fx_3:=X_3$ \\
\hline \hline
 \multirow{3}{*}{\rotatebox[origin=c]{90}{ {\collider}}} &
\lin &  $U_1 $ & $U_2$ & $0.05 X_1 + 0.25 X_2 + U_3$  \\ 
&\nonlin & $U_1$ &  $U_2$ & $0.05 X_1 + 0.25 (X_2)^2 + U_3$  \\
&\nonadd & $U_1$ & $U_2$ & $-1 + 0.1 \operatorname{sgn}(U_3) ((X_1)^2 + (X_2)^2) U_3$ \\ 
\hline \hline
 \multirow{3}{*}{\rotatebox[origin=c]{90}{ {\triangl}}}
 & \lin &  $U_1$ & $ -X_1 +U_2$ & $X_1 + 0.25 X_2 + U_3$ \\ 
& \nonlin & $U_1$ &  $-1 + \frac{3}{(1 + \operatorname{exp}(-2 X_1))} + U_2$ & $X_1 + 0.25 (X_2)^2 + U_3$  \\
& \nonadd & $U_1$ & $0.25 \operatorname{sgn}(U_2) * (X_1)^2 (1 + (U_2)^2)$ & $-1 + 0.1 \operatorname{sgn}(U_3) ((X_1)^2 + (X_2)^2) + U_3$\\ 
\hline \hline
 \multirow{3}{*}{\rotatebox[origin=c]{90}{ {\chain}}}& \lin &  $U_1$ & $-X_1 + U_2$ & $0.25 * X_2 + U_3$ \\ 
& \nonlin & $U_1$ &  $-1 + \frac{3}{(1 + \operatorname{exp}(-2 X_1))} + U_2$ & $0.25 * (X_2)^2 + U_3$  \\
& \nonadd & $U_1$ & $0.25 \operatorname{sgn}(U_2) (X_1)^2 (1 + (U_2)^2)$ & $-1 + 0.1 \operatorname{sgn}(U_3) ((X_2)^2) + U_3$  \\ 
\bottomrule
\hfill
\end{tabular}
    \caption{Structural equations $\Fx$ for different SCMs with $\U \sim p(\U)$ in Table \ref{tab:synth_exogen}.
    Function $\operatorname{sgn}(x)$ returns an element-wise indication of the sign of $x$.
    }
    \label{tab:3synth_eq}
\end{table*}

\paragraph{Collider.}
The \collider\ is a synthetic dataset, which consists of 3 endogenous variables. The 
structural equations are shown in Table~\ref{tab:3synth_eq}. Figure~\ref{fig:collider_graph} illustrates the corresponding causal graph with $d=|\X| = 3$ nodes, diameter $\diameter =1$ and  longest path $\longpath=1$.

\begin{figure}[htbp!]
    \centering



\begin{tikzpicture}
        \node[state, fill=gray!60] (x3) at (0,0) {$X_3$};
        \node[state, fill=gray!60] (x2) [above right = 0.60 cm of x3] {$X_2$};
        \node[state, fill=gray!60] (x1) [above left = 0.60 cm of x3] {$X_1$};

        \path (x1) edge [thick](x3);
        \path (x2) edge [thick](x3);

\end{tikzpicture}
    \caption{Causal graph for variables $\X$ of SCM \collider.} \label{fig:collider_graph}
  \end{figure}
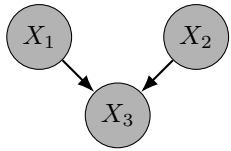

\pagebreak
\paragraph{Triangle.}
The \triangl\ is a synthetic dataset, which consists of 3 endogenous variables. 
The structural equations are shown in Table~\ref{tab:3synth_eq}. Figure~\ref{fig:triangle_graph} illustrates the corresponding causal graph with $d=|\X| = 3$ nodes, diameter $\diameter =1$ and longest path $\longpath = 2$. 

\begin{figure}[htbp!]
    \centering


\begin{tikzpicture}
        \node[state, fill=gray!60] (x1) at (0,0) {$X_1$};
        \node[state, fill=gray!60] (x3) [below right = 0.60 cm of x1] {$X_3$};
        \node[state, fill=gray!60] (x2) [below left = 0.60 cm of x1] {$X_2$};

        \path (x1) edge [thick](x3);
         \path (x1) edge [thick](x2);
        \path (x2) edge [thick](x3);

\end{tikzpicture}
    \captionof{figure}{Causal graph for variables $\X$ of SCM \triangl.} \label{fig:triangle_graph}
\end{figure}
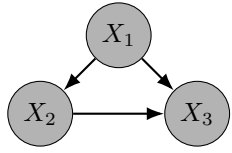

\paragraph{Chain.}
The \chain\ is a synthetic dataset, which consists of 3 endogenous variables. The structural equations are shown in Table~\ref{tab:3synth_eq}. Figure~\ref{fig:chain_graph} illustrates the corresponding causal graph with $d=|\X| = 3$ nodes, diameter $\diameter =2$  and  longest path $\longpath=2$.

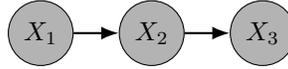
\begin{figure}[h!]
    \centering
        

        

\begin{tikzpicture}
        
        \node[state, fill=gray!60] (x1) at (0,0) {$X_1$};
        \node[state, fill=gray!60] (x2) [right = 0.60 cm of x1] {$X_2$};
        \node[state, fill=gray!60] (x3) [right = 0.60 cm of x2] {$X_3$};

        \path (x1) edge [thick](x2);
        \path (x2) edge [thick](x3);
        
\end{tikzpicture}
    \captionof{figure}{Causal graph for variables $\X$ of SCM \chain.} \label{fig:chain_graph}
\end{figure}

\paragraph{M-graph.}
The \mgraph\ is a synthetic dataset, which consists of 5 endogenous variables. Here, the distributions of exogenous variables follow $U_i \sim p(U_i) = \mathcal{N}(0,1) \qforall i \in {1 \dots 5}$. The structural equations are shown in Table~\ref{tab:mgraph_eq} and Figure~\ref{fig:mgraph_graph} illustrates the corresponding causal graph with $d=|\X| = 5$ nodes, diameter $\diameter =1$  and  longest path $\longpath=1$.

\begin{figure}[htbp!]
    \centering



\begin{tikzpicture}
        \node[state, fill=gray!60] (x4) at (0,0) {$X_4$};
        \node[state, fill=gray!60] (x2) [above right = 0.60 cm of x4] {$X_2$};
        \node[state, fill=gray!60] (x1) [above left = 0.60 cm of x4] {$X_1$};
        \node[state, fill=gray!60] (x5) [below right = 0.60 cm of x2] {$X_5$};
        \node[state, fill=gray!60] (x3) [below left = 0.60 cm of x1] {$X_3$};

        \path (x1) edge [thick](x3);
        \path (x2) edge [thick](x4);
         \path (x1) edge [thick](x4);
        \path (x2) edge [thick](x5);

\end{tikzpicture}
    \captionof{figure}{Causal graph for variables $\X$ of SCM \mgraph.} \label{fig:mgraph_graph}
\end{figure}
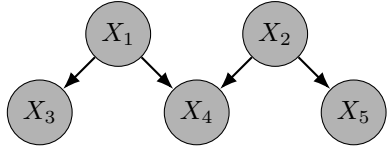

\begin{table*}[htbp!]
 \centering
\small{
    \begin{tabular}{llllll}
\toprule
 SCM  & $\fx_1:=X_1$ & $\fx_2:=X_2$ & $\fx_3:=X_3$ & $\fx_4:=X_4$ & $\fx_5:=X_5$ \\
\midrule
\lin &  $U_1$ & $U_2$ & $X_1 + U_3$ & $-X_2 + 0.5 X_1 + U_4$ & $-1.5 X_2 + U_5$\\ 
\nonlin & $U_1$ &  $U_2$ & $ X_1 + 0.5 (X_1)^2 + U_3$ & $-X_2 + 0.5 (X_1)^2 + U_4$ & $ -1.5 (X_2)^2 + U_5$\\
\nonadd & $U_1$ & $U_2$ & $X_1 * U_3$ & $(-X_2 + 0.5 * (X_1)^2) U_4$ & $(-1.5 (X_2)^2) U_5$\\ 
\bottomrule
\hfill
\end{tabular}
}
    \caption{Structural Equations $\Fx$ for SCM \mgraph\ with $U_i \sim p(U_i) = \mathcal{N}(0,1) \, \qforall i \in {1 \dots 5}$. }
    \label{tab:mgraph_eq}
\end{table*}

 \begin{figure*}[!htbp]
    \centering
    \begin{subfigure}{.3\textwidth}
    \centering
    \includegraphics[width=0.8\textwidth]{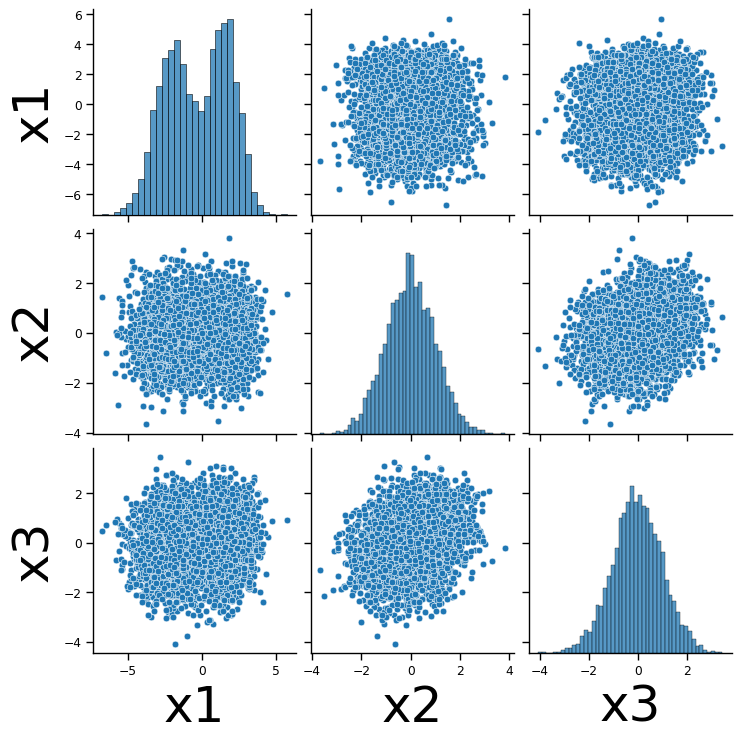}
    \caption{{\collider\  \lin}} \label{fig:collider_LIN_plot}
  \end{subfigure}
      \centering
 \centering
  \begin{subfigure}{.32\textwidth}
    \centering
    \includegraphics[width=0.8\textwidth]{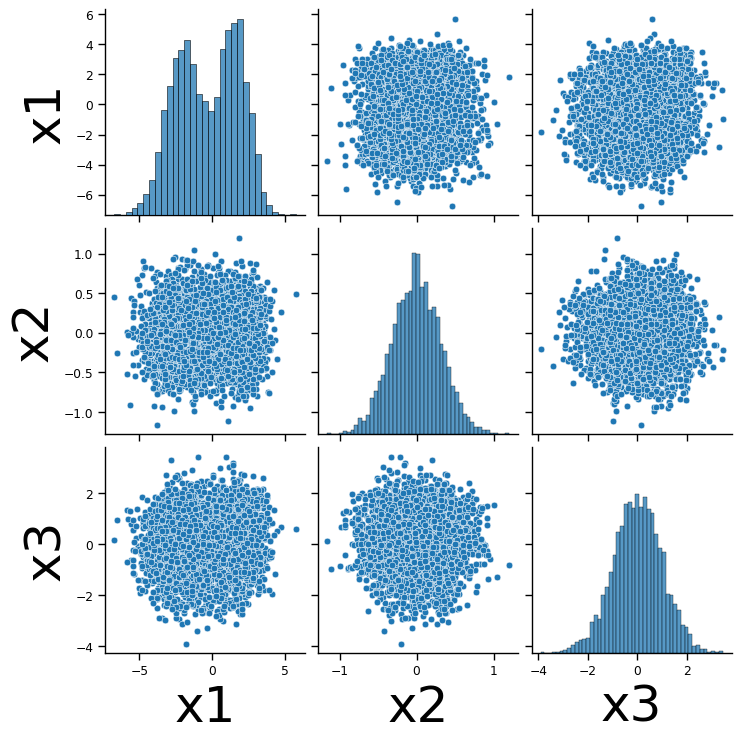}
    \caption{{\collider\ \nonlin}} \label{fig:collider_NLIN_plot}
  \end{subfigure}
   \centering
    \begin{subfigure}{.32\textwidth}
    \centering
    \includegraphics[width=0.8\textwidth]{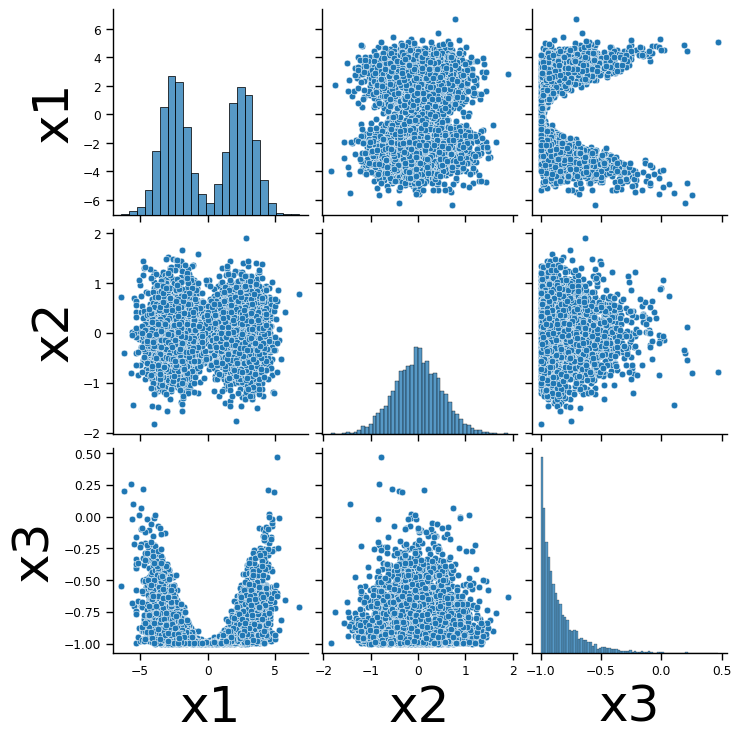}
    \caption{{\collider\  \nonadd}} \label{fig:collider_NADD_plot}
    \end{subfigure}
      \centering
    \begin{subfigure}{.3\textwidth}
    \centering
    \includegraphics[width=0.8\textwidth]{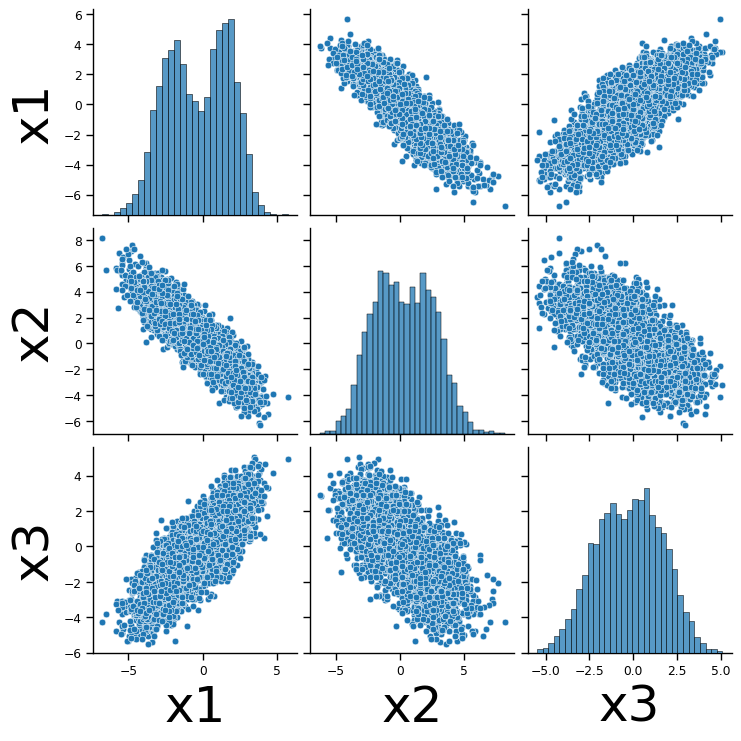}
    \caption{{\triangl\ \lin}} \label{fig:triangle_LIN_plot}
  \end{subfigure}
      \centering
 \centering
  \begin{subfigure}{.32\textwidth}
    \centering
    \includegraphics[width=0.8\textwidth]{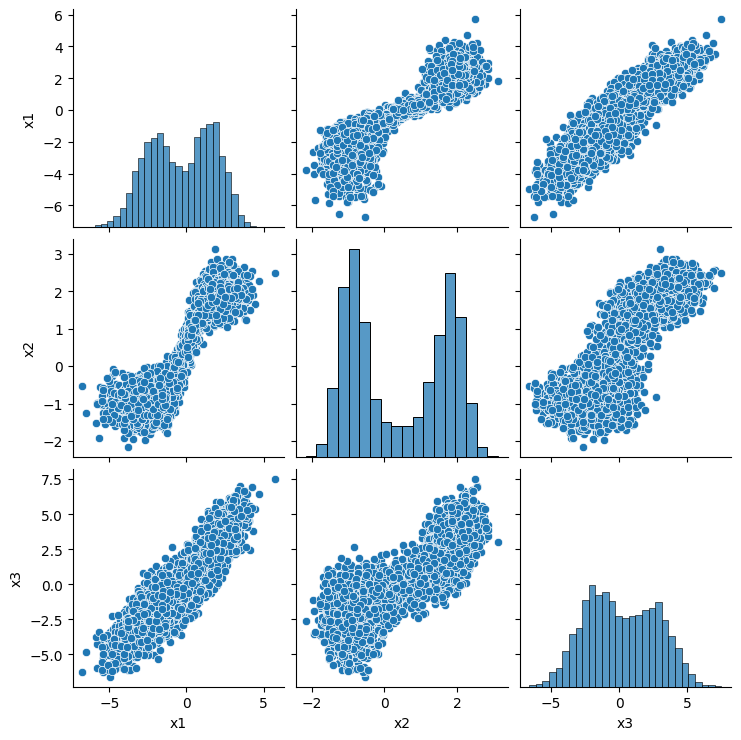}
    \caption{{\triangl\ \nonlin}} \label{fig:triangle_NLIN_plot}
  \end{subfigure}
   \centering
    \begin{subfigure}{.32\textwidth}
    \centering
    \includegraphics[width=0.8\textwidth]{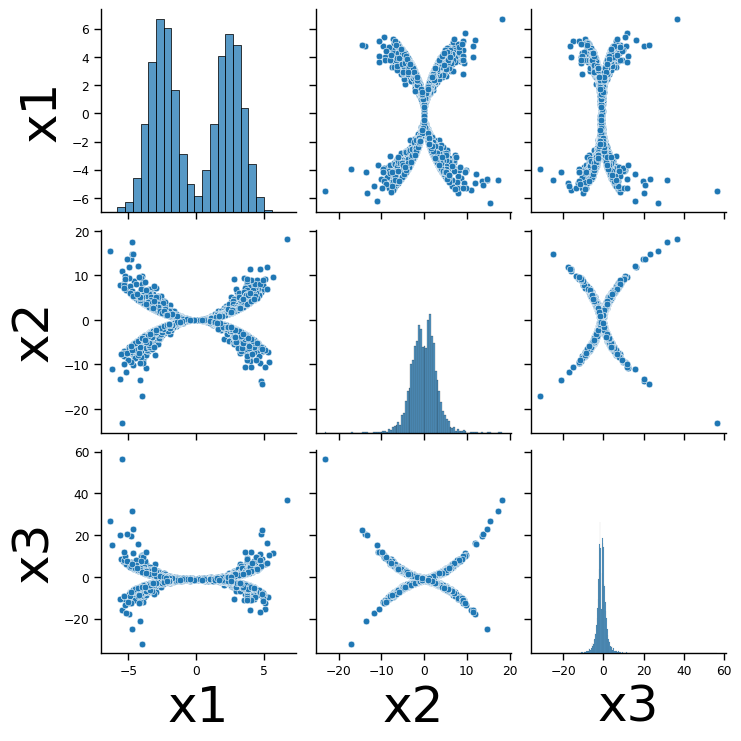}
    \caption{{\triangl\ \nonadd}} \label{fig:triangle_NADD_plot}
    \end{subfigure}
       \centering
    \begin{subfigure}{.3\textwidth}
    \centering
    \includegraphics[width=0.8\textwidth]{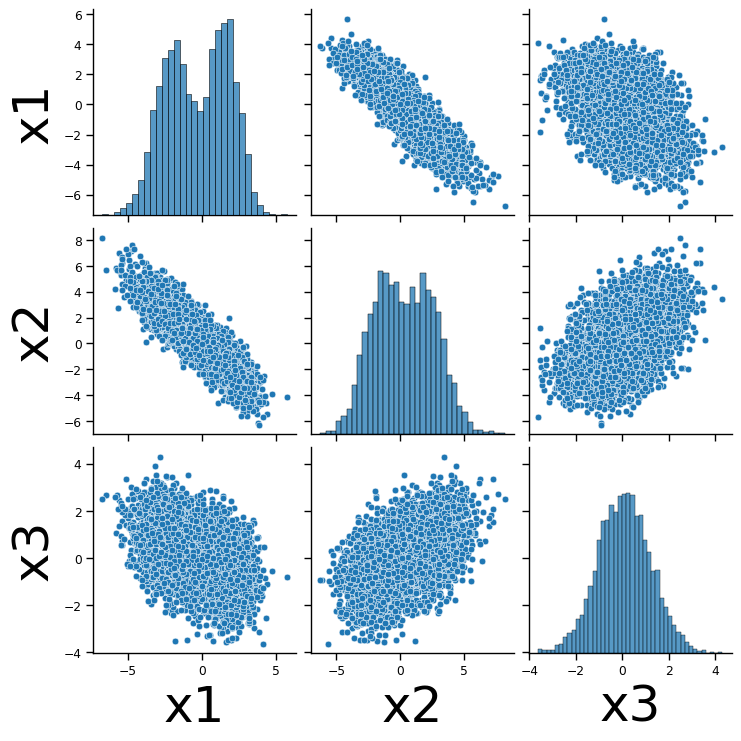}
    \caption{{\chain\ \lin}} \label{fig:chain_LIN_plot}
  \end{subfigure}
      \centering
 \centering
  \begin{subfigure}{.32\textwidth}
    \centering
    \includegraphics[width=0.8\textwidth]{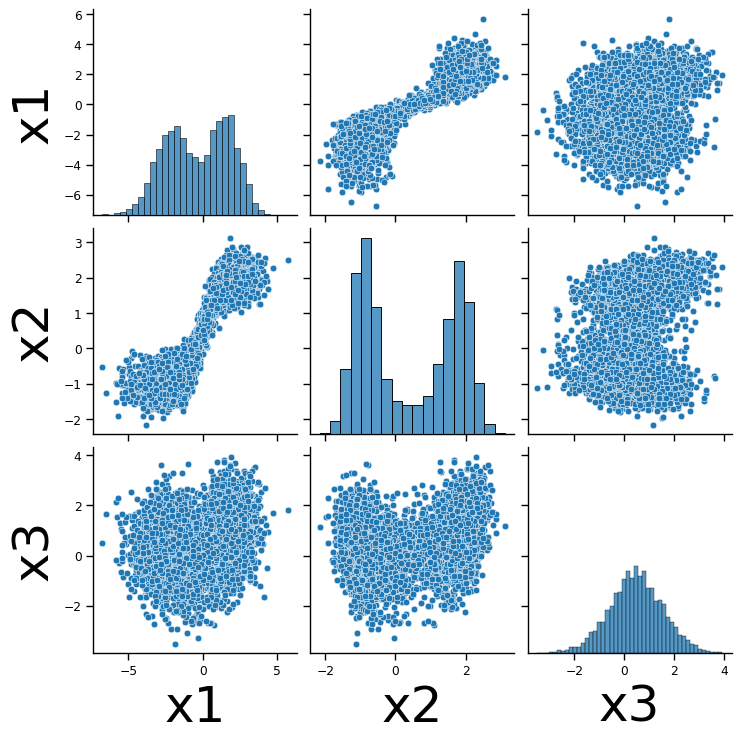}
    \caption{{\chain\  \nonlin}} \label{fig:chain_NLIN_plot}
  \end{subfigure}
   \centering
    \begin{subfigure}{.32\textwidth}
    \centering
    \includegraphics[width=0.8\textwidth]{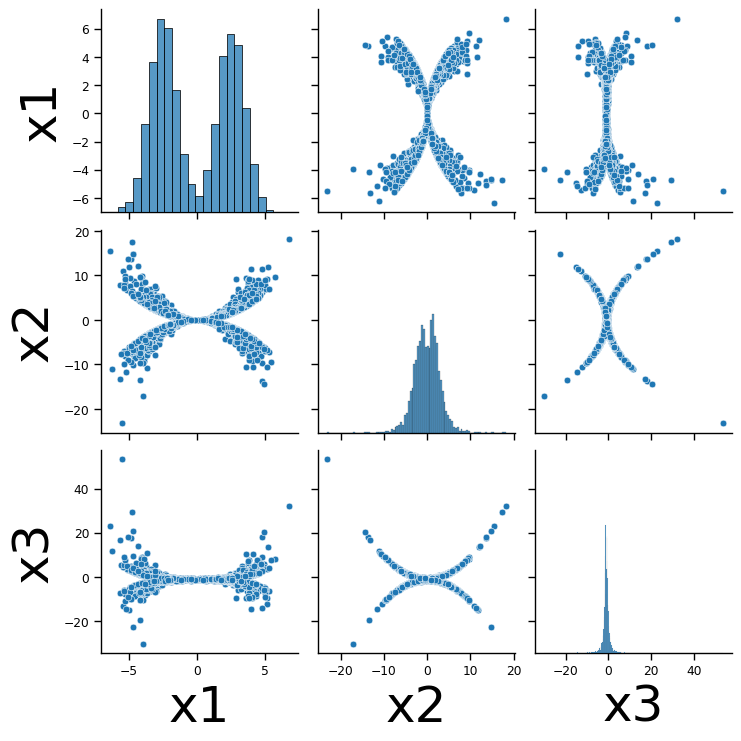}
    \caption{{\chain\ \nonadd}} \label{fig:chain_NADD_plot}
    \end{subfigure}
        \centering
    \begin{subfigure}{.3\textwidth}
    \centering
    \includegraphics[width=0.8\textwidth]{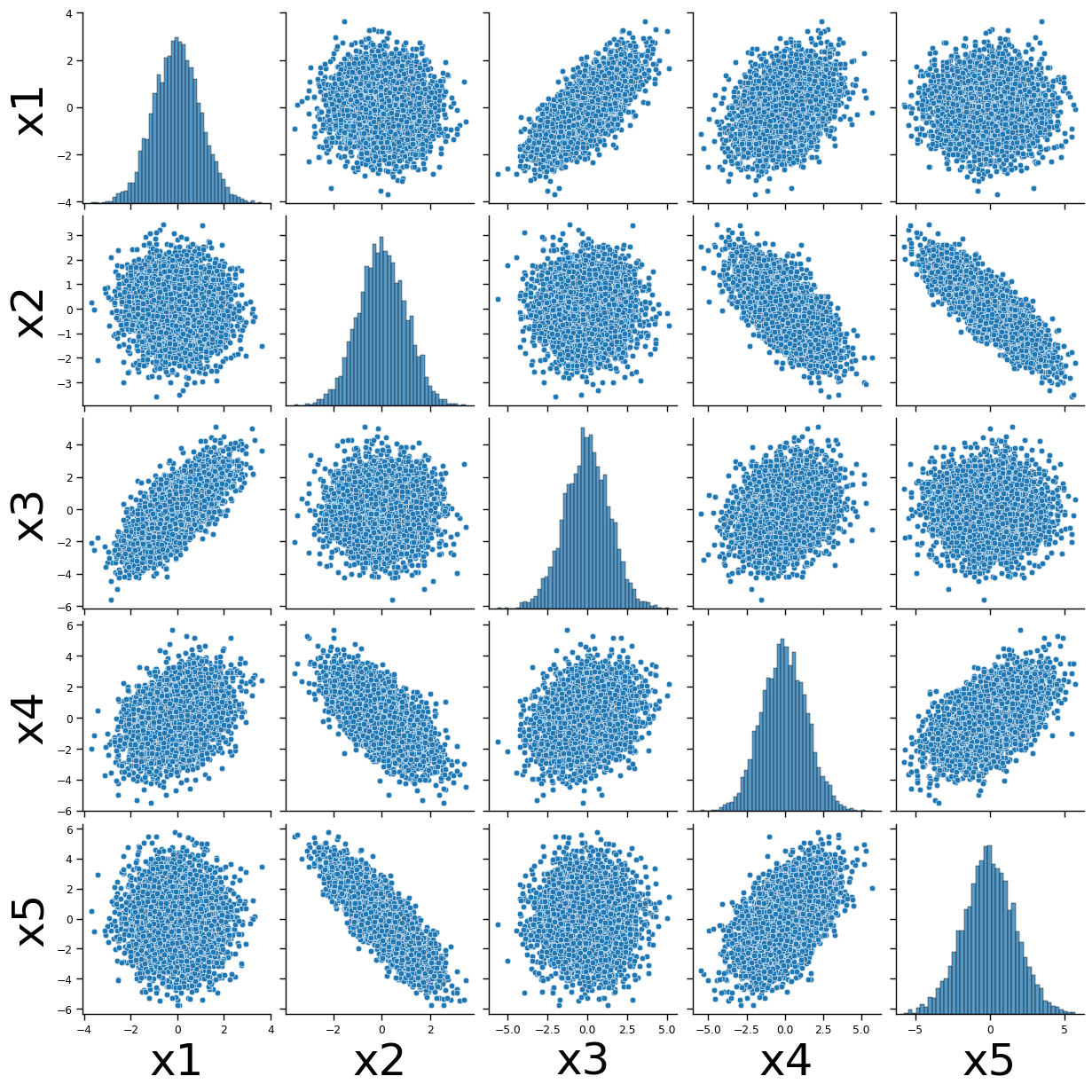}
    \caption{{\mgraph\ \lin}} \label{fig:m_graph_LIN_plot}
  \end{subfigure}
      \centering
 \centering
  \begin{subfigure}{.32\textwidth}
    \centering
    \includegraphics[width=0.8\textwidth]{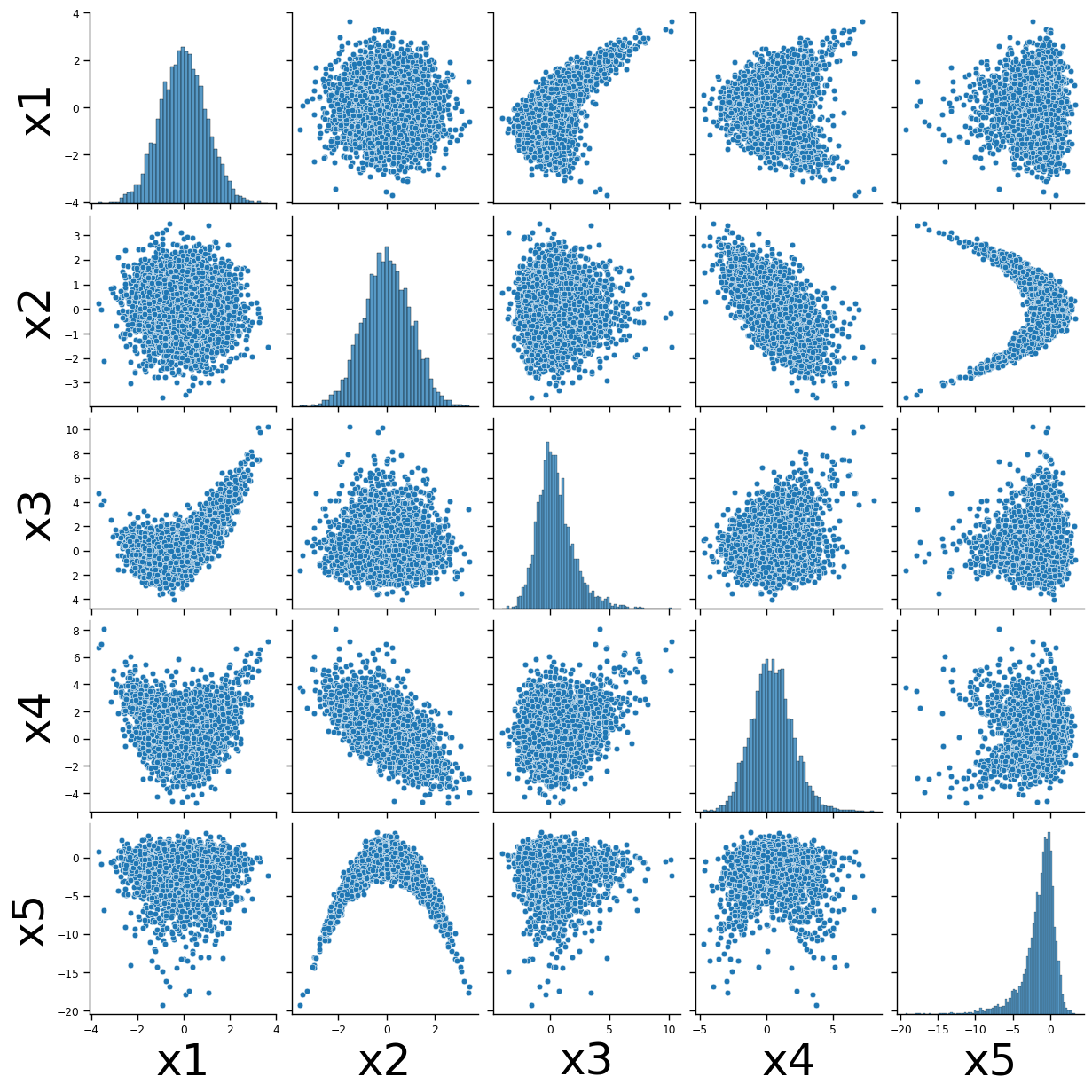}
    \caption{{\mgraph\ \nonlin}} \label{fig:m_graph_NLIN_plot}
  \end{subfigure}
   \centering
    \begin{subfigure}{.32\textwidth}
    \centering
    \includegraphics[width=0.8\textwidth]{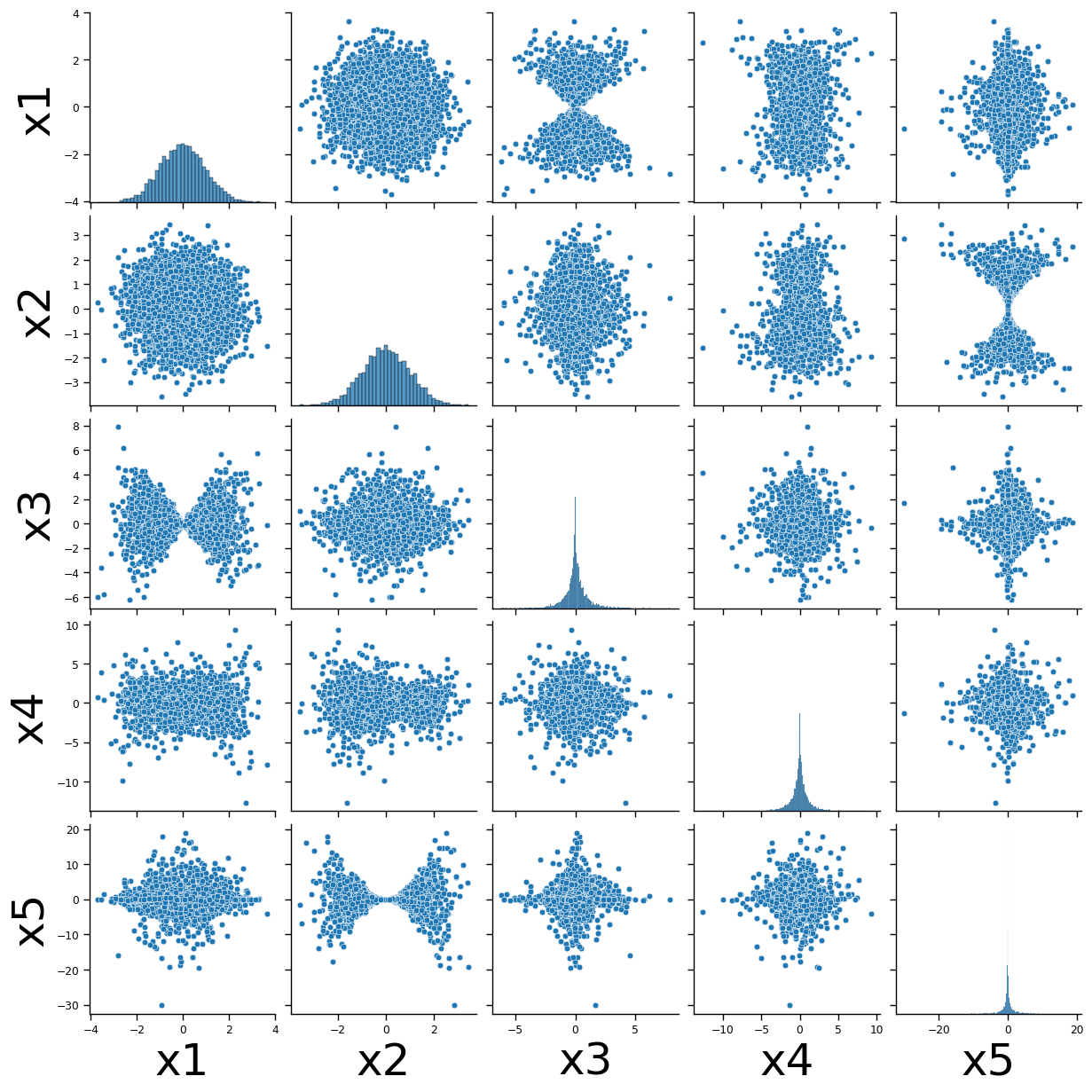}
    \caption{{\mgraph\ \nonadd}} \label{fig:m_graph_NADD_plot}
    \end{subfigure}
\caption{Histograms and scatter plots of pairwise feature relations for the \collider, \triangl, \chain\ and \mgraph\ datasets.}
 \end{figure*}

\newpage
\paragraph{Loan.}
The \loan\ is a semi-synthetic dataset from \cite{karimi2020algorithmic}, which reflects a loan approval setting in the real-world inspired by German Credit dataset \cite{GermanData}. It consists of 7 endogenous variables: \textit{gender} $G$, \textit{age} $A$,  \textit{education} $E$, \textit{loan amount} $L$, \textit{loan duration} $D$, \textit{income}  $I$ and \textit{savings} $S$ with the following structural equations and distributions of exogenous variables: 
\begin{align*}\label{eq:loan_eq}
    \begin{split}
    f_G : &G =U_{G} \\
    f_A : &A=-35+U_{A} \\
    f_E : &E=-0.5+\left(1+e^{+1-0.5 G-\left(1+e^{-0.1 A}\right)^{-1} -U_{E}}\right)^{-1} \\
    f_L : &L=1+0.01(A-5)(5-A)+G+U_{L} \\
    f_D : &D=-1+0.1 A+2 G+L+U_{D} \\
    f_I : &I=-4+0.1(A+35)+2 G+G E+U_{I} \\
    f_S : &S=-4+1.5 \mathbb{I}_{\{I>0\}} I+U_{S} 
    \end{split}
\end{align*}

\noindent with $ U_{G}~\sim \operatorname{Bernoulli}(0.5)$, $U_{A}~\sim \operatorname{Gamma}(10,3.5)$, 
$U_{E}~\sim \mathcal{N}(0,0.25)$, $ U_{L}~\sim~\mathcal{N}(0,4)$,  $U_{D}~\sim~\mathcal{N}(0,9)$, $U_{S}~\sim~\mathcal{N}(0,25)$, $U_{I}~\sim~\mathcal{N}(0,4)$.

Note, the authors model variables w.r.t. their relative meaning in terms of deviation from the mean. See \cite{karimi2020algorithmic} for further details. Figure~\ref{fig:loan_graph} illustrates the corresponding causal graph with $d=|\X| = 7$ nodes, diameter $\diameter =2$ and  longest path $\longpath=3$.

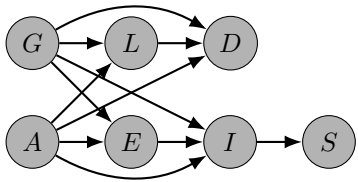
\begin{figure}[htbp!]



\begin{tikzpicture}
        \node[state, fill=gray!60] (G) at (0,0) {$G$};
        \node[state, fill=gray!60] (A) [below = 0.60 cm of G] {$A$};
        \node[state, fill=gray!60] (L) [right  = 0.60 cm of G] {$L$};
        \node[state, fill=gray!60] (D) [right  = 0.60 cm of L] {$D$};
        \node[state, fill=gray!60] (E) [right  = 0.60 cm of A] {$E$};
        \node[state, fill=gray!60] (I) [right  = 0.60 cm of E] {$I$};
        \node[state, fill=gray!60] (S) [right  = 0.60 cm of I] {$S$};

        \path (G) edge [thick](L);
        \path (G) edge [thick, bend left] (D) [right];
        \path (G) edge [thick](E);
        \path (G) edge [thick](I);
        \path (A) edge [thick](L);
        \path (A) edge [thick](D);
        \path (A) edge [thick](E);
        \path (A) edge [thick, bend right](I)[left];
        \path (I) edge [thick](S);
        \path (E) edge [thick](I);
        \path (L) edge [thick](D);

\end{tikzpicture}
    \caption{Causal graph for variables $\X$ of SCM \loan.}  \label{fig:loan_graph}
\end{figure}
  
\begin{figure}[htbp!]
     \centering
    \includegraphics[width=0.5\textwidth]{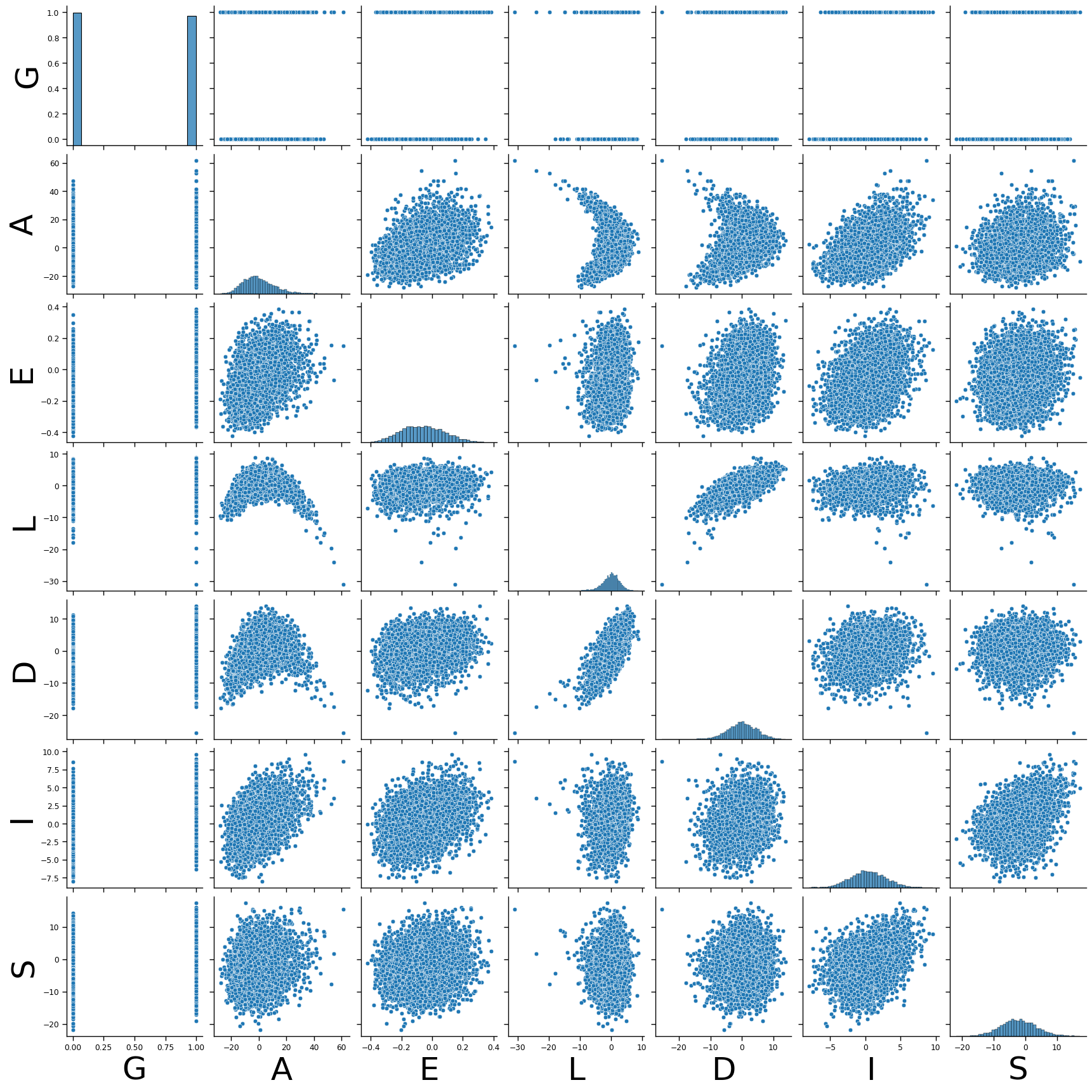}
    \caption{Histograms and scatter plots of pairwise feature relations for the \loan\ dataset. } \label{fig:loan_plot}
\end{figure}

 \newpage
\paragraph{Adult.}
We introduce the new semi-synthetic  \adult\ dataset which aims to reflect the relationship between the variables that affect the annual income of a person inspired by the real-world Adult dataset \cite{AdultData}. Figure~\ref{fig:adult_graph} shows the corresponding causal graph with $d=|\X| = 11$ nodes, diameter $\diameter =2$ and  longest path $\longpath=6$. We base the causal graph on \cite{chiappa2019pathscounterffair}. Our dataset consists of the following 11 endogenous variables:
\begin{itemize}
    \item \textit{Race} $R$ is an independent value.
    \item \textit{Age} $A$ is an independent value.
    \item \textit{Native country} ($N$) is an independent value.
    \item \textit{Sex} $S$ is an independent value.
    \item \textit{Education} level ($E$) depends on the sex, the native county and the race.
    \item \textit{Hours (worked) per week} $H$ depends additionally on the race, the native country and the sex. We consider this continuous variable to be between 0 and 80 hours.
    \item \textit{Work status} {$W$}, i.e., being unemployed or self-employed, is a discrete variable that depends on the number of hours worked per day, age, native country, and level of education.
    \item \textit{Martial status} $M$, i.e., married never married or separated, depends on age, race, work status, hours per week, native country and sex. Individuals under a certain age are never married.
    \item \textit{Occupation} {sector} $O$ is a discrete variable with values technology , social sciences and medicine. It depends on the race, age, education, martial status and sex.
    \item \textit{Relationship status} $L$ is a discrete variable, with values wife, own-child, husband, not-in-family, unmarried. It depends on martial status, education, age, native country, and sex.
    \item \textit{Income} $I$ depends on race, age, education, occupation, work status, martial status, hours per week, relationship status, native country, and sex.
\end{itemize}

These aspects are reflected in the following structural equations : 

\begin{figure}[htbp!]
     \centering
     \begin{subfigure}[b]{0.3\textwidth}
\begin{align*}
    \begin{split}
        &f_R:  R =U_{R} \\
        &f_A:  A =U_{A} + 17\\
        &f_N:   N =U_{N} \\
        &f_S:  S =U_{S} \\
        &f_E:  E = \exp \left( 2 \mathbb{I}_{\{R=0\}})+ \mathbb{I}_{\{R=1\}})+ \sigma(A - 30) \right) \\
        & \quad+ (0.5 \mathbb{I}_{\{S=0\}} + \mathbb{I}_{\{S=1\}})  (2\mathbb{I}_{\{N=1\}}+5\mathbb{I}_{\{N=2\}}\\
        & \quad +\mathbb{I}_{\{N=3\}}) + U_{E} \\
        &f_H:  H =((40 \mathbb{I}_{\{N=0\}} + 36  \mathbb{I}_{\{N=1\}}+ 50  \mathbb{I}_{\{N=2\}} \\
        & \quad + 30  \mathbb{I}_{\{N=3\}}) * \left(0.5\mathbb{I}_{\{R=0\}} + \mathbb{I}_{\{R=1\}} + 1.3 \mathbb{I}_{\{R=2\}}\right)  \\
          & \quad + 2 \exp \left( -(A - 30)^2 \right) + 5\mid \operatorname{tanh}(E-2)\mid  \\
        & \quad + 2 \mathbb{I}_{\{S=0\}} + U_{H}) \mathbb{I}_{\{A<70\}} \\
        & f_W:   W =w_2 \mathbb{I}_{\{w_2>=0\}}, \\
        & \quad w_2 = w_1 \mathbb{I}_{\{w_1<=3\}}+ 3\mathbb{I}_{\{w_1>3\}}\\
        & \quad w_1 =  \mathbb{I}_{\{5 \mid \operatorname{tanh}(E- 2)\mid + \sigma(H-30 + U_W) > 0.3\}} \\
        & \quad + \mathbb{I}_{\{\sigma(H-30 + U_W > 0.3\}} \mathbb{I}_{\{(A + 1.5 U_W) > 50\}} \\
        & \quad -\mathbb{I}_{\{N=0\}}+\mathbb{I}_{\{N=2\}} +3\mathbb{I}_{\{N=3\}}) \\
        & f_M : M = \operatorname{mode}(r_2, a_2, W, h_2, H, g_3) \\
        & \quad r_2 = 2*\mathbb{I}_{\{r_1=1\}} + \mathbb{I}_{\{r_1=2\}} \\
        & \quad r_1 = \operatorname{int}(R + 0.2U_M)\mathbb{I}_{\{R\in[0, 2]\}} + 2\mathbb{I}_{\{R>2\}}\\
        & \quad g_3= \mathbb{I}_{\{g_2=0\}} + 2*\mathbb{I}_{\{g_2=1\}}\\
        & \quad g_2 = 0*\mathbb{I}_{\{g_1<0\}} + \mathbb{I}_{\{g_1>1\}} + g_1\mathbb{I}_{\{g_1\in[0, 1]\}}\\
        & \quad g_1 = \operatorname{int}(G + 0.5U_M)\\
    \end{split}
\end{align*}
     \end{subfigure}
     \hfill
     \begin{subfigure}[b]{0.3\textwidth}
\begin{align*}
    \begin{split}
         & \quad a_2 = 2\mathbb{I}_{\{a_1 \in (20, 40]\}} + \mathbb{I}_{\{a_1 \in (40, 50]\}} + 2\mathbb{I}_{\{a_1 >= 50\}}\\
        & \quad a_1 = A + 2U_M\\
        & \quad h_2 = h_1\mathbb{I}_{\{h_1 <= 2\}} + 2\mathbb{I}_{\{h_1 > 2\}}\\
        & \quad h_1 = 3\operatorname{int}(\sigma(H-30))\\
        & f_L :  L = 0\Ibb{(M = 1)\wedge (c<-1)} + 1\Ibb{(M = 1)\wedge (c \geq -1)}\\
        & \quad+ 2\Ibb{(M \neq 1)\wedge (c\geq-1)} + 1\Ibb{(M \neq 1)\wedge (c<-1)} \\
        & \quad c = c_n + c_e + 2 \Ibb{A < 20} - 2 \Ibb{S = 0} \\
        & \quad c_n = U_O\Ibb{N=0} -U_O \Ibb{N=1} + 2U_O\Ibb{N=2} +  2 \Ibb{N=3} \\
        & \quad c_e = \sigma (E - 30) \\
        & f_O :  O = 0\Ibb{k<1} + 1\Ibb{1\leq k \leq 4} + 2\Ibb{4<k} \\
        & \quad k = R + k_a + k_e + W + 3M + 4S\\
        & \quad k_a = 2 e^{- (A + U_O - 20 )^2}\\
        & \quad k_e = - \sigma(E* U_O - 30 )\\
        & f_I  :  I = U_I \\
        & \quad + 10,000 \Ibb{R > 1.5}+ 20,000\Ibb{R < 1.5} \\
        & \quad + 3,000\Ibb{21 \leq A < 30}+ 8,000\Ibb{30 \leq A} \\
        & \quad + 5,000\Ibb{E < 2}+ 10,000\Ibb{2 \leq E < 10} + 30,000\Ibb{10 \leq E} \\
        & \quad + 5,000\Ibb{O = 1}+ 15,000\Ibb{O = 2} \\
        & \quad + 5,000\Ibb{W = 0}+ 7,000\Ibb{W = 1} \\
        & \quad + 1,000\Ibb{M = 0}+ 4,000\Ibb{M = 1} - 2,000\Ibb{M = 2} \\
        & \quad + 15,000\Ibb{H >45} + 10,000\Ibb{N \geq 2} \\
        & \quad + 4,000\Ibb{S = 1} + 3,000\Ibb{R \leq 1} \\
    \end{split}
\end{align*}
     \end{subfigure}
\end{figure}

\begin{figure*}[!htbp]
    \centering
    \begin{tikzpicture}
        \node[state, fill=gray!60] (R) at (0,0) {$R$};
        \node[state, fill=gray!60] (A) [below = 1.5 cm of R] {$A$};
        \node[state, fill=gray!60] (N) [below = 1.5 cm of A] {$N$};
        \node[state, fill=gray!60] (S) [below = 1.5 cm of N] {$S$};
        \node[state, fill=gray!60] (H) [right = 3 cm of A] {$H$};
        \node[state, fill=gray!60] (E) [right = 3 cm of N] {$E$};
        \node[state, fill=gray!60] (W) [right  = 6 cm of R, yshift=-1cm] {$W$};
        \node[state, fill=gray!60] (M) [right  = 6 cm of S, yshift=0.5cm] {$M$};
        \node[state, fill=gray!60] (L) [right = 6 cm of H] {$L$};
        \node[state, fill=gray!60] (O) [right = 6 cm of E, yshift=0.3cm] {$O$};
        \node[state, fill=gray!60] (I) [right = 3 cm of L] {$I$};
        \path (A) edge [thick, bend left](I); 
        \path (A) edge [thick](E); 
        \path (A) edge [thick](H); 
        \path (A) edge [thick](W); 
        \path (A) edge [thick, bend right](M); 
        \path (A) edge [thick](O); 
        \path (A) edge [thick, bend right](L); 
        \path (R) edge [thick, bend left](I); 
        \path (R) edge [thick](E); 
        \path (R) edge [thick](H); 
        \path (R) edge [thick](M); 
        \path (N) edge [thick](E); 
        \path (N) edge [thick](H); 
        \path (N) edge [thick](M); 
        \path (N) edge [thick](L); 
        \path (N) edge [thick, bend right](I); 
        \path (N) edge [thick, bend left](W); 
        \path (S) edge [thick](E); 
        \path (S) edge [thick](H); 
        \path (S) edge [thick, bend right](I); 
        \path (S) edge [thick](L); 
        \path (S) edge [thick](M); 
        \path (S) edge [thick](O); 
        \path (E) edge [thick](I); 
        \path (E) edge [thick](O); 
        \path (E) edge [thick](L); 
        \path (E) edge [thick](W); 
        \path (E) edge [thick](H); 
        \path (H) edge [thick](W); 
        \path (H) edge [thick](M); 
        \path (H) edge [thick, bend right](I); 
        \path (W) edge [thick](O); 
        \path (W) edge [thick](I); 
        \path (W) edge [thick](M); 
        \path (M) edge [thick, bend right](O); 
        \path (M) edge [thick, bend right](I); 
        \path (M) edge [thick](L); 
        \path (O) edge [thick](I); 
        \path (L) edge [thick](I); 

\end{tikzpicture}
    \captionof{figure}{Causal graph for variables $\X$ of SCM \adult.} \label{fig:adult_graph}
\end{figure*}

\begin{figure*}[!htbp]
    \centering
     \includegraphics[width=0.9\textwidth]{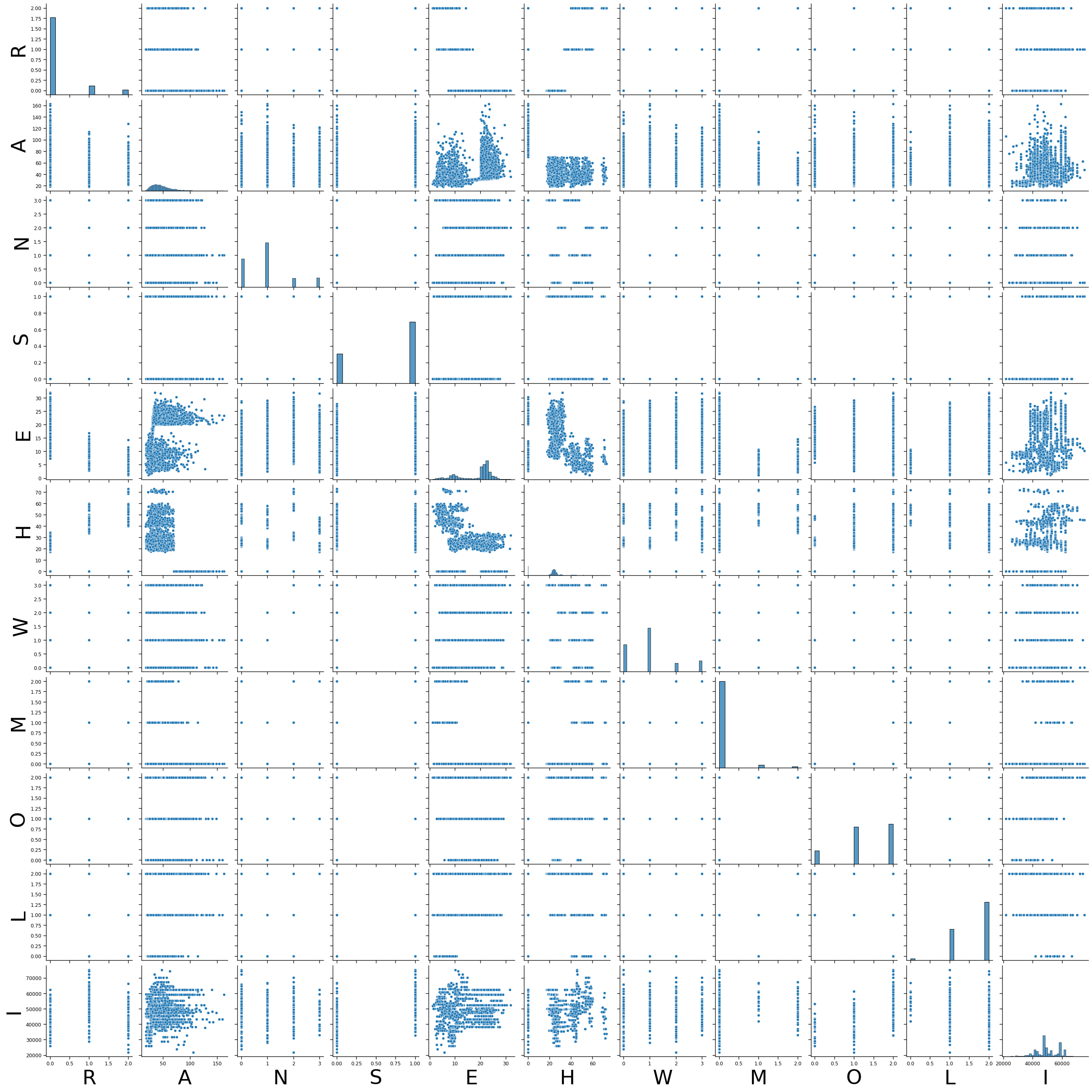}
    \caption{ Histograms and scatter plots of pairwise feature relations for the \adult\  dataset.} \label{fig:adult_plot}
  \end{figure*}
  
 \clearpage
\newpage
\subsection{Training and cross-validation}\label{apx:training_implementation}

This section details the hyperparameter cross-validation of \name, \mcvae\ \cite{karimi2020algorithmic}  and \carefl\ \cite{khemakhem2021causal}  for the experiments in Tables~\ref{tab:results} and Table~\ref{tab:interventional_apx}. Across experiments and models we generate synthetic datasets consisting of 5000 training samples, 2500 test samples and 2500 validation samples and we use a batch size of 1000. Al the models have been cross-validated using a similar computational budget, i.e. the number of combinations of hyperparameters cross-validated is the same for all the models (see Table \ref{tab:num_combinations}). Additionally, we run each configuration over 10 different random initializations. Regarding the number of iterations, we assumed a large enough number of epochs for training loss to converge, and implemented early stopping to prevent overfitting the training data. The best configuration has been chosen in terms of the best observational MMD.

\begin{table}[h]
 \centering
 \begin{tabular}{lcccc}
\toprule
 SCM &  \mcvae &   \carefl & \name  \\
 \midrule
\chain & 72  &  72 & 72 &  \\ 
\hline
\collider & 72  &  72 & 72 &  \\ 
\hline
\triangl & 72  &  72 & 72 &  \\ 
\hline
\mgraph & 72  &  72 & 72 &  \\ 
\hline
\loan & 40  &  24 & 24 &  \\ 
\bottomrule
\hfill
\end{tabular}
 \caption{Number of combinations of hyperparameters cross-validated for each of the models and datasets. Note that for each combination we run 10 different seeds.}
    \label{tab:num_combinations}
\end{table}

\paragraph{\name.} We optimize the ELBO \cite{kingma2013auto} and use the  IWAE \cite{burda2015importance}  with $K=100$ as the objective metric for the early stopping procedure. We use the Rectified Linear Unit (ReLU) as activation function. We trained with a learning rate $\eta = 0.005$ for a maximum of 500 epochs, or alternatively until the objective metric does not improve in 50 epochs. Also, we regularize the training of \name\ using a novel \emph{parents dropout:} randomly removing all incoming edges to the nodes with probability $p \in [0, 1)$. 
In our experiments, we observe that adding this regularization improves overall performance. 
We cross-validated the parents dropout rate with values $\{0.1, 0.2\}$, the number of hidden layers of the decoder with values $\{0,1, 2, 3, 4, 5\}$ (the specific values depend on the diameter of the graph) with 16 neurons each, and weather to use or not residual connections.  The best models are reported in Table~\ref{tab:VCAUSE_best}. 
We use a latent variable dimension of 4 and a Gaussian likelihood with a small variance $\sigma^2=\lambda_{KLD}/2$ with $\lambda_{KLD}=0.05$.

\begin{table*}[!htbp]
 \centering
    \begin{tabular}{lllrcc}
\toprule
 SCM & & Encoder Arch. & Decoder Arch. & Parents Dropout  & Residual \\
 \midrule
 \multirow{3}{*}{\rotatebox[origin=c]{90}{\chain}}  &  \lin&  $1 \times 16 \times 4$  &  $4 \times 16 \times 1$ & 0.1 & 1 \\ 
& \nonlin & $1 \times 16 \times 4$ &  $ 4 \times 16 \times 16 \times 1$ &  0.1 & 0  \\
& \nonadd  & $1 \times 16 \times 4$ &  $ 4  \times 64 \times 1$ &  0.1 & 1  \\
\hline
\hline
  \multirow{3}{*}{\rotatebox[origin=c]{90}{\collider}}&  \lin&  $1 \times 16 \times 4$  &  $4 \times 16  \times 16 \times 1$ & 0.2 & 1 \\ 
& \nonlin & $1 \times 16 \times 4$ &  $ 4 \times 16  \times 1$ &  0.1 & 0  \\
& \nonadd  & $1 \times 16 \times 4$ &  $ 4  \times 16  \times 16 \times 1$ &  0.1 & 1  \\
\hline
\hline
 \multirow{3}{*}{\rotatebox[origin=c]{90}{\triangl}}  &  \lin & $1 \times 16 \times 4$  &  $4 \times 16  \times 16 \times 1$ & 0.1 & 0 \\ 
& \nonlin & $1 \times 16 \times 4$ &  $ 4 \times 16  \times 16 \times 1$ &  0.2 & 0  \\
& \nonadd  & $1 \times 16 \times 4$ &  $ 4  \times 16  \times 16 \times 1$ &  0.1 & 0  \\ 
\hline
\hline
  \multirow{3}{*}{\rotatebox[origin=c]{90}{\scriptsize{\mgraph}}}  &  \lin & $1 \times 16 \times 4$  &  $4 \times 16   \times 1$ & 0.1 & 0 \\ 
& \nonlin & $1 \times 16 \times 4$ &  $ 4 \times 16  \times 16  \times 1$ &  0.1 & 0  \\
& \nonadd  & $1 \times 16 \times 4$ &  $ 4  \times 16  \times 16 \times 1$ &  0.1 & 0  \\ 
\hline
\hline
\loan  & - & $1 \times 16  \times 16  \times 4$  &  $4 \times 16  \times 16  \times 16 \times 1$ & 0.1
2 & 0 \\ 
\hline
\hline
\adult  & - & $1 \times 16  \times 16 \times 4$  &  $4  \times 8   \times 8  \times 8  \times 8 \times 1$ & 0.2 & 1 \\ 
\bottomrule
\hfill
\end{tabular}
 \caption{Hyperparemeter selection for our \name\ training for the SCMs on the synthetic datasets \triangl, \collider, \chain\ and \mgraph\ and on the semi-synthetic dataset \loan. Note, the encoder architecture refers to the layers in function $\fm$, while the decoder architecture refers to the different GNN layers. }
    \label{tab:VCAUSE_best}
\end{table*}

\begin{table*}[!htbp]
 \centering
    \begin{tabular}{llclrc}
\toprule
 SCM & & $\lambda_{\text{KLD}}$  & Encoder Arch. & Decoder Arch. & Dropout rate \\
 \midrule
 \multirow{3}{*}{\rotatebox[origin=c]{90}{\chain}}  &  \lin & 0.05 & $1 \times 16 \times 4$  &  $4 \times 128 \times 1$ & 0.0\\ 
& \nonlin & 0.05 &  $1 \times 16 \times 1$ &  $1 \times 128  \times 1$ &  0.0 \\
& \nonadd & 0.05 &  $1 \times 16 \times 4 $ & $ 4 \times 128 \times 1$&   0.0\\
\hline
\hline
  \multirow{3}{*}{\rotatebox[origin=c]{90}{\collider}}  &  \lin & 0.05 & $1 \times 16 \times 4 $  &  $ 4 \times 1$ & 0.0 \\ 
& \nonlin & 0.05 & $1 \times 16 \times 4 $ & $ 4 \times 1$&   0.0 \\
& \nonadd & 0.05 &  $1 \times 16 \times 4  $ & $ 4 \times 1$&   0.0\\
\hline
\hline
 \multirow{3}{*}{\rotatebox[origin=c]{90}{\triangl}}  &  \lin & 0.05 & $1 \times 16 \times 4 $  &  $ 4 \times 128 \times 1$ & 0.2 \\ 
& \nonlin & 0.05 & $1 \times 16 \times 1 $ & $ 1 \times 128 \times 1 $&   0.1\\
& \nonadd & 0.05 &  $1 \times 16 \times 4 $ & $ 4 \times 128 \times 1$&   0.2\\
\hline
\hline
  \multirow{3}{*}{\rotatebox[origin=c]{90}{\scriptsize{\mgraph}}}  &  \lin & 0.05  & $1 \times 16 \times 4 $  &  $ 4 \times 1$ & 0.2 \\ 
& \nonlin & 0.05 & $1 \times 16 \times 4 $ & $ 4\times 1 $&   0.1\\
& \nonadd & 0.05 &  $1 \times 16 \times 4 $ & $ 4 \times 32 \times 32 \times 1$&   0.2\\
\hline
\hline
\loan  & - & 0.05  & $1  \times 16 \times 16 \times 1 $  &  $1 \times 64 \times 64  \times 1$ & 0.0 \\ 
\hline
\hline
\adult  & - & 0.05  & $1  \times 16 \times 16 \times 2 $  &  $2 \times 16  \times 1$ & 0.0 \\ 
\bottomrule
\hfill
\end{tabular}
  \caption{Hyperparemeter selection for \mcvae\ \cite{karimi2020algorithmic} training for the SCMs on the synthetic datasets with three nodes (i.e., \triangl, \collider\ and \chain), \mgraph\ and for the semi-synthetic dataset \loan.}
    \label{tab:hyper_mcvae_loan}
\end{table*}

\paragraph{\mcvae.} \citet{karimi2020algorithmic} propose to train a conditional variational autoencoder (CVAE) for each endogenous variable that is not a root node in the causal graph, we refer to as \mcvae. Different from \cite{karimi2020algorithmic}, our implementation also models non-root nodes as CVAEs, since our goal is to model the joint distribution, while \cite{karimi2020algorithmic} target interventional and counterfactual distributions for algorithmic recourse only. Additionally, we perform the necessary modifications for training on normalized data. 

We cross-validated the number and size of hidden layers of the decoder, the dropout rate, and the dimension of the latent space. The best models (according to the observational MMD) are reported in Table \ref{tab:hyper_mcvae_loan}. As with \name,  we assume $\lambda_{KLD} = 0.05$ for all SCMs and CVAEs.

\paragraph{\carefl.}  \citet{khemakhem2021causal} propose \carefl, an autoregressive causal flows model for causal discovery, which also allows to answer interventional and counterfactual queries. The authors rely on real-valued non-volume preserving (real NVP) transformations,
since they mainly focus on the multivariate bi-variate case. As this flow architecture is not suited for general graphs, we use their framework with Neural Spline Autoregressive Flows \cite{durkan2019neural}.  We have cross validated the number of flows  $\{2, 3, 4, 5, 6\}$ and the number of hidden units of the neural networks $\{5, 10, 16,  32, 64, 96\}$.  The final configuration--i.e., the configuration with the lowest observational MMD--is displayed in Table~\ref{tab:CAREFL_best}.

\begin{table}
 \centering
\small{
 \begin{tabular}{lccc}
\toprule
 SCM & &  Flows & Hidden Units  \\
 \midrule
 \multirow{3}{*}{\rotatebox[origin=c]{90}{\chain}}  &  \lin&  4 & 5 \\ 
& \nonlin& 3 &  64\\
& \nonadd& 5 &  5 \\
\hline
\hline
 \multirow{3}{*}{\rotatebox[origin=c]{90}{\collider}}  &  \lin&  2 & 96 \\ 
& \nonlin& 2 & 64 \\
& \nonadd& 3 &  16 \\
\hline
\hline
 \multirow{3}{*}{\rotatebox[origin=c]{90}{\triangl}}  &  \lin&  4 & 64 \\ 
& \nonlin& 4 & 5\\
& \nonadd& 3 &  96 \\
\hline
\hline
\multirow{3}{*}{\rotatebox[origin=c]{90}{\scriptsize{\mgraph}}}  &  \lin &  4 & 64\\ 
& \nonlin& 4 & 96\\
& \nonadd& 5 &  96 \\
\hline
\hline
\loan  &  - &  4 & 64 \\ 
\hline
\hline
\adult  &  - &  2 & 96 \\ 
\bottomrule
\hfill
\end{tabular}
}
  \caption{Hyperparemeter selection for \carefl\ \cite{khemakhem2021causal} training for different SCMs.}
    \label{tab:CAREFL_best}
\end{table}


\subsection{Performance metrics} \label{apx:training_metrics}

In the following we describe the metrics used to evaluate the performance of \name\ in Section~\ref{sec:Experiments}.
In all experiments we use (semi-)synthetic datasets with access to samples from the ground truth distribution $\{\x_i\}^{n}_{i=0} \sim P$ as well as from the estimated distribution $\{\xh_i\}^{n}_{i=0} \sim Q$.

For the interventional and counterfactual distribution, we perform a set of interventions \mbox{$ \Ical=\{do(\X_{\Ical_j} =  \alpha_j )\}_{j}$}, where $\Ical_j \in [d]$ and $\alpha_j \in \{-1.0, -0.5, 0.0, 0.5, 1.0\} \times \sigma_{\Ical_j}$ with $\sigma_{\Ical_j}$ as {the empirical standard deviation of the intervened variable $\X_{\Ical_j}$ prior to intervention (i.e., in the observational distribution)}. Note that we only intervene on one variable at a time.  For each intervention in $\Ical$, we are interested in the estimated distribution of variables causally affected by the intervention $\{ \X_i | i \in des(\Ical_j)\}$, i.e., the set of descendants of the variable intervened. Note that $des(\Ical_j)$ refers to the set of indexes of the descendants. It follows that we do not intervene on leaf nodes.

\paragraph{Mean Maximum Discrepancy (MMD).} 
The Mean Maximum Discrepancy (MMD) \cite{gretton2012kernel} is a kernel-based distance-measure between two distributions $P$ and $Q$ on the basis of samples from both distributions.  
The smaller the MMD, the more likely it is that the sets of samples are drawn from the same distributions, i.e. the better distributions match. 

Without access to underlying distribution, we can compute an unbiased empirical squared MMD estimate %
using a kernel function $k$ as\cite{gretton2012kernel}:
\begin{align}
     &{\widehat{\mathrm{MMD}}}^2(\X, \hat{\X}) =  \frac{1}{n(n-1)} \left( \sum_{i=1}^{n} \sum_{j=1}^{n} k\left(\x_{i},\x_{j}\right) + \sum_{i=1}^{n} \sum_{j=1}^{n} k\left(\xh_{i}, \xh_{j}\right) -2  \sum_{i=1}^{n} \sum_{j=1}^{n} k\left(\x_{i}, \xh_{j}\right) \right).
\end{align}
In our implementation we use as kernel a mixture of RBF (Gaussian) kernels with different bandwidths and sample size $n=1000$.

\paragraph{Estimation squared error for the mean (\mde).}
For the interventional distribution, we compute the estimation squared error for the mean (\mde) as the average (across interventions) of the squared difference between the {empirical}  means of the true and estimated interventional distributions (for the descendants of the intervened variables):

\begin{align*}
    \operatorname{\mde} & =  \frac{1}{|\Ical|} \sum_{\Ical_j \in \Ical}  \frac{1}{|des(\Ical_j)|} \sum_{i \in des(\Ical_j) }  \left( E[X_i^{\Ical_j}] - E[\Xh_i^{\Ical_j}] \right)^2
\end{align*}

\paragraph{Estimation squared error for the standard deviation (\sdde).} 
For the interventional distribution, we compute the estimation squared error for the standard deviation (\sdde) as the average (across interventions) of the squared difference between the {empirical}  standard deviation of the true $\tilde{\sigma}(X_i^{\Ical_j}) $ and estimated  $\tilde{\sigma}(\Xh_i^{\Ical_j})$ interventional distributions (for the descendants of the intervened variables):

\begin{align*}
  \operatorname{\sdde} & =  \frac{1}{|\Ical|} \sum_{\Ical_j \in \Ical}  \frac{1}{|des(\Ical_j)|} \sum_{i \in des(\Ical_j) }  \left(  \tilde{\sigma}(X_i^{\Ical_j}) - \tilde{\sigma}(\Xh_i^{\Ical_j}) \right)^2  
\end{align*}

\paragraph{Mean squared error (\mre).} 
For the counterfactual distribution, we compute the mean squared error (\mre) as the average (across interventions) of the pairwise squared difference between true and estimated counterfactual values for  the descendants of the intervened variable. 
More in detail, let us define the random variable $T^{\Ical_j}$ as the \textit{Frobenius norm} of the difference between true $\xCF_{ des(\Ical_j)}$ and estimated $\xCFh_{ des(\Ical_j)}$ counterfactual values for the descendants of the intervened variable, i.e., 
\begin{align}
T^{\Ical_j} = ||\xCF_{ des(\Ical_j)}  - \xCFh_{des(\Ical_j)} ||^2_2,
\end{align}
{Thus, we can compute the counterfactual  \mre\ as:}
\begin{align}
\operatorname{\mre} = \frac{1}{|\Ical|} \sum_{\Ical_j \in \Ical}  \frac{1}{|des(\Ical_j)| } E \left[ T^{\Ical_j} \right]
\end{align}

\vspace{1cm}
\paragraph{Standard deviation of the squared error (\sdre).} 
Similarly, we can compute the average (across interventions) of the standard deviation of the counterfactual squared error as: 

\begin{align}
    \operatorname{\sdre} =  \frac{1}{|\Ical|} \sum_{\Ical_j \in \Ical}  \frac{1}{|des(\Ical_j)|} \sum_{i \in des(\Ical) }   \tilde{\sigma}\left( T^{\Ical_j} \right),
\end{align}
where $\tilde{\sigma}\left( T^{\Ical_j} \right)$ denotes the empirical standard deviation of $T^{\Ical_j} $.

\subsection{Additional results}\label{apx:additional_results}
\begin{table*}[]
    \centering
\small{
\setlength\tabcolsep{3pt}
    \centering
    \begin{tabular}{cc  l r rrr   rr r}
\toprule
 &  &    & Obs.  &  \multicolumn{3}{c}{Interventional}  &  \multicolumn{2}{c}{Counterfactuals}  & \\
  \cmidrule(r){4-4}   \cmidrule(r){5-7}  \cmidrule(r){8-9}  
      \multicolumn{2}{c}{SCM} & Model   &MMD &   MMD  &  \mde\ &    \sdde\  & \mre\ &    \sdre\  & Num. params \\
\cmidrule(r){1-10} 
 \multirow{9}{*}{\rotatebox[origin=c]{90}{ {\collider}}}  &  \multirow{3}{*}{\rotatebox[origin=c]{90}{\lin}} & MultiCVAE &   30.37$\pm$8.16 &   44.70$\pm$12.25 &   13.29$\pm$4.78 &   46.56$\pm$2.40 &   87.41$\pm$3.64 &  65.15$\pm$2.83 &    553  \\
       &              & CAREFL &    9.27$\pm$1.49 &     4.86$\pm$0.45 &    0.35$\pm$0.08 &   81.89$\pm$1.78 &    8.11$\pm$0.58 &   7.83$\pm$0.55 &   6420  \\
       &              & VACA &    1.50$\pm$0.67 &     1.57$\pm$0.41 &    0.75$\pm$0.31 &   41.99$\pm$0.30 &    9.86$\pm$0.74 &   7.06$\pm$0.38 &   5600  \\
      \cline{2-10}
         &  \multirow{3}{*}{\rotatebox[origin=c]{90}{\nonlin}} & MultiCVAE &    28.03$\pm$9.12 &   41.60$\pm$12.62 &   10.49$\pm$4.12 &   46.48$\pm$2.43 &   82.32$\pm$2.61 &  62.05$\pm$1.87 &    553  \\
       &              & CAREFL &   10.38$\pm$2.00 &     4.69$\pm$0.38 &    0.19$\pm$0.07 &   80.68$\pm$2.08 &    6.93$\pm$0.40 &   7.15$\pm$0.64 &   4308  \\
       &              & VACA &    0.95$\pm$0.27 &     0.97$\pm$0.23 &    0.26$\pm$0.12 &   42.20$\pm$0.24 &    5.01$\pm$0.73 &   4.08$\pm$0.54 &   1805  \\
      \cline{2-10}
         & \multirow{3}{*}{\rotatebox[origin=c]{90}{\nonadd}} & MultiCVAE &  29.72$\pm$8.90 &  117.67$\pm$46.20 &  26.23$\pm$11.09 &   39.75$\pm$1.34 &  73.93$\pm$10.44 &  51.44$\pm$6.06 &    553  \\
       &              & CAREFL &    9.79$\pm$1.29 &     9.13$\pm$1.45 &    1.58$\pm$0.94 &  102.13$\pm$3.16 &    9.92$\pm$0.51 &  32.66$\pm$0.86 &   1710  \\
       &              & VACA &    1.71$\pm$0.48 &    29.87$\pm$1.94 &   10.71$\pm$1.05 &   50.25$\pm$0.87 &   31.99$\pm$0.84 &  39.10$\pm$0.67 &   5600  \\
\hline\hline
 \multirow{9}{*}{\rotatebox[origin=c]{90}{ {\triangl}}} 
&  \multirow{3}{*}{\rotatebox[origin=c]{90}{\lin}} & MultiCVAE &   33.12$\pm$3.89 &  157.50$\pm$15.08 &   12.10$\pm$2.03 &   44.66$\pm$2.57 &   65.95$\pm$3.05 &  42.39$\pm$1.75 &   3243 \\
      &              & CAREFL &   13.50$\pm$1.83 &     8.88$\pm$0.54 &    0.62$\pm$0.25 &   80.11$\pm$2.44 &    7.05$\pm$1.09 &   8.91$\pm$1.43 &   8616 \\
      &              & VACA &    3.64$\pm$1.64 &    13.82$\pm$1.87 &    2.90$\pm$0.49 &   25.82$\pm$0.33 &   23.52$\pm$1.83 &  16.02$\pm$1.32 &   5334 \\
      \cline{2-10}
& \multirow{3}{*}{\rotatebox[origin=c]{90}{\nonlin}} & MultiCVAE &   46.65$\pm$9.05 &  218.23$\pm$30.38 &   17.50$\pm$4.85 &   27.55$\pm$1.98 &   52.34$\pm$4.50 &  32.45$\pm$2.22 &   1785  \\
       &              & CAREFL &   13.55$\pm$2.25 &    19.01$\pm$1.05 &     1.61$\pm$0.44 &  103.03$\pm$2.26 &       9.38$\pm$4.21 &   10.38$\pm$4.20 &    828  \\
       &              & VACA &    6.04$\pm$1.87 &     9.53$\pm$3.60 &       1.44$\pm$0.59 &     17.93$\pm$0.19 &   13.24$\pm$1.93 &     8.26$\pm$1.29 &   5334  \\
      \cline{2-10}
         & \multirow{3}{*}{\rotatebox[origin=c]{90}{\nonadd}} & MultiCVAE &   16.99$\pm$7.22 &  133.83$\pm$16.96 &    2.59$\pm$3.38 &  18.25$\pm$0.83 &   42.43$\pm$0.87 &   20.55$\pm$1.15 &   3243  \\
       &              & CAREFL & 13.58$\pm$1.69 &   74.51$\pm$14.18 &   1.31$\pm$0.84 &  148.03$\pm$6.96 &   11.62$\pm$1.50 &  53.71$\pm$0.98 &   9630  \\
       &              & VACA &   1.72$\pm$0.67 &     30.10$\pm$4.41 &  0.19$\pm$0.14 &  26.64$\pm$1.30 &   22.50$\pm$2.61 &   41.16$\pm$3.91 &   5334  \\
\hline\hline
 \multirow{9}{*}{\rotatebox[origin=c]{90}{ {\mgraph}}}  &  \multirow{3}{*}{\rotatebox[origin=c]{90}{\lin}} & MultiCVAE &  38.60$\pm$4.06 &   80.89$\pm$20.73 &   25.82$\pm$6.26 &   17.73$\pm$3.39 &   54.72$\pm$4.71 &  27.47$\pm$2.28 &    933  \\
       &              & CAREFL &   19.55$\pm$3.48 &    15.38$\pm$0.75 &    0.77$\pm$0.23 &   68.32$\pm$1.35 &    6.94$\pm$0.23 &   9.97$\pm$0.15 &  18200  \\
       &              & VACA &    1.76$\pm$0.34 &     4.60$\pm$0.81 &    1.86$\pm$0.23 &   11.79$\pm$0.11 &   12.83$\pm$0.69 &   9.10$\pm$0.51 &   3249  \\
      \cline{2-10}
         &  \multirow{3}{*}{\rotatebox[origin=c]{90}{\nonlin}} & MultiCVAE &    37.95$\pm$6.21 &   84.79$\pm$18.15 &   24.97$\pm$9.10 &   16.66$\pm$3.28 &   51.30$\pm$5.24 &  25.90$\pm$1.23 &    933  \\
       &              & CAREFL &   20.72$\pm$2.78 &    18.63$\pm$1.36 &    2.57$\pm$0.69 &   74.41$\pm$1.02 &    9.43$\pm$0.34 &  24.94$\pm$0.31 &  27160  \\
       &              & VACA &    1.95$\pm$0.40 &     6.78$\pm$0.95 &    2.27$\pm$0.41 &   12.55$\pm$0.43 &   16.26$\pm$0.95 &  17.65$\pm$1.12 &   8001  \\
      \cline{2-10}
         & \multirow{3}{*}{\rotatebox[origin=c]{90}{\nonadd}} & MultiCVAE &   19.97$\pm$5.03 &   80.25$\pm$14.12 &    0.92$\pm$0.60 &   32.94$\pm$1.77 &   38.28$\pm$1.00 &  32.61$\pm$1.11 &   7277  \\
       &              & CAREFL &   19.36$\pm$1.59 &    18.37$\pm$0.78 &    0.28$\pm$0.09 &   79.71$\pm$2.30 &   32.66$\pm$0.31 &  49.78$\pm$0.19 &  33950  \\
       &              & VACA &    3.06$\pm$0.84 &    17.80$\pm$1.78 &    0.09$\pm$0.05 &   21.22$\pm$0.56 &   41.06$\pm$0.31 &  48.39$\pm$0.35 &   8001  \\
\hline\hline
 \multirow{9}{*}{\rotatebox[origin=c]{90}{ {\chain}}}  &  \multirow{3}{*}{\rotatebox[origin=c]{90}{\lin}} & MultiCVAE &  29.38$\pm$2.96 &  131.24$\pm$12.04 &    6.75$\pm$1.66 &   38.96$\pm$2.17 &   59.15$\pm$1.61 &  41.37$\pm$1.07 &   3099  \\
       &              & CAREFL &   11.70$\pm$1.46 &    12.00$\pm$1.23 &    0.98$\pm$0.38 &   81.15$\pm$2.55 &    9.90$\pm$1.42 &  11.72$\pm$1.80 &    828  \\
       &              & VACA &    4.47$\pm$0.72 &     5.73$\pm$1.11 &    3.60$\pm$0.62 &   22.91$\pm$0.23 &   23.09$\pm$1.18 &  15.77$\pm$0.77 &   5648  \\
      \cline{2-10}
         &  \multirow{3}{*}{\rotatebox[origin=c]{90}{\nonlin}} & MultiCVAE &  30.33$\pm$4.53 &  159.00$\pm$11.81 &    8.68$\pm$4.51 &   43.49$\pm$2.91 &   60.34$\pm$1.87 &  40.88$\pm$1.08 &   1641  \\
       &              & CAREFL &   12.35$\pm$2.03 &    11.08$\pm$1.02 &    1.04$\pm$0.43 &   79.06$\pm$2.16 &   10.56$\pm$0.74 &  12.05$\pm$1.17 &   6462  \\
       &              & VACA &    4.66$\pm$1.06 &     6.17$\pm$1.24 &    3.69$\pm$0.46 &   22.41$\pm$0.45 &   23.17$\pm$2.04 &  15.69$\pm$1.25 &   4445  \\
      \cline{2-10}
         & \multirow{3}{*}{\rotatebox[origin=c]{90}{\nonadd}} & MultiCVAE &  29.65$\pm$2.89 &  132.74$\pm$11.58 &    7.10$\pm$1.75 &   40.59$\pm$2.23 &   59.21$\pm$1.66 &  41.26$\pm$1.08 &   3099  \\
       &              & CAREFL &   11.24$\pm$1.41 &    10.86$\pm$1.04 &    1.15$\pm$0.32 &   80.26$\pm$2.52 &   10.18$\pm$2.11 &  12.92$\pm$2.86 &   1035  \\
       &              & VACA &    4.67$\pm$0.91 &     6.09$\pm$1.16 &    3.98$\pm$0.72 &   23.14$\pm$0.20 &   23.58$\pm$1.33 &  15.96$\pm$0.81 &   5648  \\
\hline\hline
 \multirow{3}{*}{\rotatebox[origin=c]{90}{\loan}} &   & MultiCVAE &  90.38$\pm$11.31 &   213.65$\pm$5.38 &   12.24$\pm$1.33 &   65.78$\pm$1.13 &   40.98$\pm$0.35 &  15.12$\pm$0.16 &  33717 \\
       &         -     & CAREFL &   22.10$\pm$1.64 &    27.38$\pm$4.07 &    6.74$\pm$4.25 &   50.13$\pm$2.47 &   11.15$\pm$2.57 &   6.59$\pm$0.38 &  2880 \\
       &              & VACA &    2.22$\pm$0.25 &     6.87$\pm$0.66 &    4.35$\pm$0.35 &    3.83$\pm$0.08 &   10.30$\pm$0.40 &   6.41$\pm$0.11 &  30402 \\
\hline\hline
 \multirow{3}{*}{\rotatebox[origin=c]{90}{\adult}} &   & MultiCVAE &   140.15$\pm$6.37 &  155.52$\pm$5.93 &  12.18$\pm$2.36 &  63.52$\pm$4.05 &  39.96$\pm$0.36 &  16.37$\pm$0.65 &    6549 \\
      &        & CAREFL &   31.31$\pm$1.58 &   34.31$\pm$5.77 &  12.54$\pm$3.17 &  41.26$\pm$3.44 &   1.23$\pm$0.17 &   3.55$\pm$0.90 &  127420 \\
      &        & VACA &    4.51$\pm$0.45 &   12.68$\pm$1.95 &   1.65$\pm$0.23 &   3.37$\pm$0.09 &   5.33$\pm$0.27 &   5.67$\pm$0.20 &   63432 \\
\bottomrule

\end{tabular}
\caption{Performance of different methods at estimating the observational, interventional and counterfactual of different SCMs.
    {All metrics are multiplied by 100.}}\label{tab:interventional_apx}
}
\vspace{-10pt}
\end{table*}

 \begin{figure*}
    \centering
    \begin{subfigure}{.19\textwidth}
    \centering
    \includegraphics[width=.9\linewidth]{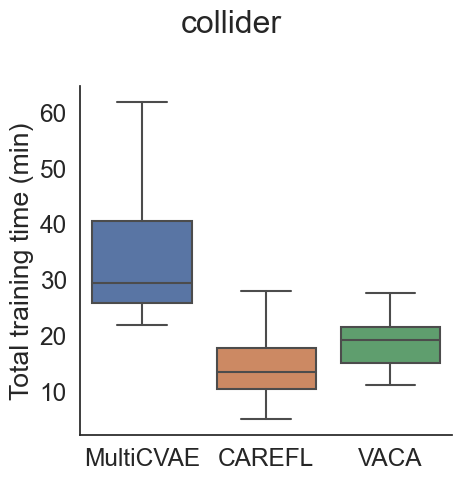}
    \end{subfigure}%
    \begin{subfigure}{.19\textwidth}
    \centering
    \includegraphics[width=.9\linewidth]{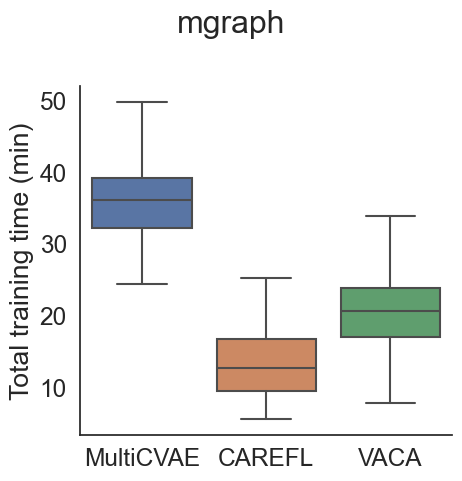}
    \end{subfigure}%
    \begin{subfigure}{.19\textwidth}
    \centering
    \includegraphics[width=.9\linewidth]{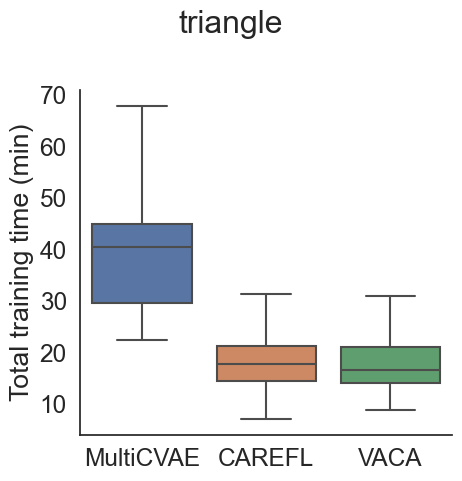}
    \end{subfigure}
    \begin{subfigure}{.19\textwidth}
    \centering
    \includegraphics[width=.9\linewidth]{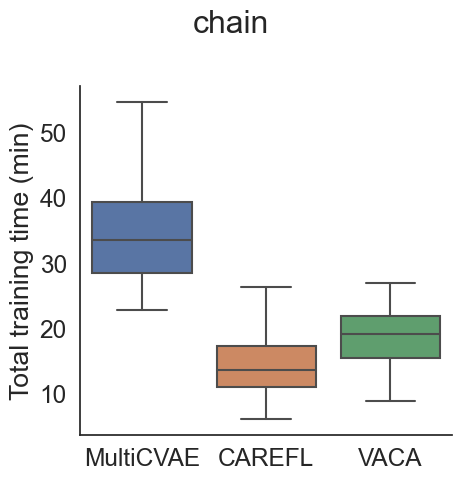}
    \end{subfigure}
    \begin{subfigure}{.19\textwidth}
    \centering
    \includegraphics[width=.9\linewidth]{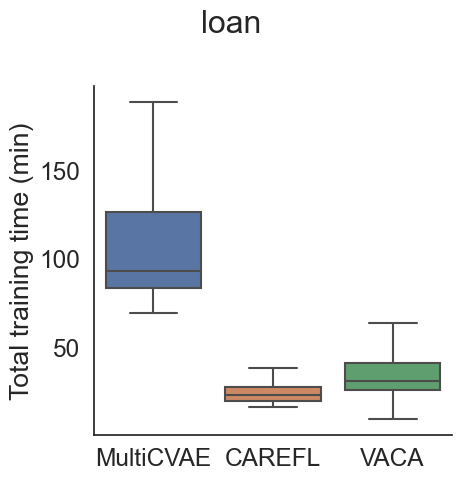}
    \end{subfigure}
    \caption{Training time (in minutes) for \mcvae, \carefl\ and \name\ for the different graphs.}
    \label{fig:time_analysis}
\end{figure*}

 \begin{figure*}
    \centering
    \begin{subfigure}{.19\textwidth}
    \centering
    \includegraphics[width=.9\linewidth]{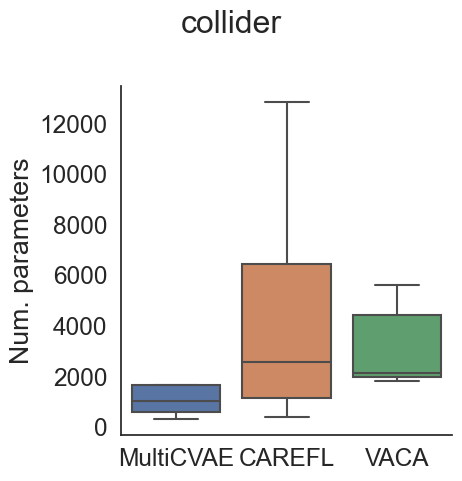}
    \end{subfigure}%
    \begin{subfigure}{.19\textwidth}
    \centering
    \includegraphics[width=.9\linewidth]{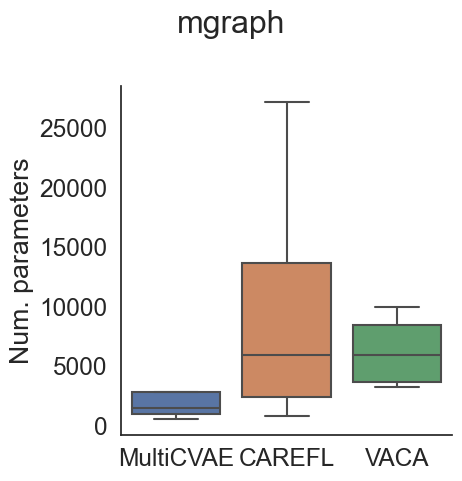}
    \end{subfigure}%
    \begin{subfigure}{.19\textwidth}
    \centering
    \includegraphics[width=.9\linewidth]{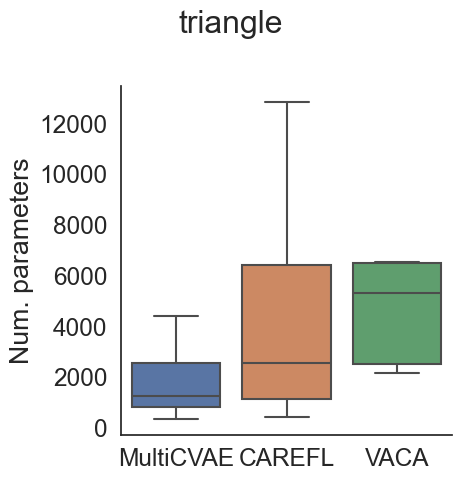}
    \end{subfigure}
    \begin{subfigure}{.19\textwidth}
    \centering
    \includegraphics[width=.9\linewidth]{images/num_params_per_model_triangle.png}
    \end{subfigure}
    \begin{subfigure}{.19\textwidth}
    \centering
    \includegraphics[width=.9\linewidth]{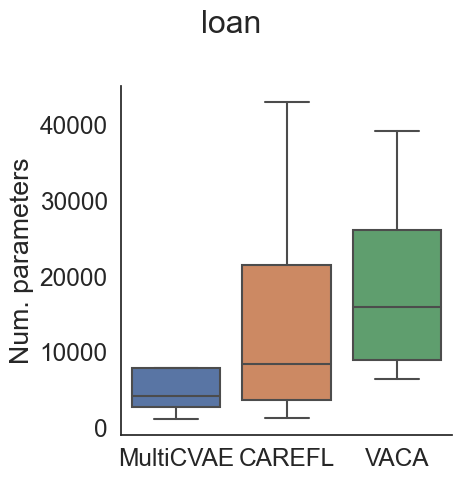}
    \end{subfigure}
    \caption{Number of parameters of \mcvae, \carefl\ and \name\ for the different graphs. }
    \label{fig:params_analysis}
\end{figure*}

In the following we present additional results that empirically show the potential of \name\ to model interventional and counterfactual queries. In particular, we report the results for the \triangl, \mgraph, and \chain\ graphs. We remark that the following results are consistent with the ones reported in the main manuscript for the \collider, \loan\ and \adult.

\paragraph{Results for interventional distributions.} Table \ref{tab:interventional_apx} (middle columns) reports the MMD, \mde, and \sdde\ for the interventional distribution. 
In accordance with the results shown in the main manuscript, we can observe that i) \name\ consistently outperforms other methods  in terms of MMD; ii) the three methods provide  comparable results in capturing the mean of the interventional distribution (\mde); and iii) CAREFL and \mcvae\ often fail to capture the standard deviation of the interventional distribution (\sdde),  while \name\ provides a more accurate estimate of the overall interventional distribution (as can be easily seen in the MMD).

\paragraph{Results for the counterfactuals.}  Table \ref{tab:interventional_apx} also reports the results for the counterfactual distribution, in terms of \mre\ and \sdre.
As reported in the main text, we observe that \carefl\ provides more accurate estimates  than \name\ and \mcvae\ in terms of \mre, which may be explained by the fact that CAREFL performs exact inference as opposed to the approximated inference of the other two approaches. 
However, \carefl\ presents high variance in its results (see \sdre).  
In contrast, \name\ leads to regularly lower values of \sdre, which suggests more consistent counterfactual estimations across factual samples and interventions.

\subsection{Complexity analysis}\label{apx:time_analysis}

In this section we compare the amount of time that it takes \name\ and the two baselines  to converge during training.
Figure~\ref{fig:time_analysis} shows the time (in minutes) it takes to train the models for the configurations of hyperparameters cross-validated (see Section \ref{apx:training_implementation}). Note we also show the results averaged over 10  different initializations. 
We can extract three main points from Figure~\ref{fig:time_analysis}. Firstly, we observe that most of the experiments take less than 60 minutes to converge, only some experiments with the \loan\ graph take longer. Secondly,  \mcvae\ takes longer to converge than \carefl\ and \name\ on average. This can be explained by the fact that nodes of the graph are trained sequentially. Thirdly, we observe that \carefl\ and \name\ take similar amount of time to converge on average.
We remark that experiments were run in a computer cluster, whose performance depends on its congestion, i.e. the number of people using the cluster and the amount of experiments. Thus, it is possible that part of the variance in the times is due to different congestion situations in the cluster.

Figure \ref{fig:params_analysis} shows a box-plot of the number of parameters of the configuration of hyperparameters cross-validated for the models and the datasets under study. Firstly, we observe that the configurations chosen for \mcvae\ contain less number of parameters than the  configurations of the other two methods. 
However, increasing the number of parameters leads to an increase in training time.  \mcvae\ already has the longest training time compared to the other methods, so the comparison is unfair in terms of time complexity.
Regarding \carefl\ and \name, we observe that the number of parameters cross-validated overlaps in all the cases.

\subsection{Computing infrastructure}
\label{apx:infrastructure}
All the experiments conducted in this work were executed in the same computer cluster based on the Linux OS. Each experiment was assigned to 2 CPUs (they could be assigned only to 1 CPU but the more CPUs, the faster the data-loading process is) and 8GB of memory. For information about the required software packages please refer to the official GitHub repository of \name\ \url{https://github.com/psanch21/VACA}.

 %

\section{Further details on the counterfactual fairness use-case}\label{apx:fairness}
In this section we provide further details on dataset, training, metrics and additional results for the use-case of counterfactual fairness in Section \ref{sec:fairness}.

\subsection{German Credit Dataset}
The German Credit dataset from the UCI repository \cite{GermanData} contains 20 attributes from 1000 loan applicants. We rely on the causal model   proposed by in~\cite{chiappa2019pathscounterffair} for the following subset of features as exogenous variables $\X$ (see Figure~\ref{fig:german_apx}): sensitive feature $S=\{ \textit{sex}\}$,  and non-sensitive features $C=~\{ \textit{age} \}$,  $R=\{\text{\textit{credit amount}, \textit{repayment history}\}}$ and $H = \{\textit{checking account}, \textit{savings}, \textit{housing}\}$. The causal graph in Figure~\ref{fig:german_apx} has a diameter $\diameter=1$ and longest path $\longpath=1$.
The goal of a classifier $h$ is to predict $Y=\{credit risk\}$ from $\X$. We load and pre-process the data using the $\operatorname{aif360}$ library such that the dataset contains binary outcome variable $Y$ (0-bad, 1-good) and a binary sensitive attribute $S$ (0-female, 1-male). Note that the dataset contains 700 labels $Y=1$ and 300 labels $Y=0$, i.e., it is imbalanced. It also contains 690 males $S=1$ and 310 females $Y=0$.
Note also that the causal model contains heterogeneous causal nodes ($R$ and $S$), as addressed in Section \ref{sec:heterogenous}. For example,  \cite{chiappa2019pathscounterffair} assume that the relationship between  $\textit{credit amount}$ and $\textit{repayment history}$ is unknown, or that it may be affected by hidden confounders. This leads to an undirected path between the random variables and they are grouped together in one multidimensional causal node $R$. This applies similarly to node $S$.

\begin{figure}[h]
    \centering



\begin{tikzpicture}
        \node[state, fill=gray!60] (S) at (0,0) {$S$};
        \node[state, fill=gray!60] (R) [below = 0.60 cm of S] {$R$};
        \node[state, fill=gray!60] (H) [right  = 0.60 cm of R] {$H$};
        \node[state, fill=gray!60] (C) [right  = 0.60 cm of S] {$C$};

        \path (S) edge [thick](R);
        \path (S) edge [thick](H);
        \path (C) edge [thick](R);
        \path (C) edge [thick](H);

\end{tikzpicture}
    \caption{Causal graph for variables $\X$ of the German Credit dataset~\cite{chiappa2019pathscounterffair}}
    \label{fig:german_apx}
\end{figure}
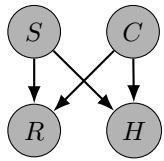

\subsection{Training}
In this section we provide further information on training \name\ on the German Credit dataset \cite{GermanData} and detail the different classifiers in Section~\ref{sec:fairness}. We use a 80\% training, 10\% validation, 10\% training data split.

\paragraph{\name.} Training for \name\ was performed on normalized data---performing normalization only on the continuous variables, i.e. r.v. $C$ and $R$ in Fig \ref{fig:german_apx}. We trained a heterogeneous \name\ as described in Section with a message passing function $\fm$ with one hidden layer of $16$ neurons, a decoder with one hidden layer of $16$ neurons and a latent variable with dimension $4$. We trained the model using the PIWAE \cite{rainforth2018tighter} approach with $\lambda_{KLD}=0.05$, specifically, the encoder with the IWAE \cite{burda2015importance} objective with $K=5$ and the decoder with a $\beta$-ELBO  with $\beta = 0.5$. We use a parents dropout rate (see Appendix~\ref{apx:training_implementation}) of $0.2$, learning rate of $0.005$ and batch size $100$.

\paragraph{Classifiers.} Logistic Regresion ($\operatorname{LR}$) and Support Vector Machine ($\operatorname{SVM}$) classifiers are taken from the scikit-learn library and trained with default parameters as well as $\operatorname{class\_weight} = \operatorname{balanced}$ due to the class imbalance of the dataset. 

\newpage
\subsection{Metrics}

In this section we detail the measures f1-score ($\operatorname{f1}$), unfairness ($\operatorname{uf}$) and accuracy ($\operatorname{acc}$) in Table \ref{tab:fairness2}.

\paragraph{f1-score.} Due to class imbalance, we measure classifier performance with the f1-score. The f1-score is the weighted average of the precision and recall and can assume values between 0 and 1; the higher the values the better. Our implementation relies on the $\operatorname{f1\_score}$ from the scikit-learn library. We compute in expectation over our training dataset: 

\begin{align}
    \operatorname{f1}=\Ex \left [2 \times \frac{\operatorname{precision} \times \operatorname{recall}} { \operatorname{precision} + \operatorname{recall}} \right].
\end{align}

where $\operatorname{precision}$ is the ratio $\frac{TP}{TP +  FP}$ with the number of true positives  $TP$ and the number of false positives $FP$ and $\operatorname{recall}$ is the ratio $\frac{TP}{TP +  FN}$ with the number of false positives $FP$.

\paragraph{Counterfactual (un)fairness.} We measure counterfactual unfairness  \cite{kusner2017counterfactualfairness} with counterfactual instances $\xCF$ and classifier prediction $h(\xCF)=\yhCF$ as expectation over our training dataset:

\begin{align}
       &\operatorname{uf} = \Ex \left [ \left| p(\yF)=1|\xF) -  p(\yhCF=1 | do(S = a'), \xF)\right| \right]
\end{align}

where $a' = 1-a$ as $S \in \{0,1\}$.

\paragraph{Accuracy.} In Table \ref{tab:fairness2}, we report additionally the prediction accuracy as performance measure of classifier $h$ with respect to factuals (samples) $(\xF, \yF)$ and prediction $h(\xF)=\yhF$ in expectation over our training dataset: 

\begin{align}
    \operatorname{acc}=\Ex \left [ \mathds{1}\left(\yF_{i}=\yhF_{i}\right) \right ].
\end{align}

Our implementation relies the $\operatorname{accuracy\_score}$ from the scikit-learn library.

\begin{table}
\centering
    \begin{tabular}{ll|rrr|r}
\toprule
        Metr. & 
        CLF & 
        full & unaware & fair-x & fair-z\\
    \midrule
         \multirow{1}{*}{$\uparrow$ \textit{acc} } 
    & LR & 61.00 & 62.00 &  43.00 & 63.60  $\pm$ 4.74 \\ 
    & SVM  & 66.00  & 64.00 & 51.00 & 64.20 $\pm$ 4.52\\
    \midrule
     \multirow{1}{*}{$\uparrow$ \acc } 
    & LR & 67.23 & 68.33 & 47.71 
    & 70.89  $\pm$ 4.11 \\ 
    & SVM 
    & 71.67  & 69.49 & 59.50
    & 70.79 $\pm$ 5.15\\
\hline
    \multirow{1}{*}{$\downarrow$ \uf} 
    & LR & 18.30 \footnotesize{$\pm$ 3.22} & 17.65 $\pm$ 3.20 & 0.16 $\pm$ 0.02 
    & 0.44 $\pm$ 0.24\\ 
    & SVM 
    & 14.01 $\pm$ 2.26 & 13.27 $\pm$ 2.28 & 0.14 $\pm$ 0.02 
    &    0.51 $\pm$ 0.19\\
\bottomrule
\end{tabular}
 \caption{Counterfactual unfairness (\textit{uf}), accuracy (\textit{acc}) and f1-score (\textit{f1}) of a LR and SVM classifier over 10 \name\ seeds. Values multiplied by 100.}\label{tab:fairness2}
\end{table}

\end{document}